\documentclass{article}



    \usepackage[preprint]{}



\usepackage{float}
\usepackage[utf8]{inputenc} 
\usepackage[T1]{fontenc}    
\usepackage{hyperref}       
\usepackage{url}            
\usepackage{booktabs}       
\usepackage{amsfonts}       
\usepackage{nicefrac}       
\usepackage{microtype}      
\usepackage{xcolor}         

\usepackage{microtype}
\usepackage{multirow}
\usepackage{multicol}
\usepackage{graphicx}
\usepackage{subfigure}
\usepackage{booktabs} 
\usepackage[linesnumbered,ruled,vlined]{algorithm2e}
\usepackage{lipsum}

\usepackage{amsmath}
\usepackage{amssymb}
\usepackage{mathtools}
\usepackage{amsthm}

 \usepackage{tabularray}
\theoremstyle{plain}
\newtheorem{theorem}{Theorem}[section]
\newtheorem{proposition}[theorem]{Proposition}

\theoremstyle{definition}

\theoremstyle{remark}

\newcommand{\cA}{{\mathcal{A}}}

\newcommand{\cS}{{\mathcal{S}}}
\newcommand{\cM}{{\mathcal{M}}}

\newcommand{\cL}{{\mathcal{L}}}

\newcommand{\cT}{{\mathcal{T}}}

\newcommand{\cZ}{{\mathcal{Z}}}

\newcommand{\bs}{\textbf{s}}

\newcommand{\bc}{\textbf{c}}

\newcommand{\ba}{\textbf{a}}

\newcommand{\bo}{\textbf{o}}

\newcommand{\bpi}{\pmb{\pi}}
\newcommand{\bnu}{\pmb{\nu}}

\newtheorem*{proposition*}{Proposition}

\newcommand{\bbE}{\mathbb{E}}

\newcommand{\tot}{\text{tot}}

\newcommand{\showinstance}[4][0.25]{
\begin{minipage}{#1\textwidth}
    \centering
    \includegraphics[width=1.0\textwidth,trim={#2 0 0 0},clip]{#3.pdf}
    \ifx #4\empty \else \parbox{\linewidth}{\centering #4} \fi
\end{minipage}
\hspace{-3mm}
}

\newcommand{\showlegend}[6][0.25]{
\begin{minipage}{#1\textwidth}
    \centering
    \includegraphics[width=1.0\textwidth,trim={#2cm #3cm #4cm #5cm},clip]{#6.pdf}
\end{minipage}
}

\title{MisoDICE: Multi-Agent Imitation from Unlabeled Mixed-Quality Demonstrations}

%

\author{The Viet Bui \\
Singapore Management University, Singapore\\
\texttt{theviet.bui.2023@phdcs.smu.edu.sg}
\And
Tien Mai \\
Singapore Management University, Singapore\\
\texttt{atmai@smu.edu.sg}
\And
Thanh Hong Nguyen\\
University of Oregon Eugene, Oregon, United States\\ 
\texttt{thanhhng@cs.uoregon.edu}
}

\begin{document}

\maketitle
\addtocontents{toc}{\protect\setcounter{tocdepth}{0}}

\begin{abstract}
We study offline imitation learning (IL) in cooperative multi-agent settings, where demonstrations have \emph{unlabeled mixed quality} — containing both expert and suboptimal trajectories. 
Our proposed solution is structured in two stages: trajectory labeling and multi-agent imitation learning, designed jointly to enable effective learning from heterogeneous, unlabeled data.
In the first stage, we combine advances in large language models and preference-based reinforcement learning to construct a progressive labeling pipeline that distinguishes expert-quality trajectories.
In the second stage, we introduce MisoDICE, a novel multi-agent IL algorithm that leverages these labels to learn robust policies while addressing the computational complexity of large joint state-action spaces.
By extending the popular single-agent DICE framework to multi-agent settings with a new value decomposition and mixing architecture, our method yields a convex policy optimization objective and ensures consistency between global and local policies.
We evaluate MisoDICE on multiple standard multi-agent RL benchmarks and demonstrate superior performance, especially when expert data is scarce. 

\end{abstract}

\section{Introduction}
Imitation Learning (IL) is a key subfield of Reinforcement Learning (RL) that focuses on learning effective policies solely from expert demonstration data, without requiring direct access to reward functions~\cite{zare2024survey}. A broad range of IL algorithms have been developed — primarily for single-agent settings — ranging from simple supervised learning methods like Behavior Cloning (BC) \cite{pomerleau1991efficient, ross2010efficient} to more advanced occupancy-matching techniques~\cite{fu2017learning,garg2021iq,ho2016generative}. While these methods typically assume access to high-quality expert demonstrations, recent research has extended single-agent IL to more challenging scenarios involving limited or suboptimal data~\cite{beliaev2025inverse,cao2024limited,kim2021demodice,zhao2025beyond}.

Motivated by the success of single-agent IL, recent studies have sought to extend IL to multi-agent settings characterized by complex inter-agent dynamics. For example, leading multi-agent IL algorithms have adapted well-known single-agent approaches \cite{fu2017learning,garg2021iq,ho2016generative} by incorporating game-theoretic concepts such as Nash equilibrium for competitive scenarios~\cite{song2018multi,yu2019multi}, or by carefully decomposing joint policies into local policies in cooperative tasks~\cite{maiinverse}. A key limitation of these methods is their reliance on large volumes of expert data to learn effective policies. In practice, acquiring such comprehensive datasets in complex multi-agent environments is highly impractical, due to the combinatorial explosion of joint state-action spaces as the number of agents increases.

Our work tackles a new highly challenging problem of offline multi-agent IL using \textit{only unlabeled demonstrations of mixed quality} (i.e., containing both expert and sub-optimal trajectories). To the best of our knowledge, this is the first effort to address such a practical yet underexplored setting in multi-agent IL. Our proposed solution is structured into two key stages: data labeling and multi-agent IL, that work together to enable effective learning from heterogeneous unlabeled data.

The first stage, named \emph{multi-step data labeling}, combines recent advances in LLMs and preference-based RL to build a progressive trajectory labeling pipeline. We begin by leveraging LLM-generated feedback to produce initial preference annotations between pairs of trajectories from the unlabeled dataset. While LLMs can offer valuable insights, their judgments are error-prone in the complex, high-dimensional dynamics of multi-agent environments.
To refine this coarse feedback, we introduce a second step using O-MAPL, a leading preference-based MARL algorithm~\cite{bui2025mapl}. O-MAPL trains a soft Q-function based on the initial LLM preferences, capturing a more grounded notion of demonstration quality. 
We then re-purpose this Q-function to extract reward signals and re-rank trajectories, enabling more accurate separation of expert and suboptimal data.  
This ranking lays the foundation for more effective and data-efficient multi-agent IL in our subsequent learning stage.

The second stage, the algorithmic core of our proposed framework, focuses on designing a novel multi-agent IL algorithm, MisoDICE, capable of effectively leveraging the labeled demonstrations (expert vs. non-expert) to learn robust policies. MisoDICE follows an occupancy-matching principle, minimizing the divergence between the learned and behavior policies within a mixed-quality dataset. 
To manage the exponential complexity in multi-agent environments, we build on the established Centralized Training with Decentralized Execution (CTDE) paradigm. However, MisoDICE significantly differs from prior CTDE-based learning methods by introducing customized learning objectives and constraints that are suited to the challenges of mixed-quality demonstrations. Furthermore, MisoDICE features a unique value decomposition strategy and is supported by a theoretical analysis establishing local-global optimality consistency. Specifically, we show that under our value factorization strategy, the global learning objective is convex, which ensures both stability and efficiency during training. Moreover, we prove that once the value function is learned, each agent’s local policy can be optimized independently and in parallel, while still guaranteeing convergence to the globally optimal joint policy. We also derive a closed-form mathematical relationship between the local policies and the value function, offering deeper insight into the structure and characteristics of the learned solution. 



Finally, we conduct extensive experiments with comprehensive ablation studies on challenging multi-agent environments, i.e., SMACv1~\cite{samvelyan2019starcraft}, and SMACv2~\cite{ellis2023smacv2}. The results show that our MisoDICE outperforms all baselines, highlighting the benefit of its integrated approach, combining occupancy matching, value decomposition, and effective use of the multi-step labeling process.  
\section{Related Work}
\noindent\textbf{Imitation Learning (IL).}
The well-known behavioral cloning (BC) frames IL as a supervised learning problem by maximizing the likelihood of expert state-action pairs under the learned policy~\cite{pomerleau1991efficient, ross2010efficient}. However, BC disregards environment dynamics, leading to compounding errors due to distributional shift. Recent methods recover the expert's reward function (explicitly or implicitly) so that the learned policy can account for such dynamics~\cite{finn2016connection,fu2017learning, ho2016generative, kostrikov2019imitation,reddy2019sqil}. Notable examples include distribution-matching-based GAIL and AIRL algorithms~\cite{fu2017learning,ho2016generative}, alternatively optimizing policy and reward functions via an adversarial learning process, which requires access to on-policy data.   
In contrast, ValueDice~\cite{kostrikov2019imitation} introduces an off-policy divergence objective to align state-action occupancies, enabling fully offline training. IQ-Learn~\cite{garg2021iq} bypasses reward recovery by directly learning the Q-function, leading to more stable policy learning than adversarial methods.
Beyond these, other research addresses practical challenges in single-agent environments such as limited~\cite{zhao2025beyond} or imperfect demonstrations~\cite{beliaev2025inverse,cao2024limited, kim2021demodice}, and constrained IL~\cite{schlaginhaufen2023identifiability}. Additional directions include model-based IL~\cite{baram2016model,zeng2023demonstrations,zhang2023discriminator} and the use of diffusion models to improve policy learning from demonstrations~\cite{lai2024diffusion,wang2024diffail}.
While many works focus on single-agent settings, only few studies have explored IL in multi-agent environments, whether cooperative~\cite{barrett2017making,bogert2014multi, le2017coordinated,maiinverse,vsovsic2016inverse} or competitive~\cite{lin2014multi, reddy2012inverse, song2018multi, yu2019multi}, primarily due to the complex nature of multi-agent interactions. 
Some adapt single-agent methods like GAIL and AIRL to multi-agent settings using game-theoretic concepts~\cite{song2018multi, yu2019multi}, but they inherit instability issues from adversarial training.
A recent algorithm, MIFQ~\cite{maiinverse}, extends IQ-Learn~\cite{garg2021iq} to the multi-agent setting under the CTDE paradigm, which achieves outstanding performance in multi-agent IL scenarios.
Unlike existing multi-agent IL methods that rely on high-quality expert data, we address a more challenging setting with only unlabeled demonstrations of various quality are available. 
 
\noindent\textbf{Preference-based Reinforcement Learning (PbRL).} PbRL is an emerging subfield of RL that can train policies only using preference data, typically in the form of pairwise trajectory comparisons. Most existing PbRL methods follow a two-phase approach: first, a reward function is learned via supervised learning techniques (e.g., using the Bradley-Terry model), and then RL is applied to optimize the policy accordingly~\cite{choi2024listwise,christiano2017deep,gao2024hindsight,hejna2023few,lee2021pebble,ibarz2018reward,kim2023preference,mukherjee2024optimal,zhang2023flow}. However, this separation introduces an additional learning component atop already complex RL algorithms, resulting in high variance and instability during training.
To mitigate this issue, recent works proposed end-to-end methods that learn policies directly from preferences, using contrastive learning~\cite{an2023direct,hejna2023contrastive}, information matching~\cite{kang2023beyond}, or Q-function optimization~\cite{hejna2024inverse}, resulting in greater stability and performance.
Following advances in single-agent PbRL, recent works have extended it to multi-agent settings~\cite{bui2025mapl,kang2024dpm, zhang2024multi}, with O-MAPL~\cite{bui2025mapl} achieving state-of-the-art results on challenging multi-agent benchmarks. 
Our work uses LLM-generated feedback to produce initial preference data, which is then used to train a soft Q-function via O-MAPL. 
We treat O-MAPL as a label-refinement step by extracting reward signals from its Q-function to partition the unlabeled data into expert and non-expert trajectories.

\noindent\textbf{Centralized Training with Decentralized Execution (CTDE).} CTDE is a well-known learning paradigm in MARL in which global information are leveraged for an efficient training of joint value functions~\cite{foerster2018counterfactual,lowe2017multi,rashid2020monotonic,sunehag2017value} while allowing agents to operate independently based on their local observations. There are various MARL algorithms exploring different value decomposition strategies under CTDE. For example, QMIX~\citep{rashid2020monotonic} offers  an advanced value decomposition method based on mixing and hyper-network  concepts~\citep{ha2017hypernetworks}. Others 
such as QTRAN \cite{son2019qtran} and QPLEX \cite{wang2020qplex}, introduce new factorization methods which are more expressive than QMIX. Besides these value-based methods, there are policy gradient based MARL algorithms~\citep{mai2024mimicking,yu2022surprising} which are extensions of the PPO algorithm~\cite{schulman2017proximal}.
Building on the success of CTDE in MARL, this paradigm has been extended to multi-agent IL~\cite{fu2017learning, ho2016generative,maiinverse}, preference-based RL~\cite{bui2025mapl, kang2024dpm, zhang2024multi}, and offline MARL~\cite{nguyen2024comadice, pan2022plan, shao2024counterfactual, wang2024offline_OMIGA,yang2021believe}. Our multi-agent IL algorithm adopts the DICE approach to learn effective policies from mixed-quality data within the CTDE framework. 
Despite operating under the same CTDE framework, existing methods — including ours — differ notably in learning objectives, constraints, value decomposition strategies, and theoretical guarantees, each tailored to their specific problem settings.

\section{Preliminaries}
\label{sec:background}

Our work focuses on cooperative MARL, modeled as a multi-agent Partially Observable Markov Decision Process (POMDP):
$
\mathcal{M} = \langle \mathcal{S}, \mathcal{A}, P, r, \mathcal{Z}, \mathcal{O}, n, \mathcal{N}, \gamma \rangle
$
where $n$ is the number of agents and $\mathcal{N} = \{1, \ldots, n\}$ is the agent set. The environment has the global state space $\mathcal{S}$, and the joint action space is $\mathcal{A} = \prod_{i \in \mathcal{N}} \mathcal{A}_i$, where $\mathcal{A}_i$ is the set of actions for agent $i$.
At each time step, each agent $i$ selects an action $a_i \in \mathcal{A}_i$, forming a joint action $\ba = (a_1, a_2, \ldots, a_n) \in \mathcal{A}$. The transition dynamics are governed by $\cT(\bs' \mid \bs, \ba) : \mathcal{S} \times \mathcal{A} \times \mathcal{S} \to [0, 1]$, which is the probability of transitioning to global state $\bs'$ from state $\bs$ under action $\ba$. Finally, $\gamma \in [0, 1)$ is the discounting factor. We also define the \textit{occupancy measure} of a joint policy $\bpi_{\text{tot}}$ as 
$
\rho^{\bpi_{\text{tot}}}(\bs, \ba) = (1 - \gamma) \sum\nolimits_{t=0}^{\infty}\gamma^t P(\bs_t = \bs, \ba_t = \ba|\bpi_{\text{tot}}),
$
which is the discounted visitation distribution over state-action pairs when following $\bpi_{\text{tot}}$. 

In a partially observable setting, each agent receives a local observation \( o_i \in \mathcal{O}_i \) via an observation function \( \mathcal{Z}_i(\bs): \mathcal{S} \to \mathcal{O}_i \), and the joint observation is \( \bo = (o_1, o_2, \ldots, o_n) \). In practice, the global state \( \bs \) is not directly accessible and is instead represented by the joint observation \( \bo = \cZ(\bs) \). For notational simplicity, we use \( \bs \) in our formulation, but it refers to \( \bo \) in implementation. 

In {cooperative MARL}, all agents share a global reward function $r(\bs, \ba) : \mathcal{S} \times \mathcal{A} \to \mathbb{R}$. The objective is to learn a joint policy $\bpi_{\text{tot}} = \{\pi_1, \ldots, \pi_n\}$ that maximizes the expected discounted return:
\[
\mathbb{E}_{\{\bs_t, \ba_t\}_t \sim \bpi_{\text{tot}}} \left[ \sum\nolimits_{t=0}^{\infty} \gamma^t r(\bs_t, \ba_t) \right].
\]
In {offline MARL}, learning is performed solely based on a fixed offline dataset $\mathcal{D}$, collected from a behavior policy $\mu_{\text{tot}} = \{\mu_1, \ldots, \mu_n\}$. In {offline multi-agent IL}, the reward function is not available. Instead, a set of expert demonstrations is provided, and the goal is to recover both the global and local expert policies accordingly. A common objective is to minimize the divergence between the stationary distribution induced by the learned policy and that of the expert demonstrations:
\begin{equation}\label{eq:KL}
    \bpi^*_{tot} = \text{argmin}_{\bpi_{\text{tot}}} \, D_{\text{KL}} \left( \rho^{\bpi}_{\text{tot}} \, \| \, \rho^{E}_{\text{tot}} \right),
\end{equation}
where $\rho^{\bpi}_{\text{tot}}$ denotes the global state-action visitation distribution under the learned policy $\bpi_{\text{tot}}$, and $\rho^{E}_{\text{tot}}$ denotes the corresponding distribution derived from expert demonstrations.

The {DICE framework} has been widely adopted to address training objectives involving KL divergence between stationary distributions (e.g., Eq.~\eqref{eq:KL}). The core idea is to formulate IL as a constrained optimization program over occupancy measures. 
In a later section, we extend this approach by integrating DICE with value decomposition for efficient offline multi-agent IL.

\section{Unlabeled Demonstrations: Multi-Step Labeling Procedure via LLMs}
\label{sec.labeling}
Addressing multi-agent IL from unlabeled demonstrations of  mixed quality requires a robust procedure to identify high-quality, expert-like behaviors within the dataset. We propose a multi-step data labeling pipeline designed to progressively refine trajectory quality assessments and partition the unlabeled dataset ($\mathcal{D}_{unlabeled}$) into expert ($\mathcal{D}^{E}$) and non-expert segments ($\mathcal{D}^{Mix}$). This process serves as the first crucial stage of our MisoDICE framework for subsequent efficient IL.

The initial step in this procedure involves generating preference data. A set of trajectory pairs ($N_p$) is randomly sampled from $\mathcal{D}_{unlabeled}$. An LLM (e.g., GPT-4o) is then prompted to provide preference labels for these pairs, based on high-level semantic features and game context, such as unit health, positions, and actions in SMAC environments.
This yields an initial preference dataset $\mathcal{P}$. 
While LLMs offer valuable insights, their judgments in complex multi-agent settings can be error-prone. 

The next step is preference-based reward recovery. The preference dataset $\mathcal{P}$ is used to train an offline preference-based MARL algorithm, i.e., the leading O-MAPL~\cite{bui2025mapl}. O-MAPL learns a soft Q-function (and corresponding value function V) directly from these preferences, capturing a more grounded notion of demonstration quality compared to raw LLM feedback. After training O-MAPL, an underlying reward function $R(s, a)$ is recovered for all state-action pairs in $\mathcal{D}_{unlabeled}$. This recovery utilizes the relationship $R \approx Q - \gamma V$, adapted from MaxEnt RL principles.

Using the recovered rewards, we compute the total return $G(\sigma) = \sum_{(s,a)\in\sigma} R(s,a)$ for each trajectory $\sigma$ in $\mathcal{D}_{unlabeled}$ and rank them accordingly. The top-$K_{expert}$ trajectories are selected to form the expert dataset $\mathcal{D}^{E}$ and the remaining trajectories form the suboptimal dataset $\mathcal{D}^{\text{Mix}}$. These datasets are used in the subsequent multi-agent IL phase (Phase 2 of MisoDICE).

This multi-step labeling process enables MisoDICE to leverage unlabeled, mixed-quality demonstrations by inferring demonstration quality through preference learning and using it to guide imitation learning. The use of LLMs for initial preference generation, followed by refinement with O-MAPL, offers a powerful mechanism for identifying expert behavior without prior annotations.

\section{Multi-Agent Imitation Learning from Mixed-Quality Demonstrations}
 \subsection{Distribution Matching Based Formulation}\label{ssec.distributionmatching}
Our goal is to leverage two datasets (as a result of our Phase 1): (1) a limited \emph{expert dataset}, $\mathcal{D}^E$; and (2) a large \emph{suboptimal dataset}, $\mathcal{D}^{\text{Mix}}$ to learn global and local policies that closely imitate the expert behavior.
 We first formulate the DICE-based learning objective as a function of the global joint policy $\bpi_{\text{tot}}$ for multi-agent IL in a similar fashion as prior work~\cite{kim2021demodice} for single-agent IL:
\begin{equation}\label{eq.obj}
    \min\nolimits_{\bpi_{\text{tot}}} \; D_{\text{KL}} \left( \rho^{\bpi}_{\text{tot}} \,\|\, \rho^{E}_{\text{tot}} \right) + \alpha \, D_{\text{KL}} \left( \rho^{\bpi}_{\text{tot}} \,\|\, \rho^{U}_{\text{tot}} \right),
\end{equation}
where $\rho^{E}_{\text{tot}}$ and $\rho^{U}_{\text{tot}}$ denote the global state-action stationary distributions induced by the expert dataset $\mathcal{D}^E$ and the union dataset $\mathcal{D}^U = \mathcal{D}^E \cup \mathcal{D}^{\text{Mix}}$, respectively. The hyperparameter $\alpha \geq 0$ controls the influence of the suboptimal data in the learning objective. 


We can reformulate \eqref{eq.obj} as a constrained optimization problem over the occupancy measure $\rho^{\bpi}_{\text{tot}}$:
\begin{align}
    \min\nolimits_{\{\rho^{\bpi}_{\text{tot}}(\bs, \ba) \geq 0\}} \quad & \sum\nolimits_{\bs, \ba} \rho^{\bpi}_{\text{tot}}(\bs, \ba) \log \frac{\rho^{\bpi}_{\text{tot}}(\bs, \ba)}{\rho^{E}_{\text{tot}}(\bs, \ba)} 
    + \alpha \, \rho^{\bpi}_{\text{tot}}(\bs, \ba) \log \frac{\rho^{\bpi}_{\text{tot}}(\bs, \ba)}{\rho^{U}_{\text{tot}}(\bs, \ba)} \label{prob:main-obj} \\
    \text{s.t.} \quad & \sum\nolimits_{\ba} \rho^{\bpi}_{\text{tot}}(\bs, \ba) = (1 - \gamma) P_0(\bs) + \gamma \sum\nolimits_{\bs', \ba'} \mathcal{T}(\bs \mid \bs', \ba') \rho^{\bpi}_{\text{tot}}(\bs', \ba'), \quad \forall \bs \in \mathcal{S} \label{eq:ctr-Bell-rho}
\end{align}
Constraint~\eqref{eq:ctr-Bell-rho} enforces the Bellman flow condition, ensuring that $\rho^{\bpi}_{\text{tot}}$ corresponds to a valid state-action visitation distribution. The objective in~\eqref{prob:main-obj} is convex in $\rho^{\bpi}_{\text{tot}}(\bs, \ba)$. Thus we can incorporate the flow constraints into~\eqref{prob:main-obj} using Lagrangian duality, resulting in the following saddle-point formulation:
\[
\max\nolimits_{\nu^{\text{tot}}} \; \min\nolimits_{\rho^{\bpi}_{\text{tot}} \geq 0} \left\{ \mathcal{L}(\nu^{\text{tot}}, \rho^{\bpi}_{\text{tot}}) \right\},
\]
where $\nu^{\text{tot}}$ are Lagrange multipliers corresponding to the Bellman flow constraints, and the objective 
\begin{align}
    \cL(\nu^{\text{tot}},\rho^{\bpi}_{\tot}) &=\sum\nolimits_{\bs, \ba} \rho^{\bpi}_{\text{tot}}(\bs, \ba) \log \frac{\rho^{\bpi}_{\text{tot}}(\bs, \ba)}{\rho^{E}_{\text{tot}}(\bs, \ba)} 
    + \alpha \, \rho^{\bpi}_{\text{tot}}(\bs, \ba) \log \frac{\rho^{\bpi}_{\text{tot}}(\bs, \ba)}{\rho^{U}_{\text{tot}}(\bs, \ba)}  \nonumber\\
    &+ \sum\nolimits_{\bs} \nu^{\tot}(\bs)\Big((1-\gamma)P_0(\bs)+\gamma\sum\nolimits_{\bs',\ba'}\cT(\bs|\bs',\ba') \rho^{\bpi}_{\tot}(\bs',\ba') -\sum\nolimits_{\ba}\rho^{\bpi}_{\tot}(\bs,\ba)\Big)
\end{align}
which can be simplified to a function of $(\nu^{\tot}\!\!,w^{\tot})$ as follows:
\begin{align}
    \cL(\nu^{\tot},w^{\tot}) =  (&1-\gamma)\bbE_{\bs\sim P_0}[\nu^{\tot}(\bs)]+\bbE_{(\bs,\ba)\sim \rho^U_{\tot}}[w^{\tot}(\bs,\ba)(A^{\tot}_\nu(\bs,\ba)-(1+\alpha) \log (w^{\tot}(\bs,\ba))]\nonumber\\
    \text{where }A^{\tot}_{\nu}(\bs,\ba) &=  \log\frac{\rho^E_{\tot}(\bs,\ba)}{\rho^U_{\tot}(\bs,\ba)} + \gamma \sum\nolimits_{\bs'} \cT(\bs'|\bs,\ba) \nu^{\tot}(\bs') - \nu^{\tot}(\bs) \quad\text{and}\quad
  w^{\tot}(\bs,\ba)   =\frac{\rho^{\bpi}_{\tot}(\bs,\ba)}{\rho^U_{\tot}(\bs,\ba)}\nonumber
\end{align}
Details of this derivation, adapted to the multi-agent setting from the single-agent IL in~\cite{kim2021demodice}, can be found in our appendix. Here, $\nu^{\text{tot}}(\bs)$ can be interpreted as the value function at the global state $\bs$. 
The reformulation above enables us to express the training objective as an optimization problem over $\nu^{\text{tot}}$ and $w^{\text{tot}}$ in which the objective $\mathcal{L}(\nu^{\text{tot}}, w^{\text{tot}})$ is linear in $\nu^{\text{tot}}$ and convex in $w^{\text{tot}}$.  Moreover, the inner problem $\max_{w^{\text{tot}}} \mathcal{L}(\nu^{\text{tot}}, w^{\text{tot}})$ admits a closed-form solution, allowing the original minimax objective to be reduced to the following non-adversarial minimization problem over $\nu^{\text{tot}}$:
\begin{align*}
    &\max\nolimits_{w^{\tot}} \cL(\nu^{\tot},w^{\tot}) = \cL(\nu^{\tot},w^{*}_{\nu})\text{ where }w^*_{\nu} = \exp\Big(\frac{A^{\tot}_{\nu}(\bs,\ba)}{1+\alpha}-1 \Big)
\end{align*}
 \subsection{Value Factorization}\label{ssec.factorization}
The objective $\mathcal{L}(\nu^{\text{tot}}, w^{*}_{\nu})$ is generally intractable in the multi-agent setting due to the high dimensionality of the joint state and action space. To address this, we adopt a value factorization technique in which the global $\nu^{\text{tot}}$ is decomposed into local value functions using a mixing network. Specifically,
let $\pmb{\nu}(\bs) = \{\nu_i(s_i)\}_{i \in \mathcal{N}}$ be the set of local value functions.
We introduce a mixing network $\mathcal{M}_\phi$, parameterized by $\phi$, which aggregates the local values to form the global value function:
\[
\nu^{\text{tot}}(\bs) = \mathcal{M}_\phi[\pmb{\nu}(\bs)].
\]

To implement $\mathcal{M}_\phi$, we adopt a linear structure (e.g., a single-layer network) due to: (i) it preserves convexity in the learning objective within the value function space, leading to stable and consistent training; and (ii) nonlinear mixing networks (e.g., a two-layer network with ReLU activations), particularly in offline settings with limited data, are prone to overfitting and have unstable performance~\cite{bui2024comadice}.

Under the mixing architecture described above, the training objective becomes:
\begin{align}\label{eq.valueobj}
    \mathcal{L}(\phi, \pmb{\nu}) \!=\! &(1 \!-\! \gamma)\, \mathbb{E}_{\bs \sim P_0}\left[ \mathcal{M}_\phi[\pmb{\nu}(\bs)] \right] + \mathbb{E}_{(\bs, \ba) \sim \rho^{U}_{\text{tot}}}\! \left[ w^*_{\pmb{\nu}}(\bs, \ba)\! \left( A^{\text{tot}}_{\pmb{\nu}}(\bs, \ba) \!-\! (1 \!+\! \alpha) \log w^*_{\pmb{\nu}}(\bs, \ba) \right) \right],
\end{align}
where the total advantage function $A^{\text{tot}}_{\pmb{\nu}}$ and the optimal weighting function $w^*_{\pmb{\nu}}$ are defined as:
\begin{align}\label{v.1}
    A^{\text{tot}}_{\pmb{\nu}}(\bs, \ba) &= \log \frac{\rho^E_{\text{tot}}(\bs, \ba)}{\rho^U_{\text{tot}}(\bs, \ba)} 
    + \gamma \sum\nolimits_{\bs'} \mathcal{T}(\bs' \mid \bs, \ba) \mathcal{M}_\phi[\pmb{\nu}(\bs')] 
    - \mathcal{M}_\phi[\pmb{\nu}(\bs)], \\
    w^*_{\pmb{\nu}}(\bs, \ba) &= \exp\Big( \frac{A^{\text{tot}}_{\pmb{\nu}}(\bs, \ba)}{1 + \alpha} - 1 \Big).\label{v.2}
\end{align}
\begin{proposition}\label{prop:convex}
If the mixing network $\mathcal{M}_\phi[\pmb{\nu}(\bs)]$ is a linear function of both $\pmb{\nu}(\bs)$ and $\phi$, then the training objective $\mathcal{L}(\phi, \pmb{\nu})$ is convex in both $\phi$ and $\pmb{\nu}$.
\end{proposition}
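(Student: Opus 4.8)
The plan is to first eliminate the awkward $w^*_{\pmb{\nu}}\log w^*_{\pmb{\nu}}$ coupling by exploiting the closed form of the weighting function. Writing $\log w^*_{\pmb{\nu}}(\bs,\ba) = \frac{A^{\text{tot}}_{\pmb{\nu}}(\bs,\ba)}{1+\alpha} - 1$ from \eqref{v.2}, one gets $(1+\alpha)\log w^*_{\pmb{\nu}} = A^{\text{tot}}_{\pmb{\nu}} - (1+\alpha)$, so the bracketed term in \eqref{eq.valueobj} collapses to
\[
A^{\text{tot}}_{\pmb{\nu}}(\bs,\ba) - (1+\alpha)\log w^*_{\pmb{\nu}}(\bs,\ba) = 1+\alpha .
\]
Hence the second expectation simplifies to $(1+\alpha)\,\mathbb{E}_{(\bs,\ba)\sim\rho^U_{\text{tot}}}\big[\exp(\tfrac{A^{\text{tot}}_{\pmb{\nu}}(\bs,\ba)}{1+\alpha}-1)\big]$, reducing the whole objective to a linear term plus an expectation of $\exp$ of the advantage. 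This simplification is the crux of the argument: it removes the entropy-like nonlinearity and leaves a form whose convexity is transparent.

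Next I would verify that $A^{\text{tot}}_{\pmb{\nu}}(\bs,\ba)$ is affine in the variable of interest. By \eqref{v.1} its three pieces are (i) the constant $\log(\rho^E_{\text{tot}}/\rho^U_{\text{tot}})$, independent of both $\phi$ and $\pmb{\nu}$; (ii) $\gamma\sum_{\bs'}\mathcal{T}(\bs'\mid\bs,\ba)\,\mathcal{M}_\phi[\pmb{\nu}(\bs')]$, a nonnegative-weighted sum of mixing outputs; and (iii) $-\mathcal{M}_\phi[\pmb{\nu}(\bs)]$. Since $\mathcal{M}_\phi[\pmb{\nu}]$ is linear in $\pmb{\nu}$ for fixed $\phi$ (respectively linear in $\phi$ for fixed $\pmb{\nu}$), each piece is affine in that argument, and so is $A^{\text{tot}}_{\pmb{\nu}}$.

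Finally I would assemble the claim by composition. The scalar map $t\mapsto(1+\alpha)\exp(\tfrac{t}{1+\alpha}-1)$ is convex, being $\exp$ precomposed with an affine map and scaled by $1+\alpha>0$; composing this convex function with the affine map $A^{\text{tot}}_{\pmb{\nu}}$ preserves convexity. The $\rho^U_{\text{tot}}$-expectation is then a nonnegative-weighted sum of convex functions, hence convex, and the leading term $(1-\gamma)\mathbb{E}_{\bs\sim P_0}[\mathcal{M}_\phi[\pmb{\nu}(\bs)]]$ is affine; their sum is convex. Running this argument with $\phi$ held fixed gives convexity in $\pmb{\nu}$, and with $\pmb{\nu}$ held fixed gives convexity in $\phi$.

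The main obstacle I anticipate is the gap between convexity in each argument separately and genuine joint convexity in $(\phi,\pmb{\nu})$. A single-layer mixing network couples the two multiplicatively (a term like $\phi^\top\pmb{\nu}$), which is bilinear and therefore not jointly affine, so $\exp(A^{\text{tot}}_{\pmb{\nu}}/(1+\alpha))$ need not be jointly convex. Accordingly I would state and prove the result as block (separate) convexity — convex in $\phi$ for fixed $\pmb{\nu}$ and convex in $\pmb{\nu}$ for fixed $\phi$ — which is precisely what justifies stable alternating optimization, and I would flag that true joint convexity would require the mixing to be affine in the stacked variable $(\phi,\pmb{\nu})$ rather than bilinear.
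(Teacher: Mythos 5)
Your proposal follows essentially the same route as the paper's proof: the same collapse of the bracketed term to $1+\alpha$ via the closed form of $w^*_{\pmb{\nu}}$, the same observation that $A^{\text{tot}}_{\pmb{\nu}}$ is affine under a linear mixer, and the same composition argument (convex increasing $\circ$ affine, preserved under expectation, plus an affine leading term). Your closing caveat is well taken and is in fact \emph{more} careful than the paper: with a mixer of the form $\sum_i \phi_i \nu_i(s_i) + \phi_0$, the map $(\phi,\pmb{\nu}) \mapsto \mathcal{M}_\phi[\pmb{\nu}]$ is bilinear rather than jointly affine, so the argument only delivers convexity in each block separately, whereas the paper's proof asserts joint convexity without addressing this gap; your reading of the statement as block convexity is the one the proof actually supports.
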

As a result, if the mixing network is constructed as a linear combination of local value functions, i.e.,
$
\mathcal{M}_\phi[\pmb{\nu}(\bs)] = \sum\nolimits_{i \in \mathcal{N}} \phi_i \nu_i(s_i) + \phi_0,
$
where $\phi = \{\phi_0, \phi_1, \ldots, \phi_n\}$ are learnable parameters, then the global value function is linear in both the parameters $\phi_i$ and the local value outputs $\nu_i(s_i)$.  This structural property is critical for stable and efficient optimization, as it guarantees that the learning objective preserves convexity w.r.t both the policy parameters and the mixing network parameters. 

Conversely, this objective convexity does not holds if we employ multi-layer neural network structures for the mixing network $\mathcal{M}_\phi[\pmb{\nu}]$, leading to unstable training dynamics and convergence to poor local minima, especially in offline learning settings where data is limited and overfitting is a concern. 
\begin{proposition}\label{prop:non-convex-main}
If the mixing network $\mathcal{M}_\phi[\pmb{\nu}]$ is a multi-layer feedforward network with $\pmb{\nu}$ as input (even with convex activation functions), the objective $\mathcal{L}(\phi, \pmb{\nu})$ becomes non-convex in $\pmb{\nu}$.
\end{proposition}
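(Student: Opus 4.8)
The plan is to prove non-convexity by exhibiting a single minimal instance on which the Hessian of $\mathcal{L}(\phi,\pmb{\nu})$ in $\pmb{\nu}$ fails to be positive semidefinite. First I would record the simplification that already underlies Proposition~\ref{prop:convex}: substituting the closed form $w^*_{\pmb{\nu}}=\exp(A^{\text{tot}}_{\pmb{\nu}}/(1+\alpha)-1)$ into the bracketed term of \eqref{eq.valueobj} and using $\log w^*_{\pmb{\nu}}=A^{\text{tot}}_{\pmb{\nu}}/(1+\alpha)-1$ collapses $w^*_{\pmb{\nu}}\big(A^{\text{tot}}_{\pmb{\nu}}-(1+\alpha)\log w^*_{\pmb{\nu}}\big)$ to $(1+\alpha)\,w^*_{\pmb{\nu}}$, so that
\[
\mathcal{L}(\phi,\pmb{\nu})=(1-\gamma)\,\mathbb{E}_{\bs\sim P_0}\big[\mathcal{M}_\phi[\pmb{\nu}(\bs)]\big]+(1+\alpha)\,\mathbb{E}_{(\bs,\ba)\sim\rho^U_{\text{tot}}}\Big[\exp\Big(\tfrac{A^{\text{tot}}_{\pmb{\nu}}(\bs,\ba)}{1+\alpha}-1\Big)\Big].
\]
In the linear case $A^{\text{tot}}_{\pmb{\nu}}$ is affine in $\pmb{\nu}$ and the exponential of an affine map is convex, which is exactly why Proposition~\ref{prop:convex} holds. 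The point I would stress is structural: the mixer enters $A^{\text{tot}}_{\pmb{\nu}}(\bs,\ba)$ through the term $-\mathcal{M}_\phi[\pmb{\nu}(\bs)]$ with a \emph{negative} sign, so once $\mathcal{M}_\phi$ is a genuinely nonlinear (convex) function of $\pmb{\nu}$, the argument of the exponential acquires a concave component, and $\exp(\cdot)$ of a concave function need not be convex.

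Next I would turn this observation into a concrete counterexample within the stated class. Take one agent, a single state $\bs_0$ with a self-loop ($\mathcal{T}(\bs_0\mid\bs_0,\ba_0)=1$), a single action $\ba_0$, and $\gamma\in(0,1)$; choose $\mathcal{M}_\phi[\nu]=\nu^2$, which is a two-layer feedforward network with one hidden unit and the convex activation $\sigma(x)=x^2$ (a symmetric pair of softplus units $\text{softplus}(\nu)+\text{softplus}(-\nu)$ gives the same effect if a smooth named activation is preferred). Writing $m=m(\nu)=\mathcal{M}_\phi[\nu]$, $\lambda=1-\gamma>0$, $\beta=\lambda/(1+\alpha)>0$, and $c_0=\log\big(\rho^E_{\text{tot}}(\bs_0,\ba_0)/\rho^U_{\text{tot}}(\bs_0,\ba_0)\big)$, the self-loop gives $A^{\text{tot}}_{\pmb{\nu}}=c_0-\lambda m$, and the objective reduces to the scalar function
\[
\mathcal{L}(\nu)=\lambda\, m(\nu)+C\,e^{-\beta m(\nu)},\qquad C=(1+\alpha)\,e^{-1}\,e^{c_0/(1+\alpha)}>0.
\]

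I would then differentiate, obtaining $\mathcal{L}''(\nu)=\big(\lambda-C\beta e^{-\beta m}\big)m''+C\beta^2 e^{-\beta m}(m')^2$, and evaluate at the critical point $\nu=0$ of $m$, where $m'(0)=0$ and $m''(0)=2$. This annihilates the nonnegative first-order contribution and leaves $\mathcal{L}''(0)=2(\lambda-C\beta)=2\lambda\big(1-e^{-1}e^{c_0/(1+\alpha)}\big)$. Choosing the instance so that $c_0>1+\alpha$, i.e. picking expert/union densities with $\rho^E_{\text{tot}}(\bs_0,\ba_0)/\rho^U_{\text{tot}}(\bs_0,\ba_0)>e^{1+\alpha}$ (which is unconstrained), forces $\mathcal{L}''(0)<0$, so $\mathcal{L}$ is strictly concave at $\nu=0$ and hence non-convex in $\pmb{\nu}$.

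Finally I would record the general mechanism, which also makes the result tight against Proposition~\ref{prop:convex}. In the self-loop instance $\mathcal{L}$ depends on $\pmb{\nu}$ only through $m=\mathcal{M}_\phi[\pmb{\nu}(\bs_0)]$, so $\nabla^2_{\pmb{\nu}}\mathcal{L}=\big(\lambda-C\beta e^{-\beta m}\big)\nabla^2_{\pmb{\nu}}m+C\beta^2 e^{-\beta m}\,\nabla_{\pmb{\nu}} m\,\nabla_{\pmb{\nu}} m^{\top}$; at any interior critical point of the mixer ($\nabla_{\pmb{\nu}} m=0$) the rank-one positive term vanishes and $\nabla^2_{\pmb{\nu}}\mathcal{L}=(\lambda-C\beta)\nabla^2_{\pmb{\nu}} m\prec 0$ whenever $\mathcal{M}_\phi$ is strictly convex there and $C\beta>\lambda$. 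A linear mixer escapes this because then $\nabla^2_{\pmb{\nu}} m=0$ identically, recovering the convexity of Proposition~\ref{prop:convex}. The main obstacle I anticipate is ruling out that the convex first term $(1-\gamma)\mathbb{E}_{\bs\sim P_0}[\mathcal{M}_\phi]$ (and, in richer instances, the positive transition contribution $\gamma\sum_{\bs'}\mathcal{T}\,\mathcal{M}_\phi$ inside $A^{\text{tot}}_{\pmb{\nu}}$) could silently restore convexity; the resolution is precisely to evaluate at a critical point of $\mathcal{M}_\phi$, where this competing positive curvature vanishes to first order, and to scale $c_0$ so the concave exponential curvature dominates.
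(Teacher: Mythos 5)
Your proof is correct in spirit but takes a genuinely different, and in fact stronger, route than the paper's. The paper's own argument is purely structural and negative: it observes that $-\mathcal{M}_\phi[\pmb{\nu}(\bs)]$ enters $A^{\text{tot}}_{\pmb{\nu}}$ with a minus sign, so when $\mathcal{M}_\phi$ is convex but nonlinear the advantage is a difference of convex functions, and composing with $\exp(\cdot)$ no longer falls under any convexity-preserving rule; it then concludes the objective is ``not convex in general'' without exhibiting an instance where convexity actually fails. You supply exactly what that argument omits: an explicit counterexample (quadratic one-hidden-unit mixer, self-loop dynamics) together with a Hessian computation at a critical point of the mixer, where the rank-one positive term $C\beta^2 e^{-\beta m}(m')^2$ vanishes and the sign of $\mathcal{L}''$ is controlled by $\lambda - C\beta$. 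Your closing paragraph identifying the general mechanism (and why a linear mixer escapes it, since $\nabla^2_{\pmb{\nu}}m\equiv 0$) is a clean way of making the result tight against Proposition~\ref{prop:convex}; the paper gestures at the same mechanism but never isolates it quantitatively.

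One genuine wrinkle to repair: in your minimal instance with a single state and a single action, both $\rho^E_{\text{tot}}$ and $\rho^U_{\text{tot}}$ are probability measures on a one-point set, so they are forced to equal $1$ and $c_0=\log(\rho^E_{\text{tot}}/\rho^U_{\text{tot}})=0$, which gives $\mathcal{L}''(0)=2\lambda(1-e^{-1})>0$. The claim that the ratio is ``unconstrained'' fails in that degenerate MDP. The fix is cheap — either add a second action so the expert and union occupancies can genuinely differ and concentrate $\rho^E_{\text{tot}}$ to drive $c_0$ above $1+\alpha$ at the relevant pair (checking that the averaging over $\rho^U_{\text{tot}}$ does not wash out the negative curvature), or note that in the algorithm the log-ratio is a learned discriminator output treated as an arbitrary exogenous function, so the objective is defined for any value of $c_0$ — but as written the arithmetic of the minimal example does not close.
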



 \subsection{Occupancy Ratio Estimation}
The training objective described in Section~\ref{ssec.factorization} involves the computation of the total advantage function $A^{\text{tot}}_{\pmb{\nu}}(\bs, \ba)$, which in turn requires the occupancy ratio term $\log (\rho^E_{\text{tot}}(\bs, \ba)/\rho^U_{\text{tot}}(\bs, \ba))$. However, this ratio is not directly observable from data. We now describe a practical method to estimate this ratio under the CTDE framework. Following extensions from the single-agent setting, the occupancy ratio can be estimated by solving the following discriminator-based classification problem:
\begin{equation} \label{eq:discriminator}
    \max\nolimits_{c\in (0,1)^{|\cS|\times |\cA|}} \; J(c) = \mathbb{E}_{\rho^E_{\text{tot}}}[\log c(\bs, \ba)] 
    + \mathbb{E}_{\rho^U_{\text{tot}}}[\log (1 - c(\bs, \ba))].
\end{equation}
This objective is concave in $c$ and admits a unique optimal solution:
$
c^*(\bs, \ba) = \frac{\rho^E_{\text{tot}}(\bs, \ba)}{\rho^E_{\text{tot}}(\bs, \ba) + \rho^U_{\text{tot}}(\bs, \ba)},
$
from which we can recover the occupancy ratio via:
$
\frac{\rho^E_{\text{tot}}(\bs, \ba)}{\rho^U_{\text{tot}}(\bs, \ba)} = \frac{c^*(\bs, \ba)}{1 - c^*(\bs, \ba)}.
$

In the multi-agent context, directly optimizing $c(\bs, \ba)$ is impractical due to the exponential joint state-action space. We thus leverage the CTDE framework to approximate $c(\bs, \ba)$ in a decentralized manner. We define local discriminators $\bc(\bs, \ba) = \{ c_i(s_i, a_i) \}_{i \in \mathcal{N}}$, where each $c_i$ corresponding to agent $i$. A mixing network $\mathcal{M}_\eta$, parameterized by $\eta$, is then used to aggregate these local outputs:
$
c(\bs, \ba) = \mathcal{M}_\eta[\bc(\bs, \ba)].
$
The resulting training objective for estimating the occupancy ratio becomes:
\[
J(\bc, \eta) = \mathbb{E}_{\rho^E_{\text{tot}}}\left[ \log \mathcal{M}_\eta[\bc(\bs, \ba)] \right] 
+ \mathbb{E}_{\rho^U_{\text{tot}}}\left[ \log \left(1 - \mathcal{M}_\eta[\bc(\bs, \ba)] \right) \right].
\]
It can be shown that this objective is concave in $\bc$ if the mixing network $\mathcal{M}_\eta$ is linear in its inputs, which contributes to a stable training process. We state this formally below:
\begin{proposition}\label{prop:convex-discriminator}
If the mixing network $\mathcal{M}_\eta[\bc(\bs, \ba)]$ is linear in $\bc(\bs, \ba)$, then the training objective $J(\bc, \eta)$ for occupancy ratio estimation is concave in both $\bc$ and $\eta$.
\end{proposition}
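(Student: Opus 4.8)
The plan is to reduce the claim to the elementary fact that concavity is preserved under composition with affine maps, under taking expectations, and under summation. Write $m_\eta(\bs,\ba) := \mathcal{M}_\eta[\bc(\bs,\ba)]$ for the scalar output of the mixing network on a fixed pair $(\bs,\ba)$, and recall that since each local discriminator $c_i \in (0,1)$ and the mixer is range-preserving, we have $m_\eta(\bs,\ba) \in (0,1)$, so that both $\log m_\eta$ and $\log(1 - m_\eta)$ are well defined. This mirrors exactly the standard argument that the cross-entropy (GAN-style) discriminator objective is concave, with the extra observation that a linear mixer transmits that concavity.

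First I would establish concavity in $\bc$ with $\eta$ held fixed. By hypothesis the map $\bc \mapsto \mathcal{M}_\eta[\bc(\bs,\ba)]$ is affine (linear) in the collection of local outputs, for every fixed $(\bs,\ba)$. Since $t \mapsto \log t$ is concave on $(0,\infty)$, and the composition of a concave function with an affine map is concave, $\bc \mapsto \log \mathcal{M}_\eta[\bc(\bs,\ba)]$ is concave. Likewise $t \mapsto \log(1-t)$ is concave on $(-\infty,1)$ and $\bc \mapsto 1 - \mathcal{M}_\eta[\bc(\bs,\ba)]$ is affine, so $\bc \mapsto \log(1 - \mathcal{M}_\eta[\bc(\bs,\ba)])$ is concave as well. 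Then $J(\bc,\eta)$ is an expectation of these two integrands, i.e. a nonnegatively weighted sum over $(\bs,\ba)$ under $\rho^E_{\text{tot}}$ and $\rho^U_{\text{tot}}$; since nonnegative combinations and expectations of concave functions are concave, $J(\cdot,\eta)$ is concave in $\bc$.

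Next I would treat concavity in $\eta$ with $\bc$ held fixed. For the standard linear mixer $\mathcal{M}_\eta[\bc(\bs,\ba)] = \sum_{i\in\mathcal{N}} \eta_i\, c_i(s_i,a_i) + \eta_0$, the output is also affine in the parameter vector $\eta = \{\eta_0,\ldots,\eta_n\}$ for each fixed $\bc$ and $(\bs,\ba)$. The identical composition argument — $\log$ and $\log(1-\cdot)$ composed with the affine map $\eta \mapsto \mathcal{M}_\eta[\bc(\bs,\ba)]$, followed by expectation and summation — shows $J(\bc,\cdot)$ is concave in $\eta$. As in Proposition~\ref{prop:convex}, ``concave in both'' is to be read block-wise (separately in $\bc$ and in $\eta$); the bilinear coupling $\eta_i c_i$ precludes joint concavity in $(\bc,\eta)$, but this is immaterial for the alternating optimization the statement is meant to support.

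The only genuinely delicate point — the main obstacle — is the domain condition: the affine-composition rule yields concavity only where the arguments of the logarithms lie in their valid ranges, so one must guarantee $\mathcal{M}_\eta[\bc(\bs,\ba)] \in (0,1)$ throughout. This follows from taking the local discriminators in $(0,1)$ together with a mixer that preserves the unit interval (e.g. convex-combination weights, or equivalently restricting attention to the region of $(\eta,\bc)$ where the output lies in $(0,1)$). I would state this range assumption explicitly at the outset; once it is in place, the remainder is the routine composition-and-expectation chain above.
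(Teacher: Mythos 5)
Your proof follows essentially the same route as the paper's: compose the concave functions $\log(\cdot)$ and $\log(1-\cdot)$ with the affine mixer and use that expectations preserve concavity. You are in fact slightly more careful than the paper on two points it glosses over — the requirement that $\mathcal{M}_\eta[\bc(\bs,\ba)]\in(0,1)$ so the logarithms are defined, and the observation that the bilinear coupling $\eta_i c_i$ means ``concave in both'' can only hold block-wise rather than jointly — both of which are worth making explicit.
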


 \subsection{Policy Extraction}
We now discuss how to extract the global and local policies from the output of the training objective in Section~\ref{ssec.factorization}. Once the value function $\pmb{\nu}^*$ has been optimized, we can compute the corresponding optimal occupancy ratio $w^{\text{tot}}_{\pmb{\nu}^*}$, which reflects the ratio between the visitation distributions of the optimal policy $\bpi^*$ and the mixture distribution $\rho^U_{\text{tot}}$:
\[
w^{\text{tot}}_{\pmb{\nu}^*}(\bs, \ba) = \exp\Big( \frac{A^{\text{tot}}_{\pmb{\nu}^*}(\bs, \ba)}{1 + \alpha} - 1 \Big) = \frac{\rho^{\bpi^*}_{\text{tot}}(\bs, \ba)}{\rho^U_{\text{tot}}(\bs, \ba)}.
\]
The global optimal policy can be then recovered by solving a weighted BC objective:
\begin{equation}
\label{prob:pi-tot}
\max_{\bpi_{\text{tot}} \in \Pi} \sum\nolimits_{(\bs, \ba) \sim \rho^U_{\text{tot}}} w^{\text{tot}}_{\pmb{\nu}^*}(\bs, \ba) \log \bpi_{\text{tot}}(\bs, \ba),
\end{equation}
where $\Pi$ denotes the feasible set of joint policies. In the multi-agent setting, especially under the CTDE framework, it is more practical and desirable to recover decentralized local policies $\{\pi_i\}_{i \in \mathcal{N}}$. Therefore, we adopt a local weighted behavior cloning approach to recover each agent's policy:
\begin{align}\label{eq.localBC}
    \max\nolimits_{\pi_i} \sum\nolimits_{(s_i, a_i) \sim \rho^U_{\text{tot}}} w^{\text{tot}}_{\pmb{\nu}^*}(\bs, \ba) \log \pi_i(a_i \mid s_i),
\end{align}
where all agents share the same global weight $w^{\text{tot}}_{\pmb{\nu}^*}(\bs, \ba)$. This enables each local policy to be optimized using globally-informed signals, ensuring alignment with the credit assignment in the multi-agent system. Furthermore, as shown in Proposition~\ref{prop.consistency}, the optimization of local policies via this decentralized weighted BC is consistent with the global weighted BC formulation in Equation~\eqref{prob:pi-tot}.

\begin{proposition}[Global–Local Consistency]
\label{prop.consistency}
Assume the global policy space $\Pi$ is factorizable, i.e.,
$
\Pi = \left\{ \bpi_{\text{tot}} ~\middle|~ \exists \{\pi_i\}_{i \in \mathcal{N}} \text{ such that } \bpi_{\text{tot}}(\bs, \ba) = \prod\nolimits_{i \in \mathcal{N}} \pi_i(a_i \mid s_i) \right\}.
$
Let $\{\pi_i^*\}_{i \in \mathcal{N}}$ and $\bpi^*_{\text{tot}}$ be the solutions to the local and global weighted BCs, respectively. Then for any $(\bs, \ba)$, it holds that:
\[
\bpi^*_{\text{tot}}(\bs, \ba) = \prod\nolimits_{i \in \mathcal{N}} \pi_i^*(a_i \mid s_i).
\]
\end{proposition}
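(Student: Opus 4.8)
The plan is to exploit the factorizability of $\Pi$ together with the fact that the logarithm turns a product of local policies into a sum. First I would parametrize an arbitrary feasible global policy by its local factors: since $\Pi$ is, by assumption, exactly the set of products $\prod_{i\in\mathcal{N}}\pi_i(a_i\mid s_i)$, I may write $\bpi_{\text{tot}}(\bs,\ba)=\prod_{i\in\mathcal{N}}\pi_i(a_i\mid s_i)$ and optimize over the collection $\{\pi_i\}_{i\in\mathcal{N}}$ instead of over $\bpi_{\text{tot}}$ directly. Substituting this into the global weighted-BC objective \eqref{prob:pi-tot} and applying $\log\prod_i\pi_i=\sum_i\log\pi_i$ gives
\[
\sum_{(\bs,\ba)\sim\rho^U_{\text{tot}}} w^{\text{tot}}_{\pmb{\nu}^*}(\bs,\ba)\sum_{i\in\mathcal{N}}\log\pi_i(a_i\mid s_i)
=\sum_{i\in\mathcal{N}}\Big(\sum_{(\bs,\ba)\sim\rho^U_{\text{tot}}} w^{\text{tot}}_{\pmb{\nu}^*}(\bs,\ba)\log\pi_i(a_i\mid s_i)\Big).
\]

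The crucial point I would emphasize next is that this rewriting is \emph{separable} across agents. The weight $w^{\text{tot}}_{\pmb{\nu}^*}$ is a fixed function once $\pmb{\nu}^*$ has been learned, so it introduces no coupling; consequently the $i$-th summand depends only on $\pi_i$. The feasibility constraints are separable as well: each normalization constraint $\sum_{a_i}\pi_i(a_i\mid s_i)=1$ constrains $\pi_i$ alone and is unaffected by the other factors. Hence both the objective and the feasible set decompose as a Cartesian product over $i\in\mathcal{N}$, so maximizing the global objective over $\Pi$ is equivalent to independently maximizing each summand over its own $\pi_i$. Matching each summand term-for-term against the local weighted-BC objective \eqref{eq.localBC} shows they are identical, so the independent maximizer of the $i$-th summand is precisely $\pi_i^*$.

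To conclude, I would combine these observations: the global maximizer $\bpi^*_{\text{tot}}$ lies in $\Pi$, so it equals the product of its local factors, and by the decoupling just established those factors must solve the respective per-agent problems. Therefore $\bpi^*_{\text{tot}}(\bs,\ba)=\prod_{i\in\mathcal{N}}\pi_i^*(a_i\mid s_i)$ for every $(\bs,\ba)$.

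The main obstacle is not the algebra but justifying that the factor-by-factor identification is \emph{well-defined}, which requires the per-agent maximizers to be unique. I would address this by noting that each local objective is, at every $s_i$, a weighted log-likelihood over the action simplex of the form $\sum_{a_i} W_i(s_i,a_i)\log\pi_i(a_i\mid s_i)$, where $W_i$ aggregates the fixed weights $w^{\text{tot}}_{\pmb{\nu}^*}$; this is strictly concave on the simplex whenever $W_i(s_i,\cdot)$ is not identically zero, yielding the unique normalized maximizer $\pi_i^*(a_i\mid s_i)\propto W_i(s_i,a_i)$. Degenerate states carrying zero total weight impose no constraint and can be resolved by any fixed tie-breaking convention applied identically in the local and global problems, which preserves the claimed equality.
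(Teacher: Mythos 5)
Your proof is correct and follows essentially the same route as the paper's: substitute the product factorization into the global weighted-BC objective, use $\log\prod_i\pi_i=\sum_i\log\pi_i$ to separate the objective (and the simplex constraints) across agents, and invoke local optimality of each $\pi_i^*$ to conclude that $\prod_i\pi_i^*$ maximizes the global objective. The only difference is that you additionally address uniqueness of the per-agent maximizers to make the factor-by-factor identification airtight, a point the paper's proof leaves implicit by only establishing that $\prod_i\pi_i^*$ is \emph{an} optimal solution of the global problem.
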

The local weighted BC objective also reveals an interesting connection between the optimal local policies and the local value functions obtained from optimizing the objective in Section~\ref{ssec.factorization}. Recall that the total advantage function can be written as:
$
A^{\text{tot}}_{\pmb{\nu}^*}(\bs, \ba) = r(\bs, \ba) + \gamma \mathbb{E}_{\bs'}[\nu^{\text{tot*}}(\bs')] - \nu^{\text{tot*}}(\bs),
$
where $ \nu^{\text{tot*}}(\bs)  = \cM_{\phi^*}[\bnu^*(\bs)]$ and the term 
$
r(\bs, \ba) = \log \frac{\rho^E_{\text{tot}}(\bs, \ba)}{\rho^U_{\text{tot}}(\bs, \ba)}
$
can be interpreted as a global reward function.
We can now define the global $Q$-function as follows:
$
q^{\text{tot*}}(\bs, \ba) = r(\bs, \ba) + \gamma \mathbb{E}_{\bs'}[\nu^{\text{tot*}}(\bs')]
$. 

Next,
let's assume that the optimal linear mixing network takes the form:
$
\mathcal{M}_{\phi^*}[\pmb{\nu}^*(\bs)] = \sum\nolimits_{i \in \mathcal{N}} \phi^*_i \nu^*_i(s_i) + \phi^*_0,
$
where $\phi^*$ and $\pmb{\nu}^*$ are the solutions obtained from optimizing the training objective in Eq.~\ref{eq.valueobj}; and the reward function admits a linear decomposition:
$
r(\bs, \ba) = \sum\nolimits_{i \in \mathcal{N}} \phi^*_i r_i(s_i, a_i) + \phi^*_0,
$
noting that such decompositions are often feasible or can be approximated in practice.
Substituting the decomposition into the advantage function yields:
$A^{\text{tot}}_{\pmb{\nu}^*}(\bs, \ba) = \sum\nolimits_{i \in \mathcal{N}} \phi^*_i \left( q^*_i(s_i, a_i) - \nu^*_i(s_i) \right) + \gamma \phi^*_0,$
where the local $Q$-function is defined as:
$
q^*_i(s_i, a_i) = r_i(s_i, a_i) + \gamma \mathbb{E}_{s_i' \mid s_i, a_i} [\nu^*_i(s_i')]
$.
 This reformulation allows the occupancy ratio to be written in a decomposed, local form:
\begin{align}
w^{\text{tot}}_{\pmb{\nu}^*}(\bs, \ba) 
&= e^{-1} \exp\Big( \frac{1}{1 + \alpha} \Big( \sum\nolimits_{i \in \mathcal{N}} \phi^*_i \left( q^*_i(s_i, a_i) - \nu^*_i(s_i) \right) + \gamma \phi^*_0 \Big) \Big).
\end{align}
This expression enables us to derive the following result, which shows how the optimal local policy can be explicitly expressed as a function of the local $Q$-function and value function:
\begin{proposition}[Local Policy as a Softmax over Local Functions]
\label{prop:soft-max-local-policy}
Assume that the joint behavior policy of the union dataset $\mu^U(\ba|\bs)$ is decomposable into local behavior policies $\mu^U_i(a_i \mid s_i)$. 
Let $\pi_i^*$ be the optimal solution to the local weighted BC objective. Then the optimal local policy can be expressed as:
\[
\pi^*_i(a_i \mid s_i) = \frac{1}{\Delta(s_i)} \exp\Big( \frac{\phi^*_i}{1 + \alpha} \, q^*_i(s_i, a_i) + \log \mu^U_i(a_i \mid s_i) \Big),
\]
where 
$
\Delta(s_i) = \sum\nolimits_{a_i \in \mathcal{A}_i} \exp\big( \frac{\phi^*_i}{1 + \alpha} \, q^*_i(s_i, a_i) + \log \mu^U_i(a_i \mid s_i) \big),
$
is the normalization constant, ensuring that $\pi^*_i(a_i \mid s_i)$ is a valid probability distribution over the local action space.
\end{proposition}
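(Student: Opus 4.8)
The plan is to solve the per-agent weighted behavior-cloning problem \eqref{eq.localBC} in closed form and then exploit the product structure of the occupancy ratio $w^{\tot}_{\pmb{\nu}^*}$ together with the assumed factorization of $\mu^U$. First I would rewrite the objective by taking the expectation over the joint $(\bs,\ba)\sim\rho^U_{\tot}$ and marginalizing everything except the pair $(s_i,a_i)$. For a fixed $s_i$ the objective becomes $\sum_{a_i} W_i(s_i,a_i)\log\pi_i(a_i\mid s_i)$ subject to $\sum_{a_i}\pi_i(a_i\mid s_i)=1$, where $W_i(s_i,a_i)=\sum_{s_{-i},a_{-i}} w^{\tot}_{\pmb{\nu}^*}(\bs,\ba)\,\rho^U_{\tot}(\bs,\ba)$ is the weighted marginal mass at $(s_i,a_i)$. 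This is a standard entropy-type maximization; a one-line Lagrangian (or Gibbs inequality) argument gives the optimizer $\pi^*_i(a_i\mid s_i)=W_i(s_i,a_i)/\sum_{a_i'}W_i(s_i,a_i')$, i.e. the normalized weighted marginal.

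The second step is to expand $W_i$ using the two factorizations available. From the decomposed form of the occupancy ratio I write $w^{\tot}_{\pmb{\nu}^*}(\bs,\ba)=C\prod_{j\in\cN} g_j(s_j,a_j)$ with $C=e^{-1}e^{\gamma\phi^*_0/(1+\alpha)}$ and $g_j(s_j,a_j)=\exp(\tfrac{\phi^*_j}{1+\alpha}(q^*_j(s_j,a_j)-\nu^*_j(s_j)))$; and from the decomposability assumption I use $\rho^U_{\tot}(\bs,\ba)=\rho^U_{\tot}(\bs)\prod_{j\in\cN}\mu^U_j(a_j\mid s_j)$. Substituting both into $W_i$ and pulling out the factors depending only on $(s_i,a_i)$, the sum over the other agents' actions $a_{-i}$ factorizes into $\prod_{j\ne i}\sum_{a_j} g_j(s_j,a_j)\mu^U_j(a_j\mid s_j)$, and what remains collects into a quantity $h(s_i)$ depending on $s_i$ but not on $a_i$. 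Concretely, $W_i(s_i,a_i)=C\,g_i(s_i,a_i)\,\mu^U_i(a_i\mid s_i)\,h(s_i)$.

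The final step is the cancellation. Forming the ratio $\pi^*_i(a_i\mid s_i)=W_i(s_i,a_i)/\sum_{a_i'}W_i(s_i,a_i')$, the global constant $C$ and the $a_i$-independent factor $h(s_i)$ cancel between numerator and denominator, leaving $\pi^*_i(a_i\mid s_i)=g_i(s_i,a_i)\mu^U_i(a_i\mid s_i)/\sum_{a_i'}g_i(s_i,a_i')\mu^U_i(a_i'\mid s_i)$. Inside $g_i$ the factor $\exp(-\tfrac{\phi^*_i}{1+\alpha}\nu^*_i(s_i))$ is also $a_i$-independent and cancels likewise, so only $\exp(\tfrac{\phi^*_i}{1+\alpha}q^*_i(s_i,a_i))$ survives. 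Writing $\mu^U_i(a_i\mid s_i)=\exp(\log\mu^U_i(a_i\mid s_i))$ and combining exponents yields exactly the claimed softmax with normalizer $\Delta(s_i)$.

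The main obstacle is the second step: one must verify that marginalizing the joint weighted occupancy over $(s_{-i},a_{-i})$ cleanly separates into a product of local terms. This hinges on using both factorizations simultaneously — the product form of $w^{\tot}_{\pmb{\nu}^*}$, inherited from the linear mixing network and the assumed linear reward decomposition, and the assumed product form of $\mu^U$ — since the state marginal $\rho^U_{\tot}(\bs)$ itself need not factorize. The saving grace is that any residual coupling through $s_{-i}$ is absorbed into the $a_i$-independent term $h(s_i)$, which is annihilated by the normalization; thus the argument does not in fact require $\rho^U_{\tot}(\bs)$ to factorize, only $\mu^U$.
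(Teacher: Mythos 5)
Your proof is correct and follows essentially the same route as the paper's: solve the weighted log-likelihood in closed form, substitute the factorized forms of $w^{\tot}_{\pmb{\nu}^*}$ and $\mu^U$, and cancel all $a_i$-independent factors in the normalization. In fact your treatment is slightly more careful than the paper's, which argues "for each state $\bs$" and implicitly skips the aggregation over $(s_{-i},a_{-i})$; your explicit marginal $W_i(s_i,a_i)$ and the observation that the residual coupling through $s_{-i}$ collapses into the $a_i$-independent factor $h(s_i)$ (so that $\rho^U_{\tot}(\bs)$ need not factorize) closes that gap cleanly.
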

\section{Practical Algorithm}
In implementation, we construct three types of local networks for every agent $i\in\mathcal{N}$: (i) local discriminator networks $\{c_i(s_i, a_i; \theta_c)\}$ (parameterized by $\theta_c$); (ii) local value networks $\{\nu_i(s_i;\theta_\nu)\}$ (parameterized by $\theta_\nu$); and (iii) local policy networks $\{\pi_i(s_i;\theta_\pi)\}$ (parameterized by $\theta_\pi$). In addition, there are two different mixing networks: $\mathcal{M}_\eta$ (with parameters $\eta$) and $\mathcal{M}_\nu$ (with parameters $\nu$) which are used to aggregate local discriminators and value functions, i.e., 
\begin{align}\label{eq.practical}
   &c(\bs, \ba) =\mathcal{M}_\eta[\{c_i(s_i,a_i;\theta_c)\}_{i\in\mathcal{N}}] & \nu^{\text{tot}}(\bs) = \mathcal{M}_\phi[\{\nu_i(s_i;\theta_\nu)\}_{i\in\mathcal{N}}]
\end{align}
Note that we use the same local network architectures with shared parameters $(\theta_c, \theta_\nu,\theta_\pi)$ for all agents for efficient training. The overview of our algorithm MisoDICE is illustrated in Algorithm~\ref{algo:MisoDICE}. 
\begin{algorithm}[t!]
\caption{\textbf{MisoDICE} – Multi-Agent Imitation Policy Learning}
\label{algo:MisoDICE}

Initialize local networks $(\theta_c, \theta_\nu, \theta_\pi)$, and mixing networks $(\eta, \phi)$\;
\For{\textit{a number of discriminator training steps}}{
    Train $(\theta_c,\eta)$ from Eq~\ref{eq:discriminator} with $c(\bs, \ba)$ is computed in Eq.~\ref{eq.practical}\;
}
Compute occupancy ratio: $\frac{\rho^E_{\text{tot}}(\bs, \ba)}{\rho^U_{\text{tot}}(\bs, \ba)} = \frac{c(\bs, \ba)}{1 - c(\bs, \ba)}$ based on updated $(\theta_c,\eta)$\;

\For{\textit{a number of value function training steps}}{
    Train $(\theta_\nu, \phi)$ from Eq.~\ref{eq.valueobj} using the  occupancy ratio estimated above\;
}
Compute the weight $w^{\text{tot}}_{\pmb{\nu}^*}(\bs, \ba)$ using Eq.~\ref{v.2} w.r.t updated $(\theta_\nu, \phi)$\;

\For{\textit{a number of policy training steps}}{
    Update $\theta_\pi$ based on weighted BC in Eq.~\ref{eq.localBC} and the weight $w^{\text{tot}}_{\pmb{\nu}^*}(\bs, \ba)$\;
}

\Return{ local policies $\pi_i$ for $i = 1, \dots, n$}
\end{algorithm}

\section{Experiments}


We run various experiments on challenging MARL benchmarks, including 
the StarCraft Multi-Agent Challenge version 1 (SMACv1) \cite{samvelyan2019starcraft}, and its successor, SMACv2 \cite{Ellis2022smacv2}. 
Both offer discrete action spaces and focus on decentralized micromanagement scenarios derived from StarCraft II \cite{Ellis2022smacv2,samvelyan2019starcraft}. In these environments, each combat unit is controlled by an individual agent, and the team must learn cooperative strategies to defeat built-in AI opponents. SMACv2 introduces increased stochasticity and diversity by incorporating features like randomized start positions and unit types. {In the appendix, we also include experiments on MAMuJoCo \cite{de2020deep}, another standard MARL benchmark. We note that LLMs struggle to provide meaningful preference feedback in MAMuJoCo due to a lack of contextual understanding. In this setting, we instead assume access to a rule-based oracle capable of providing preference feedback.}
 
\begin{table}[t!]							
\caption{Final average returns for MisoDICE and baseline methods on SMACv2 tasks.}			\label{tab.main.evaluation}					
\small																			
\centering																			
\resizebox{\textwidth}{!}{																			
\begin{tabular}{c@{4pt}c|c|c|c|c|c|c|c|c}										
\toprule																		
\multicolumn{2}{c|}{\multirow{2}{*}{\textbf{}}}	&	\multicolumn{3}{c|}{\textbf{BC}}	&	\multirow{2}{*}{\textbf{OMAPL}}	&	\multirow{2}{*}{\textbf{INDD}}	&	\multirow{2}{*}{\textbf{MARL-SL}}	&	\multirow{2}{*}{\textbf{VDN}}	&	\textbf{MisoDICE}	\\						
\multicolumn{2}{c|}{}	&	($\beta=0.0$)	&	($\beta=0.5$)	&	($\beta=1.0$)& & & & & (ours)	\\ \midrule		
\multicolumn{2}{c|}{2c\_vs\_64zg}	&			8.5 ± 0.1	&	9.7 ± 0.3	&	12.6 ± 0.3	&	12.2 ± 0.4	&	14.6 ± 1.0	&	14.0 ± 1.6	&	12.7 ± 0.6	&	\textbf{16.4 ± 1.3}	\\
\multicolumn{2}{c|}{5m\_vs\_6m}	&			5.0 ± 1.1	&	6.7 ± 0.0	&	6.1 ± 0.1	&	5.7 ± 0.2	&	6.7 ± 0.1	&	6.8 ± 0.1	&	6.2 ± 1.4	&	\textbf{7.3 ± 0.1}	\\
\multicolumn{2}{c|}{6h\_vs\_8z}	&			7.0 ± 0.0	&	7.4 ± 0.0	&	7.2 ± 0.1	&	6.6 ± 0.2	&	7.5 ± 0.2	&	7.8 ± 0.1	&	8.2 ± 0.2	&	\textbf{8.7 ± 0.2}	\\
\multicolumn{2}{c|}{corridor}	&			1.5 ± 0.1	&	1.5 ± 0.2	&	4.3 ± 0.7	&	2.2 ± 1.3	&	4.4 ± 1.2	&	1.8 ± 0.2	&	4.7 ± 0.6	&	\textbf{5.8 ± 0.8}	\\ \midrule
\multicolumn{1}{c}{\multirow{5}{*}{\begin{tabular}[c]{@{}c@{}} \rotatebox{90}{Protoss} \end{tabular}}}	&	5\_vs\_5	&	9.2 ± 0.1	&	11.7 ± 0.5	&	10.2 ± 0.5	&	9.6 ± 1.1	&	10.9 ± 0.1	&	11.6 ± 0.3	&	11.5 ± 0.2	&	\textbf{12.4 ± 0.5}	\\
\multicolumn{1}{c}{}	&	10\_vs\_10	&	10.3 ± 0.6	&	11.8 ± 0.5	&	10.6 ± 0.2	&	10.1 ± 0.9	&	11.0 ± 0.7	&	11.9 ± 0.4	&	12.4 ± 0.2	&	\textbf{12.9 ± 0.2}	\\
\multicolumn{1}{c}{}	&	10\_vs\_11	&	8.2 ± 0.4	&	9.6 ± 0.4	&	8.7 ± 0.3	&	8.5 ± 1.2	&	9.4 ± 0.4	&	9.9 ± 0.3	&	10.4 ± 0.1	&	\textbf{10.7 ± 0.4}	\\
\multicolumn{1}{c}{}	&	20\_vs\_20	&	10.1 ± 0.2	&	10.4 ± 0.5	&	10.5 ± 0.3	&	9.4 ± 0.4	&	11.4 ± 0.5	&	13.1 ± 0.4	&	12.1 ± 0.5	&	\textbf{13.5 ± 0.5}	\\
\multicolumn{1}{c}{}	&	20\_vs\_23	&	8.1 ± 0.2	&	8.6 ± 0.3	&	8.3 ± 0.2	&	7.9 ± 0.3	&	9.6 ± 0.3	&	9.6 ± 0.3	&	10.3 ± 0.4	&	\textbf{10.6 ± 0.2}	\\ \midrule
\multicolumn{1}{c}{\multirow{5}{*}{\begin{tabular}[c]{@{}c@{}} \rotatebox{90}{Terran} \end{tabular}}}	&	5\_vs\_5	&	6.5 ± 0.8	&	8.1 ± 0.5	&	7.1 ± 0.6	&	6.2 ± 0.6	&	7.9 ± 0.5	&	8.1 ± 0.4	&	8.3 ± 1.0	&	\textbf{9.1 ± 0.3}	\\
\multicolumn{1}{c}{}	&	10\_vs\_10	&	6.6 ± 0.3	&	7.4 ± 0.4	&	6.7 ± 0.6	&	6.9 ± 1.1	&	7.6 ± 0.4	&	7.7 ± 0.2	&	8.0 ± 0.4	&	\textbf{9.1 ± 1.3}	\\
\multicolumn{1}{c}{}	&	10\_vs\_11	&	4.7 ± 0.2	&	5.7 ± 0.3	&	5.2 ± 0.3	&	4.2 ± 0.6	&	5.7 ± 0.5	&	5.7 ± 0.4	&	6.0 ± 0.2	&	\textbf{6.4 ± 0.5}	\\
\multicolumn{1}{c}{}	&	20\_vs\_20	&	6.9 ± 0.4	&	7.9 ± 0.8	&	6.7 ± 0.2	&	6.9 ± 0.5	&	8.0 ± 0.5	&	8.6 ± 0.3	&	8.2 ± 0.5	&	\textbf{9.2 ± 0.6}	\\
\multicolumn{1}{c}{}	&	20\_vs\_23	&	4.0 ± 0.3	&	5.1 ± 0.4	&	4.3 ± 0.3	&	4.3 ± 0.4	&	5.1 ± 0.4	&	5.1 ± 0.6	&	\textbf{5.6 ± 0.3}	&	\textbf{5.6 ± 0.4}	\\ \midrule
\multicolumn{1}{c}{\multirow{5}{*}{\begin{tabular}[c]{@{}c@{}} \rotatebox{90}{Zerg} \end{tabular}}}	&	5\_vs\_5	&	5.7 ± 0.5	&	6.6 ± 0.4	&	5.9 ± 0.3	&	6.1 ± 0.5	&	6.4 ± 0.2	&	7.1 ± 0.5	&	7.1 ± 0.9	&	\textbf{7.5 ± 0.1}	\\
\multicolumn{1}{c}{}	&	10\_vs\_10	&	7.3 ± 0.1	&	8.7 ± 0.6	&	7.4 ± 0.7	&	6.8 ± 0.6	&	8.2 ± 0.2	&	9.0 ± 0.4	&	9.7 ± 0.5	&	\textbf{10.2 ± 0.6}	\\
\multicolumn{1}{c}{}	&	10\_vs\_11	&	7.3 ± 0.2	&	8.3 ± 0.4	&	7.3 ± 0.5	&	7.2 ± 0.4	&	8.0 ± 0.2	&	8.8 ± 0.4	&	9.1 ± 0.2	&	\textbf{9.4 ± 0.3}	\\
\multicolumn{1}{c}{}	&	20\_vs\_20	&	7.4 ± 0.6	&	9.0 ± 0.5	&	7.7 ± 0.2	&	6.9 ± 0.5	&	8.3 ± 0.4	&	8.8 ± 0.6	&	9.0 ± 0.5	&	\textbf{10.2 ± 0.6}	\\
\multicolumn{1}{c}{}	&	20\_vs\_23	&	7.1 ± 0.3	&	7.9 ± 0.3	&	7.0 ± 0.2	&	7.1 ± 0.4	&	8.2 ± 0.4	&	8.8 ± 0.2	&	8.7 ± 0.5	&	\textbf{9.5 ± 0.2}	\\
\bottomrule		
\end{tabular}%
}
\vspace{-10pt}
\end{table}	
\begin{figure}[t!]
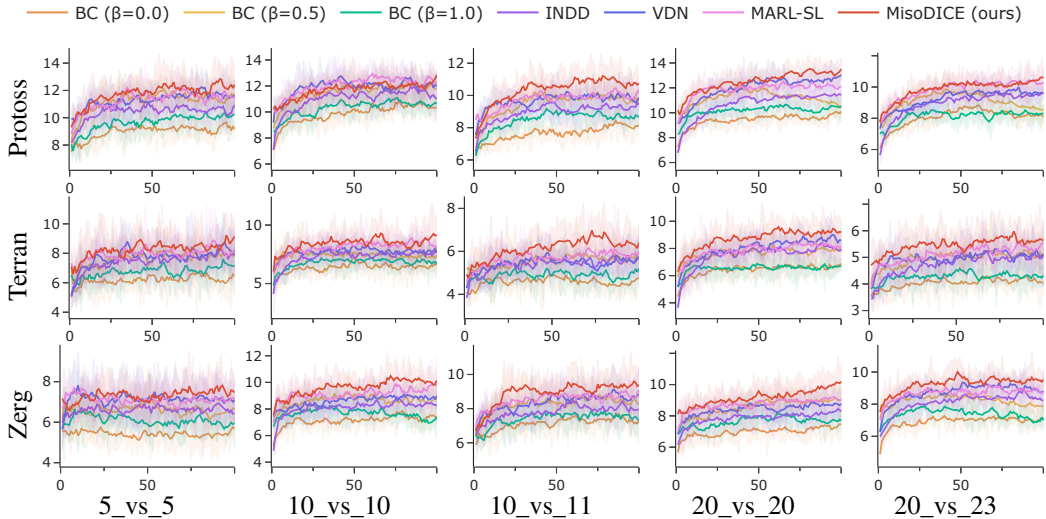

\centering
\vspace{3mm}
\showlegend[0.95]{3.7}{5.8}{2.3}{1.5}{SMACv1_Returns_LLM-based} 
\vspace{2mm}
\\
\begin{tabular}{@{}p{10pt}@{}@{}c@{}@{}c@{}@{}c@{}@{}c@{}@{}c@{}}
\begin{tabular}[c]{@{}c@{}} \rotatebox{90}{Protoss} \end{tabular} &
\showinstance[0.195]{0}{protoss_5_vs_5_llm_returns}{} &
\showinstance[0.195]{0}{protoss_10_vs_10_llm_returns}{} &
\showinstance[0.195]{0}{protoss_10_vs_11_llm_returns}{} &
\showinstance[0.195]{0}{protoss_20_vs_20_llm_returns}{} &
\showinstance[0.195]{0}{protoss_20_vs_23_llm_returns}{}
\\
\begin{tabular}[c]{@{}c@{}} \rotatebox{90}{Terran} \end{tabular} &
\showinstance[0.195]{0}{terran_5_vs_5_llm_returns}{} &
\showinstance[0.195]{0}{terran_10_vs_10_llm_returns}{} &
\showinstance[0.195]{0}{terran_10_vs_11_llm_returns}{} &
\showinstance[0.195]{0}{terran_20_vs_20_llm_returns}{} &
\showinstance[0.195]{0}{terran_20_vs_23_llm_returns}{}
\\
\begin{tabular}[c]{@{}c@{}} \rotatebox{90}{Zerg} \end{tabular} &
\showinstance[0.195]{0}{zerg_5_vs_5_llm_returns}{} &
\showinstance[0.195]{0}{zerg_10_vs_10_llm_returns}{} &
\showinstance[0.195]{0}{zerg_10_vs_11_llm_returns}{} &
\showinstance[0.195]{0}{zerg_20_vs_20_llm_returns}{} &
\showinstance[0.195]{0}{zerg_20_vs_23_llm_returns}{}
\\
& 5\_vs\_5 & 10\_vs\_10 & 10\_vs\_11 & 20\_vs\_20 & 20\_vs\_23
\end{tabular}
\caption{Learning curves of the average return for MisoDICE and baseline methods on SMACv2.}
\label{fig.main.evaluation}
\end{figure}
\begin{figure}[t!]
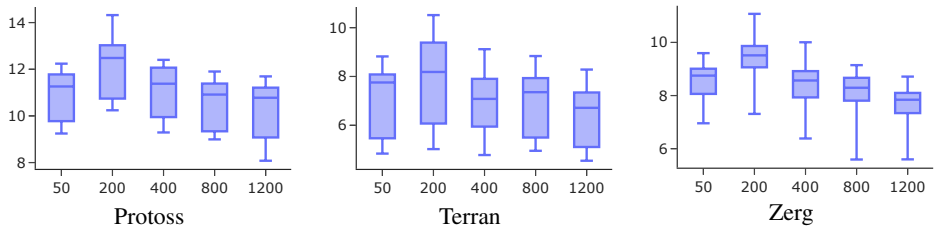

\small
\centering
\showinstance[0.27]{0}{protoss_llm_returns_box}{Protoss} \hspace{4mm}
\showinstance[0.27]{0}{terran_llm_returns_box}{Terran} \hspace{4mm}
\showinstance[0.27]{0}{zerg_llm_returns_box}{Zerg}
\caption{Box plots of final returns on SMACv2 by varying the number of top-k expert trajectories.}
\label{fig.main.ablation}
\vspace{-4mm}
\end{figure}
\paragraph{Dataset}
We use the offline data generated from \cite{bui2024comadice,wang2024offline_OMIGA}, providing trajectories categorized by quality (expert, medium, and poor) across various tasks in SMACv1 and SMACv2. For our experiments focused on learning from unlabeled, mixed-quality demonstrations, we curated a suboptimal dataset for each task by sampling 200 expert and 1000 poor trajectories from the datasets in~\cite{bui2024comadice,wang2024offline_OMIGA}. These were combined and shuffled to form an unlabeled dataset, $\mathcal{D}^{U}$. This construction reflects realistic scenarios where demonstration quality is heterogeneous and unknown a priori. The expert dataset $\mathcal{D}^{E}$, used in Phase 2 of MisoDICE, is derived from $\mathcal{D}^{U}$ based on our labeling procedure in Section~\ref{sec.labeling}. 

\paragraph{Baselines}We compare MisoDICE performance against several key baselines. 
Multi-Agent BC (MA-BC) serves as a foundational baseline, adapted for mixed data via a coefficient $\beta$ that balances learning between the expert dataset ($\mathcal{D}^{E}$) and the union dataset ($\mathcal{D}^{U}$). We evaluate MA-BC under different $\beta$ values: $\beta=1.0$ (expert-only), $\beta=0.5$ (equal weighting), and $\beta=0.0$ (union-only). The end-to-end preference-based OMAPL trains a policy directly from preference data generated by LLMs. Individual DemoDICE (INDD) applies the single-agent DemoDICE algorithm independently to each agent, using individual marginal distributions from $\mathcal{D}^{E}$ and $\mathcal{D}^{U}$. MARL with Supervised Reward (MARL-SL) follows a two-stage setup: a reward is learned from preference data via supervised learning, followed by MARL-based policy optimization. Finally, MisoDICE-VDN is an ablation using a standard Value Decomposition Network mixer (sum of local values) in place of MisoDICE’s learnable mixing, isolating the contribution of the advanced value mixing architecture.

\paragraph{Results} 
Table~\ref{tab.main.evaluation} provides a detailed quantitative comparison of the final average returns achieved by MisoDICE and the baselines across various SMACv2 maps. MisoDICE consistently achieves the highest mean return in nearly every scenario tested, significantly outperforming all baseline methods.
The INDD baseline consistently underperforms compared to MisoDICE, highlighting the deficiency of independent learning and the necessity of coordinated value decomposition in multi-agent settings. While MARL-SL and MisoDICE-VDN represent stronger baselines by incorporating MARL optimization and value decomposition, their performance is still generally lower than MisoDICE. This suggests that MisoDICE's formulation for handling mixed-quality data, combined with its specific mixing architecture, provides a distinct advantage over learning a reward via supervised methods first or using a simpler additive decomposition like VDN. Furthermore, OMAPL's performance, derived directly from Phase 1 preferences, while sometimes competitive, does not match MisoDICE's final results, emphasizing the benefit of the subsequent IL phase (Phase 2) employed by MisoDICE. 

Figure~\ref{fig.main.evaluation} plots the average return curve during training for MisoDICE and the baselines on the same set of SMACv2 tasks.
These plots further reinforce the findings from Table~\ref{tab.main.evaluation}. The learning curve for MisoDICE (shown in red) consistently rises above those of the other methods, demonstrating not only higher final performance but often faster and more stable learning across different game tasks. 

\paragraph{Ablation Studies} 
Figure~\ref{fig.main.ablation} presents box plots of the final returns achieved on SMACv2 maps when varying the number of top trajectories ($k$) selected as the expert dataset ($\mathcal{D}^E$). The x-axis is the value of $k$, ranging from 50 to 1200, while the y-axis shows the distribution of returns achieved across multiple runs. For all three races depicted (Protoss, Terran, and Zerg), the plots generally show that performance tends to peak when a moderate number of top trajectories are selected as experts (around $k=200$). Using too few ($k=50$) or too many ($k=800, 1200$) expert trajectories seems to result in lower median returns and potentially higher variance, although performance remains relatively robust across different $k$ values. This suggests that our data-labeling procedure successfully isolates high-quality trajectories, but the optimal amount of expert data required for the IL phase is neither the absolute minimum nor the maximum available, highlighting a trade-off in selecting the size of $\mathcal{D}^E$. 
In addition, we conduct several ablation studies which include
investigating the effect of the hyperparameter $\alpha$ controlling the influence of suboptimal data, and the performance difference when using LLMs of varying capabilities (GPT-4o vs. GPT-4o-mini) and when employing a simple rule-based method for initial preference generation. 
For details, we refer to our appendix.
\section{Conclusion}
We address offline multi-agent IL with unlabeled, mixed-quality demonstrations. Our proposed two-stage framework, comprising a trajectory labeling pipeline and the \text{MisoDICE} algorithm, enables robust learning from such restricted data. By leveraging LLMs and preference-based  learning for effective labeling, and introducing a scalable, value-decomposed extension of DICE for multi-agent settings, our approach achieved both computational efficiency and global-local policy consistency. Empirical results on multi-agent benchmarks show that MisoDICE outperforms all baselines.

\textbf{Limitations and Future Work.} Our work is limited to  cooperative settings. Extending this approach to competitive environments remains an open challenge for future investigation. Additionally, the reliance on LLMs for trajectory labeling has limitations in domains like MaMuJoCo, where abstract task semantics are difficult for LLMs to interpret accurately. Addressing these challenges opens up exciting directions for future research in scalable, generalizable imitation learning.

\bibliography{refs}
\bibliographystyle{plainnat}


\newpage
\appendix
\addtocontents{toc}{\protect\setcounter{tocdepth}{2}}

\begin{center}
    {\huge APPENDIX}
\end{center}

\tableofcontents

\newpage

\section{Missing Proofs}
\subsection{Proof of Proposition \ref{prop:convex}}
\begin{proposition*}
If the mixing network $\mathcal{M}_\phi[\pmb{\nu}(\bs)]$ is a linear function of both $\pmb{\nu}(\bs)$ and $\phi$, then the training objective $\mathcal{L}(\phi, \pmb{\nu})$ is convex in both $\phi$ and $\pmb{\nu}$.
\end{proposition*}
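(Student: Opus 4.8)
The plan is to first collapse the complicated second term of the objective using the closed form of $w^*_{\pmb{\nu}}$, and then to reduce the whole claim to the standard fact that convexity is preserved under affine substitution. Writing $A = A^{\text{tot}}_{\pmb{\nu}}(\bs,\ba)$ for brevity, the closed form gives $\log w^*_{\pmb{\nu}} = A/(1+\alpha) - 1$, from which I obtain the algebraic identity $A - (1+\alpha)\log w^*_{\pmb{\nu}} = (1+\alpha)$. Consequently the integrand of the second expectation reduces to $(1+\alpha)\,w^*_{\pmb{\nu}} = (1+\alpha)\exp\!\big(A/(1+\alpha) - 1\big)$, so that
\[
\mathcal{L}(\phi,\pmb{\nu}) = (1-\gamma)\,\mathbb{E}_{\bs\sim P_0}\big[\mathcal{M}_\phi[\pmb{\nu}(\bs)]\big] + (1+\alpha)\,\mathbb{E}_{(\bs,\ba)\sim\rho^U_{\text{tot}}}\Big[\exp\!\Big(\tfrac{A^{\text{tot}}_{\pmb{\nu}}(\bs,\ba)}{1+\alpha}-1\Big)\Big].
\]

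Next I would establish that $A^{\text{tot}}_{\pmb{\nu}}(\bs,\ba)$ is an affine function of the collection of global values $\{\mathcal{M}_\phi[\pmb{\nu}(\bs)]\}_{\bs}$: the occupancy-ratio log term $\log(\rho^E_{\text{tot}}/\rho^U_{\text{tot}})$ is constant in $(\phi,\pmb{\nu})$, while $\gamma\sum_{\bs'}\mathcal{T}(\bs'\mid\bs,\ba)\mathcal{M}_\phi[\pmb{\nu}(\bs')] - \mathcal{M}_\phi[\pmb{\nu}(\bs)]$ is a fixed linear combination of global values. Since $\exp$ is convex and nondecreasing, $\exp(A/(1+\alpha)-1)$ is a convex function of the global values (a convex function composed with an affine map), and taking the expectation under $\rho^U_{\text{tot}}$ — a nonnegative-weighted sum — preserves convexity; the first term is affine in the global values. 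Hence $\mathcal{L}$ is convex when viewed as a function of the vector $\{\mathcal{M}_\phi[\pmb{\nu}(\bs)]\}_{\bs}$.

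It then remains to transfer this convexity to $\phi$ and $\pmb{\nu}$. By the linearity hypothesis, for fixed $\pmb{\nu}$ the map $\phi \mapsto \mathcal{M}_\phi[\pmb{\nu}(\bs)]$ is affine, and for fixed $\phi$ the map $\pmb{\nu} \mapsto \mathcal{M}_\phi[\pmb{\nu}(\bs)]$ is affine. Composing the convex function $\mathcal{L}$ (of the global values) with either affine map yields a convex function, giving convexity of $\mathcal{L}(\phi,\pmb{\nu})$ in $\phi$ for fixed $\pmb{\nu}$ and in $\pmb{\nu}$ for fixed $\phi$, which is the claim.

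The main obstacle — really the only non-mechanical step — is spotting the simplification in the first paragraph: left in the raw form $w^*_{\pmb{\nu}}\big(A - (1+\alpha)\log w^*_{\pmb{\nu}}\big)$, the integrand's convexity is not transparent termwise. I would also flag a subtlety in how the statement should be read: because $\mathcal{M}_\phi[\pmb{\nu}(\bs)]$ is \emph{bilinear} rather than jointly linear in $(\phi,\pmb{\nu})$, the objective is convex in each argument separately (the other held fixed) but not jointly; the affine-substitution argument is exactly what accommodates this, and it is what justifies the blockwise optimization used in the practical algorithm.
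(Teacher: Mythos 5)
Your proof is correct and follows essentially the same route as the paper's: simplify the second term to $(1+\alpha)\,\mathbb{E}\bigl[\exp\bigl(A^{\text{tot}}_{\pmb{\nu}}/(1+\alpha)-1\bigr)\bigr]$, observe that $A^{\text{tot}}_{\pmb{\nu}}$ is affine in the global values, and invoke convexity of $\exp$ composed with an affine map plus linearity of expectation. Your closing caveat — that a mixing network of the form $\sum_i \phi_i\nu_i(s_i)+\phi_0$ is bilinear rather than jointly linear, so the conclusion is convexity in each of $\phi$ and $\pmb{\nu}$ separately rather than jointly — is a genuine precision that the paper's proof elides, and it is the correct reading of the statement.
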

\begin{proof}
We write the training objective as:
\begin{align}
\mathcal{L}(\phi, \pmb{\nu}) 
=\ & (1 - \gamma)\, \mathbb{E}_{\bs \sim P_0}\left[ \mathcal{M}_\phi[\pmb{\nu}(\bs)] \right] 
+ \mathbb{E}_{(\bs, \ba) \sim \rho^{U}_{\text{tot}}} \left[ w^*_{\pmb{\nu}}(\bs, \ba) \left( A^{\text{tot}}_{\pmb{\nu}}(\bs, \ba) - (1 + \alpha) \log w^*_{\pmb{\nu}}(\bs, \ba) \right) \right], \nonumber
\end{align}
where the total advantage function \( A^{\text{tot}}_{\pmb{\nu}} \) and the optimal weighting function \( w^*_{\pmb{\nu}} \) are defined as:
\begin{align}
A^{\text{tot}}_{\pmb{\nu}}(\bs, \ba) 
&= \log \frac{\rho^E_{\text{tot}}(\bs, \ba)}{\rho^U_{\text{tot}}(\bs, \ba)} 
+ \gamma \sum\nolimits_{\bs'} \mathcal{T}(\bs' \mid \bs, \ba) \mathcal{M}_\phi[\pmb{\nu}(\bs')] 
- \mathcal{M}_\phi[\pmb{\nu}(\bs)], \label{eq:adv-tot} \\
w^*_{\pmb{\nu}}(\bs, \ba) 
&= \exp\left( \frac{A^{\text{tot}}_{\pmb{\nu}}(\bs, \ba)}{1 + \alpha} - 1 \right). \label{eq:wstar}
\end{align}
We can simplify the objective as:
\[
\mathcal{L}(\phi, \pmb{\nu}) 
= (1 - \gamma)\, \mathbb{E}_{\bs \sim P_0} \left[ \mathcal{M}_\phi[\pmb{\nu}(\bs)] \right] 
+ (1 + \alpha) \mathbb{E}_{(\bs, \ba) \sim \rho^{U}_{\text{tot}}} \left[ \exp\left( \frac{A^{\text{tot}}_{\pmb{\nu}}(\bs, \ba)}{1 + \alpha} - 1 \right) \right].
\]
We now observe that if \(\mathcal{M}_\phi[\pmb{\nu}(\bs)]\) is linear in both \(\pmb{\nu}(\bs)\) and \(\phi\), then each term in \(A^{\text{tot}}_{\pmb{\nu}}(\bs, \ba)\) is also linear in \(\pmb{\nu}(\bs)\) and \(\phi\), since:
\begin{itemize}
\setlength{\itemindent}{0pt}
\setlength{\leftmargini}{10pt}
\item The log-ratio term \(\log \frac{\rho^E_{\text{tot}}}{\rho^U_{\text{tot}}}\) is constant with respect to both \(\phi\) and \(\pmb{\nu}\),
\item The remaining terms are composed of expectations of linear functions (due to the linearity of \(\mathcal{M}_\phi[\cdot]\)) and thus remain linear.
\end{itemize}
As a result, \(A^{\text{tot}}_{\pmb{\nu}}(\bs, \ba)\) is linear, and the exponential of a linear function is convex. Since the outer expectation is a linear operator, and the sum of convex and linear functions remains convex, the entire objective \(\mathcal{L}(\phi, \pmb{\nu})\) is convex in both \(\phi\) and \(\pmb{\nu}(\bs)\). This concludes the convexity analysis.

\end{proof}

\subsection{Proof of Proposition \ref{prop:non-convex-main}}
\begin{proposition*}
If the mixing network $\mathcal{M}_\phi[\pmb{\nu}]$ is a multi-layer feedforward network with $\pmb{\nu}$ as input (even with convex activation functions), the objective $\mathcal{L}(\phi, \pmb{\nu})$ becomes non-convex in $\pmb{\nu}$.
\end{proposition*}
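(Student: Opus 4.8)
The statement is a negative (non-convexity) result, so it suffices to exhibit a single instance on which $\mathcal{L}(\phi,\pmb{\nu})$ fails to be convex in $\pmb{\nu}$. The plan is to reuse the simplified objective derived in the proof of Proposition~\ref{prop:convex}, namely $\mathcal{L}(\phi,\pmb{\nu}) = (1-\gamma)\mathbb{E}_{\bs\sim P_0}[\mathcal{M}_\phi[\pmb{\nu}(\bs)]] + (1+\alpha)\mathbb{E}_{(\bs,\ba)\sim\rho^U_{\tot}}[\exp(\frac{A^{\tot}_{\pmb{\nu}}(\bs,\ba)}{1+\alpha}-1)]$, and then to isolate the single term responsible for the breakdown. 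In the linear case convexity held because $A^{\tot}_{\pmb{\nu}}$ was \emph{affine} in $\pmb{\nu}$, so $\exp(A^{\tot}_{\pmb{\nu}}/(1+\alpha)-1)$ was a convex-of-affine composition. When $\mathcal{M}_\phi$ is a genuinely nonlinear convex map, the exponent instead contains $-\mathcal{M}_\phi[\pmb{\nu}(\bs)]/(1+\alpha)$, i.e. a strictly \emph{concave} contribution; equivalently, the exp-term is a convex but \emph{decreasing} function of $\mathcal{M}_\phi[\pmb{\nu}(\bs)]$ composed with the convex network output, and convexity is not preserved under composition with a decreasing convex outer map. This is the structural reason the result must hold, and the rest of the plan is to pin it down with an explicit instance.

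To make this concrete I would specialize to the smallest nontrivial instance: one agent and one global state $\bar\bs$ (so the transition self-loops and the two advantage terms collapse to $-(1-\gamma)\mathcal{M}_\phi[\nu(\bar\bs)]$), together with a two-layer mixing network using one convex activation, e.g. $\mathcal{M}_\phi[\nu] = w_2(w_1\nu + b_1)^2 + b_2$ with $w_1\neq 0$ and $w_2>0$ (which is convex in $\nu$, so the instance lives squarely in the ``convex activation'' regime). Restricting $\mathcal{L}$ to the scalar line $x=\nu(\bar\bs)$ yields $\ell(x)=h(\mathcal{M}_\phi(x))$ for the one-dimensional outer function $h(m) = (1-\gamma)m + K\exp(-\tfrac{(1-\gamma)m}{1+\alpha})$, where $K>0$ collects the positive data-dependent constants (the $\log(\rho^E_{\tot}/\rho^U_{\tot})$ offsets and the $e^{-1}$ factor). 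I would then use the chain rule to write $\ell''(x) = h''(\mathcal{M}_\phi(x))\,(\mathcal{M}_\phi'(x))^2 + h'(\mathcal{M}_\phi(x))\,\mathcal{M}_\phi''(x)$.

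The decisive step is to evaluate at the vertex $x^\star = -b_1/w_1$, where $\mathcal{M}_\phi'(x^\star)=0$ while $\mathcal{M}_\phi''(x^\star)=2w_2 w_1^2>0$. This annihilates the first, manifestly nonnegative term $h''(\mathcal{M}_\phi')^2$, leaving $\ell''(x^\star) = h'(b_2)\,\mathcal{M}_\phi''(x^\star)$. Since $h'(m) = (1-\gamma)\big[1 - \tfrac{K}{1+\alpha}\exp(-\tfrac{(1-\gamma)m}{1+\alpha})\big] \to -\infty$ as $m\to-\infty$, choosing the output bias $b_2$ sufficiently negative forces $h'(b_2)<0$ and hence $\ell''(x^\star)<0$, so $\mathcal{L}$ is strictly concave along this line and therefore non-convex in $\pmb{\nu}$. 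I expect the main obstacle to be exactly the concern that the convex pieces of $\mathcal{L}$ (the linear initial-state term $(1-\gamma)\mathbb{E}_{P_0}[\mathcal{M}_\phi]$ and the positive next-state coupling inside $A^{\tot}_{\pmb{\nu}}$) might restore global convexity; the vertex evaluation is precisely what neutralizes this, because at $\mathcal{M}_\phi'(x^\star)=0$ every squared-gradient (curvature-of-the-network-times-convex-outer) contribution drops out, leaving only the sign-controllable first-order term $h'\mathcal{M}_\phi''$. A smooth, monotone activation such as softplus would work identically, with the exact vertex replaced by the large-negative-$x$ regime in which $(\mathcal{M}_\phi')^2 \ll \mathcal{M}_\phi''$, so the same negativity argument goes through.
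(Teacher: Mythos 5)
Your proposal is correct, and it takes a genuinely different — and strictly stronger — route than the paper. The paper's own proof is qualitative: it observes that when $\mathcal{M}_\phi[\pmb{\nu}(\cdot)]$ is convex but nonlinear, $A^{\text{tot}}_{\pmb{\nu}}$ becomes a difference of convex functions (the $-\mathcal{M}_\phi[\pmb{\nu}(\bs)]$ term is concave), and that composing $\exp(\cdot)$ with a non-convex argument does not preserve convexity "in general." Strictly speaking, that argument only shows that the standard convexity-preserving composition rules no longer apply; it does not exhibit a point of negative curvature, so it establishes non-convexity only in the informal sense of "not guaranteed convex." You instead produce an explicit witness: a one-state, one-agent instance with a quadratic (convex-activation) mixer, where restricting $\mathcal{L}$ to the line $x=\nu(\bar\bs)$ gives $\ell(x)=h(\mathcal{M}_\phi(x))$ with $h(m)=(1-\gamma)m+K e^{-(1-\gamma)m/(1+\alpha)}$, and evaluating $\ell''$ at the vertex $x^\star=-b_1/w_1$ kills the nonnegative term $h''\,(\mathcal{M}_\phi')^2$ and leaves $h'(b_2)\,\mathcal{M}_\phi''(x^\star)$, which is made strictly negative by pushing the output bias $b_2$ far enough into the region where $h'<0$. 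This vertex evaluation is exactly the right device for neutralizing the convex contributions (the initial-state term and the $\gamma$-weighted next-state coupling) that the paper's argument only waves at. The one thing worth flagging if you write this up is that your construction fixes a particular $\phi$ and shows non-convexity in $\pmb{\nu}$ along one line, which is precisely what the proposition claims (non-convexity in $\pmb{\nu}$); your closing remark about softplus is a reasonable but unneeded generalization, since a single convex-activation counterexample already suffices.
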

\begin{proof}
Now assume that \(\mathcal{M}_\phi[\pmb{\nu}(\bs)]\) is constructed using a multi-layer feedforward neural network with convex activation functions (e.g., ReLU, softplus). This implies that \(\mathcal{M}_\phi[\pmb{\nu}(\bs)]\) is a convex function of both \(\phi\) and \(\pmb{\nu}(\bs)\), but it is not linear. We aim to show that, under this assumption, the objective \(\mathcal{L}(\phi, \pmb{\nu})\) is no longer convex in the joint parameters \((\phi, \pmb{\nu})\).

Recall that the simplified objective can be written as:
\[
\mathcal{L}(\phi, \pmb{\nu}) 
= (1 - \gamma)\, \mathbb{E}_{\bs \sim P_0} \left[ \mathcal{M}_\phi[\pmb{\nu}(\bs)] \right] 
+ (1 + \alpha) \, \mathbb{E}_{(\bs, \ba) \sim \rho^U_{\text{tot}}} \left[ \exp\left( \frac{A^{\text{tot}}_{\pmb{\nu}}(\bs, \ba)}{1 + \alpha} - 1 \right) \right],
\]
where \(A^{\text{tot}}_{\pmb{\nu}}(\bs, \ba)\) is defined as:
\[
A^{\text{tot}}_{\pmb{\nu}}(\bs, \ba) 
= \log \frac{\rho^E_{\text{tot}}(\bs, \ba)}{\rho^U_{\text{tot}}(\bs, \ba)} 
+ \gamma \sum_{\bs'} \mathcal{T}(\bs' \mid \bs, \ba) \mathcal{M}_\phi[\pmb{\nu}(\bs')] 
- \mathcal{M}_\phi[\pmb{\nu}(\bs)].
\]
The log-ratio term is constant with respect to \((\phi, \pmb{\nu})\), so the dependence of \(\mathcal{L}\) on these variables arises entirely through \(\mathcal{M}_\phi[\pmb{\nu}(\cdot)]\). Now observe:
\begin{itemize}
    \item By assumption, the mapping \((\phi, \pmb{\nu}) \mapsto \mathcal{M}_\phi[\pmb{\nu}(\bs)]\) is convex.
    \item Consequently, the term \(\gamma \sum_{\bs'} \mathcal{T}(\bs' \mid \bs, \ba) \mathcal{M}_\phi[\pmb{\nu}(\bs')]\) is convex in \((\phi, \pmb{\nu})\), as it is a non-negative linear combination of convex functions.
    \item However, the term \(-\mathcal{M}_\phi[\pmb{\nu}(\bs)]\) is concave in \((\phi, \pmb{\nu})\), and hence \(A^{\text{tot}}_{\pmb{\nu}}(\bs, \ba)\) is generally the difference of convex functions, which is not necessarily convex.
    \item Since the exponential function is convex and monotonically increasing, applying it to a non-convex function (like \(A^{\text{tot}}_{\pmb{\nu}}/(1 + \alpha) - 1\)) does not preserve convexity in general.
\end{itemize}
Hence, we conclude that the objective \(\mathcal{L}(\phi, \pmb{\nu})\) is not convex in general when \(\mathcal{M}_\phi[\pmb{\nu}(\bs)]\) is convex but not linear.

\end{proof}

\subsection{Proof of Proposition \ref{prop:convex-discriminator}}
\begin{proposition*}
If the mixing network $\mathcal{M}_\eta[\bc(\bs, \ba)]$ is linear in $\bc(\bs, \ba)$, then the training objective $J(\bc, \eta)$ for occupancy ratio estimation is concave in both $\bc$ and $\eta$.
\end{proposition*}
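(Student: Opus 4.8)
The plan is to reduce the claim to the elementary fact that a concave function composed with an affine map is again concave, together with the preservation of concavity under expectation (a nonnegative-weighted integral) and under summation. Write $M_\eta(\bs,\ba) := \mathcal{M}_\eta[\bc(\bs,\ba)]$ and recall
\[
J(\bc,\eta) = \mathbb{E}_{\rho^E_{\text{tot}}}\big[\log M_\eta(\bs,\ba)\big] + \mathbb{E}_{\rho^U_{\text{tot}}}\big[\log(1 - M_\eta(\bs,\ba))\big].
\]
By hypothesis $\mathcal{M}_\eta[\cdot]$ is linear in its inputs, so for a fixed $\eta$ the map $\bc \mapsto M_\eta(\bs,\ba)$ is affine; and for the concrete linear mixer $\mathcal{M}_\eta[\bc] = \sum_{i} \eta_i c_i(s_i,a_i) + \eta_0$ the map $\eta \mapsto M_\eta(\bs,\ba)$ is also affine for a fixed $\bc$. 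Because the mixer is genuinely bilinear in $(\bc,\eta)$, ``concave in both $\bc$ and $\eta$'' is understood in the per-argument sense: concave in $\bc$ for each fixed $\eta$, and concave in $\eta$ for each fixed $\bc$.

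First I would treat the dependence on $\bc$ with $\eta$ held fixed. The scalar function $t \mapsto \log t$ is concave on $(0,\infty)$, and $t \mapsto \log(1-t)$ is concave on $(-\infty,1)$ (its second derivative equals $-(1-t)^{-2} < 0$). Composing each with the affine map $\bc \mapsto M_\eta(\bs,\ba)$ yields, pointwise in $(\bs,\ba)$, a concave function of $\bc$; this is exactly the ``concave $\circ$ affine is concave'' rule, which does not even require monotonicity of the outer function. Taking $\mathbb{E}_{\rho^E_{\text{tot}}}$ and $\mathbb{E}_{\rho^U_{\text{tot}}}$ integrates these concave integrands against nonnegative measures, which preserves concavity, and the sum of the two resulting concave terms is concave. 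Hence $J(\cdot,\eta)$ is concave in $\bc$. Next I would repeat the identical argument with the roles reversed: for fixed $\bc$, the map $\eta \mapsto M_\eta(\bs,\ba)$ is affine, so $\log M_\eta$ and $\log(1-M_\eta)$ are concave in $\eta$ pointwise, and the same expectation-and-sum closure gives concavity of $J(\bc,\cdot)$ in $\eta$. This establishes the proposition.

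The main obstacle, and the only place requiring care, is well-definedness of the two logarithms: the composition argument is valid only where $M_\eta(\bs,\ba) \in (0,1)$, so that both $\log M_\eta$ and $\log(1-M_\eta)$ lie in the concave regime. I would therefore note that the feasible region $\bc \in (0,1)^{|\cS|\times|\cA|}$ together with an appropriate (e.g.\ convex-combination / normalized) parameterization of the linear mixer keeps $M_\eta$ inside $(0,1)$, so the composition bound applies throughout the domain. A secondary point worth flagging is the joint-versus-separate distinction: joint concavity in $(\bc,\eta)$ generally fails for a bilinear mixer, so the statement must be---and is---read argument-wise, exactly as in the companion convexity result for the value objective.
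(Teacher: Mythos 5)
Your proof takes essentially the same route as the paper's: concave composed with affine, closed under nonnegative expectation and summation. You are in fact more careful than the paper on two points --- the domain restriction keeping $\mathcal{M}_\eta[\bc(\bs,\ba)]\in(0,1)$ so both logarithms are defined, and the observation that for a bilinear mixer the concavity can only hold per-argument (e.g.\ $\log(\eta^\top \bc)$ is not jointly concave) --- whereas the paper's proof asserts concavity ``in both $\eta$ and $\bc$'' without drawing this distinction.
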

\begin{proof}
 We write the objective as:
\[
J(\bc, \eta) = \mathbb{E}_{\rho^E_{\text{tot}}}\left[ \log \mathcal{M}_\eta[\bc(\bs, \ba)] \right] 
+ \mathbb{E}_{\rho^U_{\text{tot}}}\left[ \log \left(1 - \mathcal{M}_\eta[\bc(\bs, \ba)] \right) \right],
\]
where \(\mathcal{M}_\eta[\cdot]\) is linear in \(\eta\) and \(\bc(\bs, \ba)\).
Now observe:
\begin{itemize}
    \item The composition $ \log \mathcal{M}_\eta[\bc(\bs, \ba)]$ is a concave function in \(\mathcal{M}_\eta[\bc(\bs, \ba)]\), and hence concave in both \(\eta\) and \(\bc\).
    \item Similarly, \(\log(1 -  \mathcal{M}_\eta[\bc(\bs, \ba)])\) is also concave in \(1-\mathcal{M}_\eta[\bc(\bs, \ba)]\), and thus in \(\eta\) and \(\bc\).
    \item Since expectations preserve concavity, both terms in \(J(\bc, \eta)\) are concave in \((\bc, \eta)\).
\end{itemize}

Here we note  that if \(\mathcal{M}_\eta[\bc(\bs, \ba)]\) is a convex (but non-linear) function of \(\eta\) and/or \(\bc\), such as when using a neural network with convex activations (e.g., ReLU, softplus).
In this case:
\begin{itemize}
    \item The function \(\log \mathcal{M}_\eta[\bc(\bs, \ba)]\) is the composition of a concave function (\(\log\)) with a convex function. This composition is in general \emph{not concave}.
    \item Likewise, \(\log(1 - \mathcal{M}_\eta[\bc(\bs, \ba)])\) is the composition of a concave function with a concave function, which again does not preserve concavity.
\end{itemize}
Therefore, \(J(\bc, \eta)\) is no longer guaranteed to be concave in \((\bc, \eta)\) when \(\mathcal{M}_\eta[\cdot]\) is convex but not linear.
\end{proof}

\subsection{Proof of Proposition \ref{prop.consistency}}
\begin{proposition*}
Assume the global policy space $\Pi$ is factorizable, i.e.,
$
\Pi = \left\{ \bpi_{\text{tot}} ~\middle|~ \exists \{\pi_i\}_{i \in \mathcal{N}} \text{ such that } \bpi_{\text{tot}}(\bs, \ba) = \prod\nolimits_{i \in \mathcal{N}} \pi_i(a_i \mid s_i) \right\}.
$
Let $\{\pi_i^*\}_{i \in \mathcal{N}}$ and $\bpi^*_{\text{tot}}$ be the solutions to the local and global weighted BCs, respectively. Then for any $(\bs, \ba)$, it holds that:
\[
\bpi^*_{\text{tot}}(\bs, \ba) = \prod\nolimits_{i \in \mathcal{N}} \pi_i^*(a_i \mid s_i).
\]
\end{proposition*}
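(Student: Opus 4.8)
The plan is to exploit the additive structure that the factorizability assumption induces in the logarithm of the joint policy, and then argue that the resulting objective decouples across agents so that the joint maximization factorizes into independent per-agent maximizations.

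First I would substitute the factorized form $\bpi_{\text{tot}}(\bs,\ba)=\prod_{i\in\mathcal{N}}\pi_i(a_i\mid s_i)$ into the global weighted BC objective \eqref{prob:pi-tot}. Using $\log\prod_i\pi_i=\sum_i\log\pi_i$ and interchanging the finite sums, the global objective becomes
\[
\sum_{i\in\mathcal{N}}\Big(\sum_{(\bs,\ba)\sim\rho^U_{\text{tot}}} w^{\text{tot}}_{\pmb{\nu}^*}(\bs,\ba)\,\log\pi_i(a_i\mid s_i)\Big),
\]
i.e.\ a sum of $n$ terms in which the $i$-th term depends only on $\pi_i$ — the weight $w^{\text{tot}}_{\pmb{\nu}^*}$ is a fixed function once $\pmb{\nu}^*$ is given, and $\log\pi_i$ reads off only the $i$-th coordinates of $(\bs,\ba)$. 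Each such term is precisely the local weighted BC objective \eqref{eq.localBC} for agent $i$.

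Next I would invoke separability of both the objective and the feasible set. Because $\Pi$ is factorizable, optimizing over $\bpi_{\text{tot}}\in\Pi$ is equivalent to optimizing over the tuple $\{\pi_i\}_{i\in\mathcal{N}}$, and the only constraints are that each $\pi_i(\cdot\mid s_i)$ is a valid conditional distribution; there are no cross-agent coupling constraints. Since the transformed objective is a sum of terms, each involving a single $\pi_i$ over an independent constraint set, the joint maximum equals the sum of the individual maxima, $\max_{\{\pi_i\}}\sum_i f_i(\pi_i)=\sum_i\max_{\pi_i}f_i(\pi_i)$. Hence the maximizer of each term is exactly the local BC solution $\pi_i^*$, and concatenating these yields $\bpi^*_{\text{tot}}(\bs,\ba)=\prod_{i\in\mathcal{N}}\pi_i^*(a_i\mid s_i)$.

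The main obstacle is rigorously justifying this decoupling step — that a separable objective over a product constraint set attains its joint optimum at the product of the per-factor optima. One must be careful that the weight $w^{\text{tot}}_{\pmb{\nu}^*}$, though depending on the full joint $(\bs,\ba)$, is held fixed throughout policy extraction (it was computed from the already-optimized $\pmb{\nu}^*$), so it acts as a constant coefficient and introduces no coupling between the $\pi_i$. Given this, the factorization is immediate; the only remaining care is to confirm that the per-agent sub-objective obtained by marginalizing the joint weight over the other agents' coordinates coincides with \eqref{eq.localBC}, which holds because $\log\pi_i$ is constant in those coordinates.
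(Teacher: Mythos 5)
Your proposal is correct and follows essentially the same route as the paper's proof: decompose $\log\bpi_{\text{tot}}=\sum_i\log\pi_i$ under the factorizability assumption, observe that the fixed weight $w^{\text{tot}}_{\pmb{\nu}^*}$ introduces no coupling so the objective separates into the per-agent local WBC objectives, and conclude that the product of the local optimizers attains the global optimum. The paper phrases the final step as an inequality chain bounding $\Phi(\bpi_{\text{tot}})$ by $\Phi(\prod_i\pi_i^*)$ for every factorized policy, which is just your $\max\sum=\sum\max$ argument written out explicitly.
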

\begin{proof}
For notational simplicity, let \(\Phi(\bpi_{\text{tot}}) \) be the objective function of the global WBC problem:
\[
\Phi(\bpi_{\text{tot}}) = \mathbb{E}_{\mathbf{s}, \mathbf{a} \sim \rho^U_{\text{tot}}} \left[ w^{\tot}_{\bnu^*}(\bs,\ba)\log \bpi_{\text{tot}}(\mathbf{a} | \mathbf{s}) \right].
\]
Since we are seeking a decomposable policy \( \bpi_{\text{tot}}(\mathbf{a}|\mathbf{s}) = \prod_{i \in \mathcal{N}} \pi_i(a_i|s_i) \), we have, for any \( \bpi_{\text{tot}} \in \Pi \) such that \( \bpi_{\text{tot}} = \prod_i \pi_i \):
\[
\begin{aligned}
\Phi(\bpi_{\text{tot}}) &= \mathbb{E}_{\mathbf{s}, \mathbf{a} \sim \rho^U_{\text{tot}}} \left[w^{\tot}_{\bnu^*}(\bs,\ba) \log \prod_{i} \pi_i(a_i | s_i) \right] \\
&= \sum_{i \in \mathcal{N}} \left\{ \mathbb{E}_{\mathbf{s}, \mathbf{a} \sim \rho^U_{\text{tot}}} \left[w^{\tot}_{\bnu^*}(\bs,\ba) \log \pi_i(a_i | s_i) \right] \right\} \\
&\stackrel{(a)}{\leq} \sum_{i \in \mathcal{N}} \left\{ \mathbb{E}_{\mathbf{s}, \mathbf{a} \sim \rho^U_{\text{tot}}} \left[w^{\tot}_{\bnu^*}(\bs,\ba) \log \pi^*_i(a_i | s_i) \right] \right\} \\
&= \mathbb{E}_{\mathbf{s}, \mathbf{a} \sim \rho^U_{\text{tot}}} \left[w^{\tot}_{\bnu^*}(\bs,\ba) \log \widetilde{\pi}_{\text{tot}}(\mathbf{a} | \mathbf{s}) \right],
\end{aligned}
\]
where $\widetilde{\pi}_{\tot} = \prod_i \pi^*_i$, and
 \((a)\) holds because each \( \pi^*_i \) is optimal for the corresponding local WBC problem. Thus, we have \( \Phi(\bpi_{\text{tot}}) \leq \Phi(\widetilde{\pi}_{\text{tot}}) \) for any \( \bpi_{\text{tot}} \in \Pi\), implying that \( \widetilde{\pi}_{\text{tot}} \) is also optimal for the global WBC. This establishes the global-local-consistency, as desired.
    
\end{proof}

\subsection{Proof of  Proposition \ref{prop:soft-max-local-policy}}

\begin{proposition*}
Assume that the joint behavior policy of the union dataset $\mu^U(\ba|\bs)$ is decomposable into local behavior policies $\mu^U_i(a_i \mid s_i)$. 
Let $\pi_i^*$ be the optimal solution to the local weighted BC objective. Then the optimal local policy can be expressed as:
\[
\pi^*_i(a_i \mid s_i) = \frac{1}{\Delta(s_i)} \exp\Big( \frac{\phi^*_i}{1 + \alpha} \, q^*_i(s_i, a_i) + \log \mu^U_i(a_i \mid s_i) \Big),
\]
where 
$
\Delta(s_i) = \sum\nolimits_{a_i \in \mathcal{A}_i} \exp\big( \frac{\phi^*_i}{1 + \alpha} \, q^*_i(s_i, a_i) + \log \mu^U_i(a_i \mid s_i) \big),
$
is the normalization constant, ensuring that $\pi^*_i(a_i \mid s_i)$ is a valid probability distribution over the local action space.
\end{proposition*}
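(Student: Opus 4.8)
The plan is to treat the local weighted BC in \eqref{eq.localBC} as a constrained maximization over the probability simplex and to exploit the product structure of the weight $w^{\tot}_{\bnu^*}$ established immediately above the statement. First I would note that the objective depends on $\pi_i$ only through $\log\pi_i(a_i\mid s_i)$, so I can collapse the joint expectation onto the single pair $(s_i,a_i)$. Introducing the marginalized weight
\[
W_i(s_i,a_i) = \sum_{\bs:\,[\bs]_i=s_i}\ \sum_{\ba:\,[\ba]_i=a_i} \rho^U_{\tot}(\bs,\ba)\, w^{\tot}_{\bnu^*}(\bs,\ba),
\]
the objective becomes $\sum_{s_i,a_i} W_i(s_i,a_i)\log\pi_i(a_i\mid s_i)$, which decouples across $s_i$. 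For each fixed $s_i$ this is a weighted log-likelihood maximized over $\{\pi_i(\cdot\mid s_i)\ge 0,\ \sum_{a_i}\pi_i(a_i\mid s_i)=1\}$, and a Lagrange-multiplier argument (equivalently, Gibbs' inequality) gives the standard proportional solution $\pi^*_i(a_i\mid s_i) = W_i(s_i,a_i)\big/\sum_{a_i'}W_i(s_i,a_i')$.

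The crux is computing $W_i$ explicitly. I would substitute the factorized weight
\[
w^{\tot}_{\bnu^*}(\bs,\ba) = e^{-1}e^{\gamma\phi^*_0/(1+\alpha)}\prod_{j\in\cN}\exp\!\Big(\tfrac{\phi^*_j}{1+\alpha}\big(q^*_j(s_j,a_j)-\nu^*_j(s_j)\big)\Big)
\]
together with $\rho^U_{\tot}(\bs,\ba)=\rho^U_{\tot}(\bs)\prod_{j\in\cN}\mu^U_j(a_j\mid s_j)$, invoking the decomposability hypothesis on $\mu^U$. Summing over $\ba$ with $a_i$ held fixed, the $j\neq i$ factors separate into $\prod_{j\neq i}\sum_{a_j}\mu^U_j(a_j\mid s_j)\exp\!\big(\tfrac{\phi^*_j}{1+\alpha}(q^*_j(s_j,a_j)-\nu^*_j(s_j))\big)$, a quantity depending only on $\bs_{-i}$ and crucially \emph{not} on $a_i$. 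Carrying out the remaining sum over $\bs$ with $s_i$ fixed, everything except the $j=i$ term collapses into a constant $C(s_i)$ independent of $a_i$, so that
\[
W_i(s_i,a_i) = C(s_i)\,e^{-\phi^*_i\nu^*_i(s_i)/(1+\alpha)}\,\mu^U_i(a_i\mid s_i)\exp\!\Big(\tfrac{\phi^*_i}{1+\alpha}q^*_i(s_i,a_i)\Big),
\]
where every factor multiplying $\mu^U_i(a_i\mid s_i)\exp(\cdots)$ is a function of $s_i$ alone.

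Finally I would normalize: since each $s_i$-only factor appears identically in the numerator and denominator of $W_i(s_i,a_i)\big/\sum_{a_i'}W_i(s_i,a_i')$, it cancels, leaving
\[
\pi^*_i(a_i\mid s_i) = \frac{\mu^U_i(a_i\mid s_i)\exp\!\big(\tfrac{\phi^*_i}{1+\alpha}q^*_i(s_i,a_i)\big)}{\sum_{a_i'}\mu^U_i(a_i'\mid s_i)\exp\!\big(\tfrac{\phi^*_i}{1+\alpha}q^*_i(s_i,a_i')\big)}.
\]
Rewriting $\mu^U_i=\exp(\log\mu^U_i)$ inside the exponential then reproduces exactly the claimed softmax with normalizer $\Delta(s_i)$. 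I expect the main obstacle to be the second step: one must verify rigorously that marginalizing the joint weight over $\ba_{-i}$ and $\bs_{-i}$ yields a genuinely $a_i$-independent multiplicative constant. This depends on both (i) the product form of $w^{\tot}_{\bnu^*}$ inherited from the linear mixing network and the assumed linear reward decomposition, and (ii) the decomposability of $\mu^U$; without the latter the cross-agent action sums would not separate and the clean softmax structure would break down.
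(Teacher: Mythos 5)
Your proposal is correct and follows essentially the same route as the paper's proof: exploit the product form of $w^{\tot}_{\bnu^*}$ and the decomposability of $\mu^U$ so that the local weighted BC reduces, for each $s_i$, to a weighted log-likelihood whose maximizer is proportional to the weight, yielding the claimed softmax. The only difference is that you carry out the marginalization over $(\bs_{-i},\ba_{-i})$ explicitly and verify the collapsed factor is $a_i$-independent, whereas the paper fixes a joint state $\bs$ and drops the $a_i$-independent terms more informally; your version is the more careful rendering of the same argument.
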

\begin{proof}
We first recall that the total weighting function can be written as:
\begin{align}
w^{\text{tot}}_{\pmb{\nu}^*}(\bs, \ba) 
= e^{-1} \exp\left( \frac{1}{1 + \alpha} \left( \sum\nolimits_{i \in \mathcal{N}} \phi^*_i \left( q^*_i(s_i, a_i) - \nu^*_i(s_i) \right) + \gamma \phi^*_0 \right) \right). \nonumber
\end{align}
This allows us to express the local weighted behavior cloning (BC) objective for each agent \( i \) as:
\[
\max_{\pi_i} \left\{ \mathbb{E}_{\bs, \ba \sim \rho^U_{\text{tot}}} \left[ w^{\text{tot}}_{\pmb{\nu}^*}(\bs, \ba) \log \pi_i(a_i \mid s_i) \right] \right\}.
\]
For each state \(\bs\), the corresponding sub-training objective is:
\begin{align*}
&\sum_{\ba} \mu^U(\ba \mid \bs) \left[ w^{\text{tot}}_{\pmb{\nu}^*}(\bs, \ba) \log \pi_i(a_i \mid s_i) \right] \\
&= \sum_{\ba} \prod_i \mu^U_i(a_i \mid s_i) \left[ \exp\left( \frac{1}{1 + \alpha} \left( \sum\nolimits_{i \in \mathcal{N}} \phi^*_i \left( q^*_i(s_i, a_i) - \nu^*_i(s_i) \right) + \gamma \phi^*_0 \right) \right) \log \pi_i(a_i \mid s_i) \right] \\
&= \sum_{\ba} \left[ \exp\left( \frac{1}{1 + \alpha} \left( \sum\nolimits_{i \in \mathcal{N}} \phi^*_i q^*_i(s_i, a_i) + (1 + \alpha) \log \mu^U_i(a_i \mid s_i) + \text{const} \right) \right) \log \pi_i(a_i \mid s_i) \right],
\end{align*}
where the constant term (including \(\nu^*_i(s_i)\) and \(\phi^*_0\)) does not depend on \(a_i\), and thus can be ignored for optimizing \(\pi_i\).

Since only terms involving \(a_i\) affect the optimality of the local policy \(\pi_i(a_i \mid s_i)\), we isolate the relevant part and simplify the training objective for each agent \(i\) as:
\[
\sum_{a_i} \left[ \exp\left( \frac{1}{1 + \alpha} \left( \phi^*_i q^*_i(s_i, a_i) + (1 + \alpha) \log \mu^U_i(a_i \mid s_i) \right) \right) \log \pi_i(a_i \mid s_i) \right].
\]
This is a weighted log-likelihood objective, where the weights are shaped by the exponentiated advantage-like term. The maximization of this expression yields the following closed-form solution:
\[
\pi^*_i(a_i \mid s_i) = \frac{ \mu^U_i(a_i \mid s_i) \exp\left( \frac{\phi^*_i}{1 + \alpha} q^*_i(s_i, a_i) \right) }{ \sum_{a_i'} \mu^U_i(a_i' \mid s_i) \exp\left( \frac{\phi^*_i}{1 + \alpha} q^*_i(s_i, a_i') \right) }  =\frac{1}{\Delta(s_i)} \mu^U_i(a_i \mid s_i) \exp\left( \frac{\phi^*_i}{1 + \alpha} q^*_i(s_i, a_i) \right)  .
\]
where 
\[
\Delta(s_i) = \sum\nolimits_{a_i } \exp\big( \frac{\phi^*_i}{1 + \alpha} \, q^*_i(s_i, a_i) + \log \mu^U_i(a_i \mid s_i) \big),
\]
 as desired. 

\end{proof}

\section{Additional Details}

\subsection{Unlabeled Imperfect Dataset}


A significant challenge in advancing offline multi-agent imitation learning (MAIL) from mixed-quality data is the lack of standardized benchmarks. Currently, there are no publicly available datasets that specifically provide suboptimal or imperfect demonstrations tailored for multi-agent reinforcement learning (MARL) settings. To address this gap and facilitate research in this crucial area, we constructed our own datasets.

Our approach leverages the offline datasets generated by O-MAPL~\cite{bui2025mapl}. The O-MAPL framework contributes a valuable resource by providing distinct datasets categorized by quality - expert, medium, and poor - for a range of MARL tasks. Specifically, for each task within the MaMujoco, SMACv1, and SMACv2 environments, O-MAPL offers 1000 trajectories for each quality level (expert, medium, and poor).

For the experiments, we specifically curate a suboptimal dataset designed to test the algorithm's ability to learn from mixed-quality demonstrations where the quality is not explicitly labeled. To form this dataset, we sample 200 expert trajectories and 1000 poor trajectories from the O-MAPL datasets for each considered task. These selected trajectories are then combined and shuffled thoroughly to create a single, unlabeled suboptimal dataset. This process ensures that we operate on a dataset that realistically mirrors scenarios where trajectory quality is heterogeneous and unknown beforehand, compelling the agent to discern and learn from the more effective behaviors embedded within the mixed data.

We further detail the general framework of using an expert dataset ($\mathcal{D}^{E}$) and a mixed-quality dataset ($\mathcal{D}^{Mix}$), with our constructed dataset serving as $\mathcal{D}^{U}$ in a setting where expert demonstrations within this mix are not pre-annotated. The specific MaMujoco and SMAC environments used in our evaluations are detailed in Table \ref{appendix:table:overview}.

\begin{table}[!hb]
\small
\centering
\caption{Overview of our datasets.}
\label{appendix:table:overview}
\begin{tabular}{ccc|c|c|c|c|c|c}
\toprule
\multicolumn{3}{c|}{\multirow{2}{*}{\textbf{Instances}}} & \multirow{2}{*}{\textbf{Trajectories}} & \multirow{2}{*}{\textbf{Samples}} & \multirow{2}{*}{\textbf{Agents}} & \textbf{State} & \textbf{Obs} & \textbf{Action} \\ 
\multicolumn{3}{c|}{} &  &  &  & \textbf{dim} & \textbf{dim} & \textbf{dim} \\ \midrule
\multicolumn{1}{c|}{\multirow{3}{*}{MaMujoco}} & \multicolumn{2}{c|}{Hopper-v2} & 1.2K & 459.2K & 3 & 126 & 14 & 1 \\
\multicolumn{1}{c|}{} & \multicolumn{2}{c|}{Ant-v2} & 1.2K & 1200K & 2 & 452 & 113 & 4 \\
\multicolumn{1}{c|}{} & \multicolumn{2}{c|}{HalfCheetah-v2} & 1.2K & 1200K & 6 & 828 & 23 & 1 \\ \midrule
\multicolumn{1}{c|}{\multirow{4}{*}{SMACv1}} & \multicolumn{2}{c|}{2c\_vs\_64zg} & 1.2K & 43.2K & 2 & 1350 & 478 & 70 \\
\multicolumn{1}{c|}{} & \multicolumn{2}{c|}{5m\_vs\_6m} & 1.2K & 28.3K & 5 & 780 & 124 & 12 \\
\multicolumn{1}{c|}{} & \multicolumn{2}{c|}{6h\_vs\_8z} & 1.2K & 31.8K & 6 & 1278 & 172 & 14 \\
\multicolumn{1}{c|}{} & \multicolumn{2}{c|}{corridor} & 1.2K & 71.8K & 6 & 2610 & 346 & 30 \\ \midrule
\multicolumn{1}{c|}{\multirow{15}{*}{SMACv2}} & \multicolumn{1}{c}{\multirow{5}{*}{protoss}} & 5\_vs\_5 & 1.2K & 76.9K & 5 & 130 & 92 & 11 \\
\multicolumn{1}{c|}{} & \multicolumn{1}{c}{} & 10\_vs\_10 & 1.2K & 80.2K & 10 & 310 & 182 & 16 \\
\multicolumn{1}{c|}{} & \multicolumn{1}{c}{} & 10\_vs\_11 & 1.2K & 73.3K & 10 & 327 & 191 & 17 \\
\multicolumn{1}{c|}{} & \multicolumn{1}{c}{} & 20\_vs\_20 & 1.2K & 78.3K & 20 & 820 & 362 & 26 \\
\multicolumn{1}{c|}{} & \multicolumn{1}{c}{} & 20\_vs\_23 & 1.2K & 70.4K & 20 & 901 & 389 & 29 \\ \cmidrule{2-9}
\multicolumn{1}{c|}{} & \multicolumn{1}{c}{\multirow{5}{*}{terran}} & 5\_vs\_5 & 1.2K & 60.6K & 5 & 120 & 82 & 11 \\
\multicolumn{1}{c|}{} & \multicolumn{1}{c}{} & 10\_vs\_10 & 1.2K & 61.7K & 10 & 290 & 162 & 16 \\
\multicolumn{1}{c|}{} & \multicolumn{1}{c}{} & 10\_vs\_11 & 1.2K & 58.2K & 10 & 306 & 170 & 17 \\
\multicolumn{1}{c|}{} & \multicolumn{1}{c}{} & 20\_vs\_20 & 1.2K & 64.7K & 20 & 780 & 322 & 26 \\
\multicolumn{1}{c|}{} & \multicolumn{1}{c}{} & 20\_vs\_23 & 1.2K & 57.1K & 20 & 858 & 346 & 29 \\ \cmidrule{2-9}
\multicolumn{1}{c|}{} & \multicolumn{1}{c}{\multirow{5}{*}{zerg}} & 5\_vs\_5 & 1.2K & 34.3K & 5 & 120 & 82 & 11 \\
\multicolumn{1}{c|}{} & \multicolumn{1}{c}{} & 10\_vs\_10 & 1.2K & 38.1K & 10 & 290 & 162 & 16 \\
\multicolumn{1}{c|}{} & \multicolumn{1}{c}{} & 10\_vs\_11 & 1.2K & 37.3K & 10 & 306 & 170 & 17 \\
\multicolumn{1}{c|}{} & \multicolumn{1}{c}{} & 20\_vs\_20 & 1.2K & 40.5K & 20 & 780 & 322 & 26 \\
\multicolumn{1}{c|}{} & \multicolumn{1}{c}{} & 20\_vs\_23 & 1.2K & 38.5K & 20 & 858 & 346 & 29 \\ \bottomrule
\end{tabular}
\end{table}

\subsection{Rule-based vs. LLM-based Trajectory Ranking Techniques}

To effectively utilize unlabeled datasets in our multi-agent imitation learning framework, a critical step involves ranking trajectories to identify and select high-quality examples, which then form the expert dataset ($\mathcal{D}^{E}$). This ranking is achieved by first learning an O-MAPL (Offline Multi-agent Preference Learning) model to recover the underlying reward function from pairwise preferences. Subsequently, these recovered rewards enable the ranking of all trajectories in the unlabeled dataset.

The training of O-MAPL necessitates a preference dataset ($\mathcal{P}$) for each task. Each instance in this dataset consists of a pair of trajectories $(\sigma_1, \sigma_2)$ and a label indicating which trajectory is preferred. Echoing the data generation methodologies in O-MAPL, we construct these crucial preference datasets using two distinct approaches: a rule-based method and an LLM-based method. For this purpose, we randomly sample 2,000 trajectory pairs from the unlabeled dataset for each task, and then apply one of the following methods to assign preference labels.

\subsubsection{Rule-based Preference Labeling}

In the rule-based approach, we operate under the assumption that the cumulative return for each trajectory within a selected pair is known or can be accurately estimated. The preference assignment is straightforward: the trajectory yielding a higher return is labeled as preferred. This method leverages the intuition that higher returns often correlate with more successful task completion.

However, this rule-based method is not without its limitations, particularly in cooperative multi-agent settings. The accuracy can be compromised because, in certain scenarios, individual agents might adopt strategies that maximize their individual rewards without contributing to, or even at the detriment of, the team's overall objective and cooperative success. For instance, in complex environments like StarCraft Multi-Agent Challenge (SMAC), which involves cooperative multi-agent gameplay to defeat enemies, agents might excessively focus on attacking enemies to boost their individual reward metrics. Such behavior can lead to a loss of the game due to neglecting crucial strategic elements, such as maintaining adequate health points, armor regeneration, managing weapon cooldowns, or effective unit positioning and coordination. Consequently, a trajectory with a higher summed individual return might not always represent superior cooperative behavior.

\subsubsection{LLM-based Preference Labeling}

To introduce more nuanced and potentially more accurate preference signals, we also employ an LLM-based method. For this, we leverage the capabilities of a large language model, specifically GPT-4o, to act as an expert labeler. The goal is to achieve better generalization in preference assessment and thereby increase the accuracy of the subsequent preference learning phase.

Following the methodology described in studies such as O-MAPL for SMAC environments, we construct prompts for the LLM that include detailed information extracted from the global state of each trajectory in a pair. This information typically encompasses key metrics like the remaining health points and shields of allied and enemy units, the number of allied and enemy deaths, relative positions of units, weapon cooldown times, agent types, and the semantics of their actions. The LLM is then tasked to evaluate which trajectory demonstrates superior gameplay or strategy based on this comprehensive contextual information. This approach has shown potential in generating rich and cost-effective preference data, which can significantly enhance environment understanding and policy learning in complex multi-agent tasks. As noted in the MisoDICE framework, LLMs can be instrumental in inferring expert-like segments even from entirely unlabeled trajectory datasets.

\subsubsection{Reward Recovery and Trajectory Ranking}
Once the preference datasets are established using either the rule-based or LLM-based method, they are used to train the O-MAPL model. After training O-MAPL, we utilize an inverse preference learning technique, specifically leveraging the relationship $R = Q - \gamma V$ (an adaptation of the inverse soft Bellman operator, $r(s,a) = Q_{tot}(s,a) - \gamma \mathbb{E}_{s'}V_{tot}(s')$ as seen in MaxEnt RL frameworks), to recover the reward for each state-action pair within every trajectory in the unlabeled dataset. With these recovered rewards, we can then compute the total return for each trajectory and subsequently rank all trajectories.

Finally, from the ranked lists generated by each of the two preference labeling approaches (rule-based and LLM-based), we select the top-ranked trajectories to form the respective expert datasets, denoted as $\mathcal{D}^{E}_{\text{rule}}$ and $\mathcal{D}^{E}_{\text{LLM}}$. These datasets are then used for the downstream imitation learning process in MisoDICE.
\begin{figure} [t!]
    \centering
    \includegraphics[width=\linewidth, trim={75 300 85 160},clip]{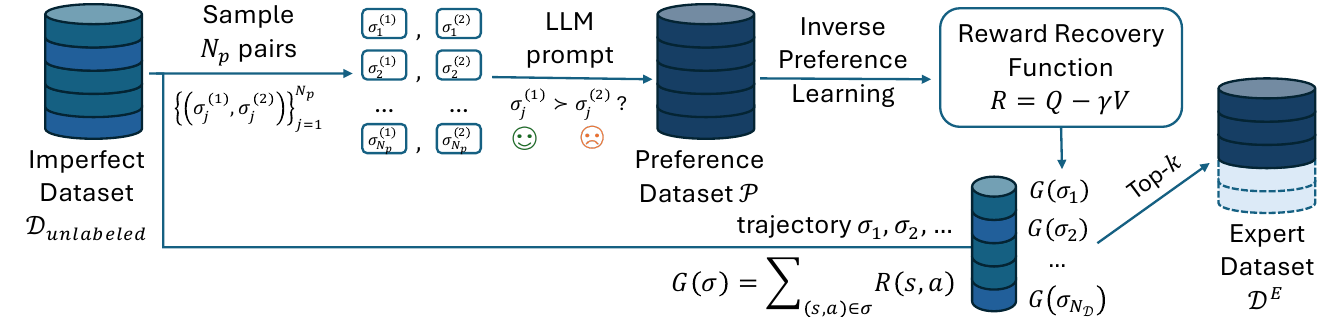}
    \caption{Generalized Expert Dataset Identification via Preference Learning}
\end{figure}
\begin{figure}[t!]
    \centering
    \includegraphics[width=0.9\linewidth, trim={120 180 135 190},clip]{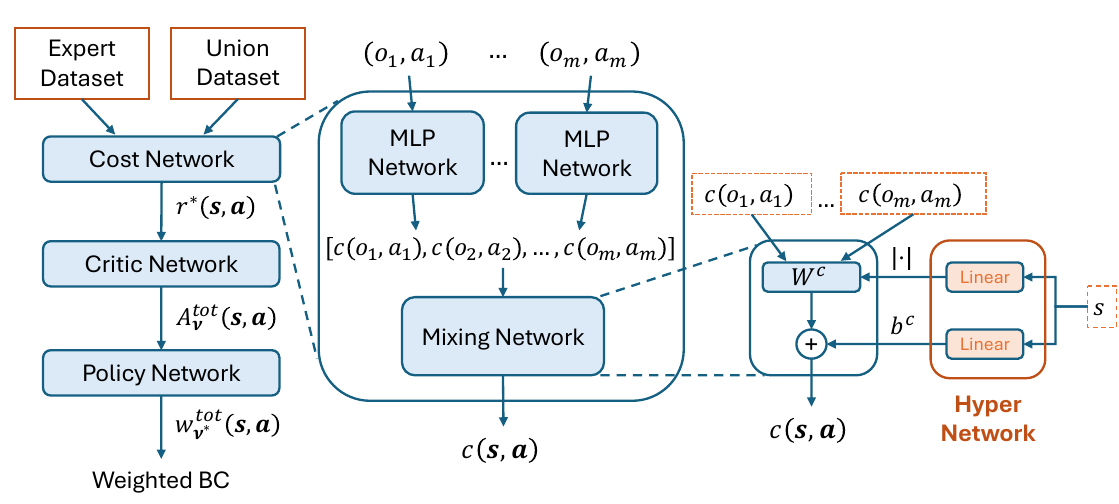}
    \caption{Multi-Agent Imitation Policy Learning}
\end{figure}

\begin{algorithm}[t!]
\caption{\textbf{MisoDICE}: Phase 1 – Expert Dataset Identification via Preference Learning}
\label{algo:MisoDICE_Phase1_ExpertData_Generalized}

\KwIn{Unlabeled dataset $\mathcal{D}_{\text{unlabeled}}$; O-MAPL parameters $(\psi_q^{\text{OM}}, \psi_v^{\text{OM}}, \theta^{\text{OM}})$; Preference method \texttt{PrefMethod}; Parameters for \texttt{PrefMethod}; Number of preference pairs $N_p$; Number of expert trajectories $K_{\text{expert}}$}
\KwOut{Expert dataset $\mathcal{D}^{E}$}

\tcc{1. Generate Preference Dataset}
Sample $N_p$ trajectory pairs $\mathcal{S}_{\text{pairs}}$ from $\mathcal{D}_{\text{unlabeled}}$\;
Generate preference dataset $\mathcal{P}$ using $\mathcal{S}_{\text{pairs}}$ and the specified \texttt{PrefMethod}\;

\tcc{2. Train O-MAPL for Reward Recovery}
\For{\textit{a number of training steps}}{
    Update O-MAPL parameters $(\psi_q^{\text{OM}}, \psi_v^{\text{OM}}, \theta^{\text{OM}})$ using $\mathcal{P}$\;
    Maximize preference likelihood $\mathcal{L}_{\text{O-MAPL}}$ and minimize Extreme-V loss $\mathcal{J}_{\text{O-MAPL}}$\;
}

\tcc{3. Recover Rewards and Select Expert Dataset}
Estimate $R(s,a)$ for each $(s,a) \in \mathcal{D}_{\text{unlabeled}}$ using $Q^{\text{OM}} - \gamma V^{\text{OM}}$\;

\ForEach{trajectory $\sigma \in \mathcal{D}_{\text{unlabeled}}$}{
    Compute return $G(\sigma) = \sum_{(s,a) \in \sigma} R(s,a)$\;
}

Rank all trajectories by $G(\sigma)$ in descending order\;
Select top-$K_{\text{expert}}$ trajectories as $\mathcal{D}^{E}$\;

\Return{$\mathcal{D}^{E}$}
\end{algorithm}

\subsection{Baselines}

To rigorously evaluate MisoDICE, we compare its performance against several key baselines. These are selected to assess MisoDICE's efficacy in multi-agent imitation learning from mixed-quality demonstrations and to ablate the contributions of its core components, particularly its learnable mixing architecture.

\subsubsection{Behavioral Cloning (BC) for Mixed-Quality Data ($\beta = 1.0, 0.5, 0.0$)}
Multi-Agent Behavioral Cloning (MA-BC) serves as a fundamental imitation learning baseline, where individual agent policies $\pi_{i}(a_{i}|o_{i})$ are trained via supervised learning to maximize the log-likelihood of demonstrated actions. Following the approach for handling mixed-quality data in DemoDICE, we adapt MA-BC using a coefficient $\beta$. This coefficient balances learning between a limited expert dataset ($D^{E}$) and a larger union dataset ($D^{U} = D^{E} \cup D^{Mix}$), where $D^{Mix}$ contains uncurated, potentially suboptimal trajectories. The objective for the set of agent policies is:
$$J_{BC}(\beta) = -\beta \mathbb{E}_{(o,a) \sim D^{E}} \left[\sum_{i=1}^{N} \log \pi_{i}(a_{i}|o_{i}) \right] - (1-\beta) \mathbb{E}_{(o,a) \sim D^{U}} \left[\sum_{i=1}^{N} \log \pi_{i}(a_{i}|o_{i}) \right]$$
We evaluate three variants:
\begin{itemize}
    \item \textbf{BC ($\beta = 1.0$)}: This variant imitates expert data ($D^{E}$) exclusively. It learns by maximizing the log-likelihood of actions in the expert dataset.
    \item \textbf{BC ($\beta = 0.5$)}: This provides an equal weighting between expert and mixed data imitation. It balances learning from the expert dataset and the larger, mixed-quality union dataset.
    \item \textbf{BC ($\beta = 0.0$)}: This variant imitates the entire mixed dataset ($D^{U}$) without differentiation. It learns by maximizing the log-likelihood of actions in the combined expert and suboptimal dataset.
\end{itemize}
These variants establish a performance benchmark against direct imitation, highlighting how effectively MisoDICE utilizes mixed-quality data compared to standard BC approaches that merely re-weight datasets.

\subsubsection{Offline Multi-Agent Preference Learning (OMAPL)}
This baseline refers to the O-MAPL algorithm, which is an end-to-end preference-based learning framework for cooperative MARL. O-MAPL directly learns policies from preference data (Phase 1 in MisoDICE context) without explicitly modeling a reward function. It leverages the relationship between reward functions and soft Q-functions within the MaxEnt RL framework. The learning process uses a multi-agent value decomposition strategy under the Centralized Training with Decentralized Execution (CTDE) paradigm. For this baseline, the policy is learned directly from the preference dataset generated in Phase 1. Comparing MisoDICE to OMAPL is intended to demonstrate the necessity and benefit of MisoDICE's Phase 2, which involves reward recovery and policy learning using the DICE framework.

\subsubsection{Individual DemoDICE (INDD)}
This baseline applies the single-agent DemoDICE algorithm independently to each agent in the multi-agent system. DemoDICE is an offline imitation learning algorithm designed to learn from both expert and imperfect demonstrations by optimizing a policy within the space of stationary distributions. It achieves this by reducing a potentially unstable minimax optimization problem to a direct convex optimization. In the INDD baseline, each agent $i$ independently solves its own DemoDICE objective. This involves learning a policy $\pi_i$ by focusing on its individual marginal stationary distribution derived from $D_i^E$ (its portion of expert trajectories) and $D_i^U$ (its portion of the union dataset), without explicit joint state-action modeling or coordinated value/policy learning across agents. INDD contrasts MisoDICE's centralized training with value decomposition against a naive multi-agent extension of the single-agent DemoDICE. This comparison is crucial for demonstrating the benefits of MisoDICE's joint learning approach and coordinated credit assignment.

\subsubsection{MARL with Supervised Reward (MARL-SL)}
This baseline represents MisoDICE using Supervised Learning to learn reward recovery from Phase 1. In this approach, a reward function is first learned via supervised learning techniques from the preference data generated in Phase 1. Then, a MARL algorithm is applied to optimize the policy with respect to this learned reward. This can be seen as a two-phase approach, similar to some existing preference-based RL methods. This baseline helps to evaluate the effectiveness of MisoDICE's end-to-end DICE-based reward and policy learning in Phase 2 compared to a more decoupled supervised reward learning followed by MARL.

\subsubsection{MisoDICE-VDN}
This is an ablated version of our MisoDICE algorithm, where the proposed learnable mixer (with a hypernetwork) is replaced by a standard Value Decomposition Network (VDN) mixer. MisoDICE-VDN retains the core framework of MisoDICE, including its adaptation of DICE principles for multi-agent imitation learning from mixed-quality data. However, for value decomposition, it uses a simple VDN approach where the joint value function (or its DICE-equivalent, such as the global Lagrange multipliers $\nu^{tot}$ or advantage function $A_{\nu}^{tot}$ as seen in DICE literature like ComaDICE) is computed as a direct summation of individual agent values: $\mathcal{M}(\{\nu_{i}(o_{i})\}) = \sum_{i} \nu_{i}(o_{i})$. This is a simpler alternative to MisoDICE's architecture, which employs a more expressive learnable mixer using a hypernetwork to generate state-dependent mixing weights, similar in spirit to advanced mixers like QMIX or those described in O-MAPL. By comparing MisoDICE against MisoDICE-VDN, we can directly assess the performance impact and importance of MisoDICE's sophisticated hypernetwork-based mixing architecture.

\subsection{Hyperparameters}


\begin{table}[!hb]
\centering
\caption{Hyperparameters used in MisoDICE experiments.}
\label{table:hyperparameters}
\begin{tabular}{ll}
\toprule
\textbf{Hyperparameter}                        & \textbf{Value}                                      \\
\midrule
Optimizer                             & Adam                                       \\
Learning rate (actor)                 & 3e-4 (for SMAC)      \\
                 & 1e-5 (for MaMujoco)      \\
Learning rate (critic)                & 3e-4                                       \\
Tau ($\tau$, soft update target rate) & 0.005                                      \\
Alpha ($\alpha$)                      & 0.05                                       \\
Gamma ($\gamma$, discount factor)     & 0.99                                       \\
Number of minibatch                   & 512                                        \\
Agent hidden dimension                & 256                                        \\
Mixer hidden dimension                & 64                                         \\
Number of seeds                       & 4                                          \\
Number of episodes for each evaluation step & 32                                         \\
Number of epochs            & 100                                        \\
\bottomrule
\end{tabular}
\end{table}

All experiments are implemented using PyTorch and run in parallel on a single NVIDIA® H100 NVL Tensor Core GPU. Given the large size of the offline datasets for each instance, we compress all datasets into the H5 format using the \texttt{h5py} library.

We develop two versions of our MisoDICE agent to accommodate different domain characteristics: one for continuous action spaces (MaMujoco) and another for discrete action spaces (SMACv1 \& SMACv2). The primary distinction lies in the output distributions of the policy networks. For continuous control tasks, we employ a Gaussian distribution (\texttt{torch.distributions.Normal}) to model the actions. For discrete action environments, we utilize a Categorical distribution (\texttt{torch.distributions.Categorical}). Notably, in the discrete version, the probability for each action of an agent is computed by applying a softmax function over the set of currently available actions for that agent, rather than over the entire action space. Consequently, actions not available to an agent at a given state are assigned a probability of zero. This approach ensures a more accurate calculation of the log-likelihood by the actor.

The hyperparameters used in our experiments are detailed in Table~\ref{table:hyperparameters}.

\subsection{Algorithm}

The MisoDICE framework is implemented in two main phases, detailed in Algorithm \ref{algo:MisoDICE_Phase1_ExpertData_Generalized} and Algorithm \ref{algo:MisoDICE_Phase2_PolicyLearning} respectively. The first phase focuses on identifying expert trajectories from an unlabeled dataset, while the second phase performs multi-agent imitation learning using these identified trajectories.

\begin{algorithm}[h!]
\caption{\textbf{MisoDICE}: Phase 2 – Multi-Agent Imitation Policy Learning}
\label{algo:MisoDICE_Phase2_PolicyLearning}
\KwIn{Expert dataset $\mathcal{D}^{E}$, unlabeled/mixed dataset $\mathcal{D}_{\text{unlabeled}}$ (or $\mathcal{D}^{\text{Mix}}$), discriminator networks $c_k$, $\eta_{\text{disc}}$, local value networks $\nu_k$, value mixing network $\phi_{\text{val}}$, local policy networks $\eta_k$, regularization coefficient $\alpha_{\text{Miso}}$, learning rates}
\KwOut{Optimized local policies $\pi_k(a_k|o_k;\eta_k)$}

Let $\mathcal{D}^{U} = \mathcal{D}^{E} \cup \mathcal{D}_{\text{unlabeled}}$\;

\tcc{1. Train Occupancy Ratio Estimator (Discriminator)}
\For{\textit{a number of discriminator training steps}}{
    Update $c_k, \eta_{\text{disc}}$ by optimizing discriminator loss $J(c, \eta_{\text{disc}})$ using samples from $\mathcal{D}^{E}$ and $\mathcal{D}^{U}$\;
    \tcp*[f]{Ref: MisoDICE Sec 5.3 / Eq.~\ref{eq:discriminator}}
}
Compute log occupancy ratio: $r^*(s,a) = \log\frac{c^*(s,a)}{1 - c^*(s,a)}$\;

\tcc{2. Train MisoDICE Value Functions}
\For{\textit{a number of value function training steps}}{
    Update $\nu_k$ and $\phi_{\text{val}}$ by minimizing $\mathcal{L}(\phi_{\text{val}}, \{\nu_k\})$\;
    \tcp*[f]{Ref: MisoDICE Sec 5.2 / Eq.~\ref{eq.valueobj}}
}
Compute $w^*_{\nu^*}(s,a) = \exp\left(\frac{A^*_{\nu^*}(s,a)}{1+\alpha_{\text{Miso}}} - 1\right)$\;
\tcp*[f]{Ref: MisoDICE Sec 5.2 / Eq.~\ref{v.2}}

\tcc{3. Train MisoDICE Local Policies via Weighted Behavior Cloning}
\For{\textit{a number of policy training steps}}{
    \ForEach{agent $k \in \mathcal{N}$}{
        Update $\eta_k$ by maximizing weighted BC loss $\mathcal{L}_{\text{WBC}}(\eta_k \mid w^*_{\nu^*}(s,a))$ using $\mathcal{D}^{U}$\;
        \tcp*[f]{Ref: MisoDICE Sec 5.4 / Eq.~\ref{eq.localBC}}
    }
}

\Return{$\pi_k(a_k|o_k;\eta_k)$ for $k = 1, \dots, n$}
\end{algorithm}

\subsection{Experiment Setup}
\label{sec:experiment_setup}

This section details the experimental environments and evaluation metrics used to assess the performance of MisoDICE.

\subsubsection{Environments}
To comprehensively evaluate MisoDICE, we utilize standard and challenging benchmarks widely adopted in cooperative multi-agent reinforcement learning (MARL) research. These environments are sourced from the Multi-Agent MuJoCo (MaMujoco) suite \cite{de2020deep}, the StarCraft Multi-Agent Challenge version 1 (SMACv1) \cite{samvelyan2019starcraft}, and its successor, the StarCraft Multi-Agent Challenge version 2 (SMACv2) \cite{Ellis2022smacv2}. MaMujoco provides continuous control tasks where multiple agents, typically different parts of a single robot, must learn to coordinate their actions to achieve a common goal \cite{de2020deep}. These tasks are known for their complex physics-based dynamics. In contrast, SMACv1 and SMACv2 offer discrete action spaces and focus on decentralized micromanagement scenarios derived from the real-time strategy game StarCraft II \cite{samvelyan2019starcraft}. In these environments, each combat unit is controlled by an individual agent, and the team must learn cooperative strategies to defeat opponent units controlled by built-in game AI. SMACv2, in particular, introduces increased stochasticity and diversity compared to SMACv1 by incorporating features like randomized start positions and unit types, presenting a more demanding test for MARL algorithms \cite{Ellis2022smacv2}. The selection of these diverse benchmarks allows for testing MisoDICE's capabilities across different action spaces (continuous for MaMujoco, discrete for SMAC) and varied cooperative challenges.

\subsubsection{Evaluation Metrics}
To rigorously evaluate the performance of MisoDICE and compare it against baseline methods, we use standard evaluation metrics common in MARL research. The primary metrics reported are:
\begin{itemize}
    \item \textbf{Returns}: This metric measures the average cumulative rewards achieved by the agents across multiple evaluation episodes. Higher returns generally indicate better policy performance in maximizing the task-specific objectives. We report the mean and standard deviation of returns.
    \item \textbf{Win Rates}: Specifically for competitive or mixed cooperative-competitive environments like the StarCraft Multi-Agent Challenge (SMAC) tasks, this metric evaluates the percentage of episodes where the team of agents achieves a win condition against opponents. We report the mean and standard deviation of win rates.
\end{itemize}
Performance for each method is typically averaged over multiple random seeds to ensure statistical robustness, and evaluation curves often depict performance trends throughout the training process. For MisoDICE, evaluations were run for 32 episodes for each evaluation step across 4 seeds.

\section{Evaluation}
\subsection{Additional Experimental Results}

In this section, we present the main experimental results evaluating the performance of MisoDICE. Our evaluation aims to demonstrate MisoDICE's effectiveness in learning robust multi-agent policies from mixed-quality demonstrations, particularly focusing on its ability to handle scenarios with limited or unlabeled expert data. We conduct comprehensive experiments across a range of challenging cooperative multi-agent benchmarks, including tasks from the StarCraft Multi-Agent Challenge (SMACv1 and SMACv2) and Multi-Agent MuJoCo (MaMujoco).

The results will show that MisoDICE consistently outperforms various strong baselines across these diverse environments. A key aspect of our evaluation is the comparison of MisoDICE's performance when expert trajectories are identified using two distinct approaches: a traditional rule-based ranking method and, more notably, our proposed LLM-based technique for inferring expert-like segments from entirely unlabeled datasets. This latter approach showcases a novel application of LLMs to significantly enhance multi-agent imitation learning in practical settings where explicit expert annotations are unavailable.

We compare MisoDICE against several relevant baselines. These include standard Behavioral Cloning (BC) variants that imitate different compositions of the available data, an Independent DemoDICE (INDD) baseline applying single-agent imitation learning to each agent, and a supervised MARL approach (MARL-SL) that learns a reward function from preferences before policy optimization. Furthermore, we evaluate against OMAPL, representing the policy learned directly from the preference data generated in the first phase of our framework, to highlight the benefits of MisoDICE's second phase DICE-based policy learning. An ablated version, MisoDICE-VDN, which uses a simpler value decomposition network, is also included to demonstrate the efficacy of our proposed learnable mixing architecture.

The tables \ref{appendix:table:returns} \ref{appendix:table:winrates} and figures \ref{appendix:figure:returns1} \ref{appendix:figure:returns2} \ref{appendix:figure:winrates1} \ref{appendix:figure:winrates2} provide detailed quantitative results in terms of average returns and win rates, alongside learning curves. These results will substantiate MisoDICE's superior performance, its robustness to the quality and quantity of expert data, and the significant potential of leveraging LLMs for expert data identification in the challenging domain of multi-agent imitation learning from suboptimal demonstrations.

\begin{table}[!hb]																			
\caption{Comparison of final average returns for MisoDICE and baseline methods on SMACv1 and SMACv2 tasks when using an LLM-based preference labeling approach.}																			
\label{appendix:table:returns}
\small																			
\centering																			
\resizebox{\textwidth}{!}{																			
\begin{tabular}{c@{4pt}c|ccc|c|c|c|c|c}																			
\toprule																			
\multicolumn{2}{c|}{\multirow{2}{*}{\textbf{}}}	&	\multicolumn{3}{c|}{\textbf{BC}}	&	\multirow{2}{*}{\textbf{OMAPL}}	&	\multirow{2}{*}{\textbf{INDD}}	&	\multirow{2}{*}{\textbf{MARL-SL}}	&	\multirow{2}{*}{\textbf{VDN}}	&	\textbf{MisoDICE}	\\						
\multicolumn{2}{c|}{}	&	($\beta=0.0$)	&	($\beta=0.5$)	&	($\beta=1.0$)& & & & & (ours)	\\ \midrule		
\multicolumn{2}{c|}{2c\_vs\_64zg}	&			8.5 ± 0.1	&	9.7 ± 0.3	&	12.6 ± 0.3	&	12.2 ± 0.4	&	14.6 ± 1.0	&	14.0 ± 1.6	&	12.7 ± 0.6	&	16.4 ± 1.3	\\
\multicolumn{2}{c|}{5m\_vs\_6m}	&			5.0 ± 1.1	&	6.7 ± 0.0	&	6.1 ± 0.1	&	5.7 ± 0.2	&	6.7 ± 0.1	&	6.8 ± 0.1	&	6.2 ± 1.4	&	7.3 ± 0.1	\\
\multicolumn{2}{c|}{6h\_vs\_8z}	&			7.0 ± 0.0	&	7.4 ± 0.0	&	7.2 ± 0.1	&	6.6 ± 0.2	&	7.5 ± 0.2	&	7.8 ± 0.1	&	8.2 ± 0.2	&	8.7 ± 0.2	\\
\multicolumn{2}{c|}{corridor}	&			1.5 ± 0.1	&	1.5 ± 0.2	&	4.3 ± 0.7	&	2.2 ± 1.3	&	4.4 ± 1.2	&	1.8 ± 0.2	&	4.7 ± 0.6	&	5.8 ± 0.8	\\ \midrule
\multicolumn{1}{c}{\multirow{5}{*}{\begin{tabular}[c]{@{}c@{}} \rotatebox{90}{Protoss} \end{tabular}}}	&	5\_vs\_5	&	9.2 ± 0.1	&	11.7 ± 0.5	&	10.2 ± 0.5	&	9.6 ± 1.1	&	10.9 ± 0.1	&	11.6 ± 0.3	&	11.5 ± 0.2	&	12.4 ± 0.5	\\
\multicolumn{1}{c}{}	&	10\_vs\_10	&	10.3 ± 0.6	&	11.8 ± 0.5	&	10.6 ± 0.2	&	10.1 ± 0.9	&	11.0 ± 0.7	&	11.9 ± 0.4	&	12.4 ± 0.2	&	12.9 ± 0.2	\\
\multicolumn{1}{c}{}	&	10\_vs\_11	&	8.2 ± 0.4	&	9.6 ± 0.4	&	8.7 ± 0.3	&	8.5 ± 1.2	&	9.4 ± 0.4	&	9.9 ± 0.3	&	10.4 ± 0.1	&	10.7 ± 0.4	\\
\multicolumn{1}{c}{}	&	20\_vs\_20	&	10.1 ± 0.2	&	10.4 ± 0.5	&	10.5 ± 0.3	&	9.4 ± 0.4	&	11.4 ± 0.5	&	13.1 ± 0.4	&	12.1 ± 0.5	&	13.5 ± 0.5	\\
\multicolumn{1}{c}{}	&	20\_vs\_23	&	8.1 ± 0.2	&	8.6 ± 0.3	&	8.3 ± 0.2	&	7.9 ± 0.3	&	9.6 ± 0.3	&	9.6 ± 0.3	&	10.3 ± 0.4	&	10.6 ± 0.2	\\ \midrule
\multicolumn{1}{c}{\multirow{5}{*}{\begin{tabular}[c]{@{}c@{}} \rotatebox{90}{Terran} \end{tabular}}}	&	5\_vs\_5	&	6.5 ± 0.8	&	8.1 ± 0.5	&	7.1 ± 0.6	&	6.2 ± 0.6	&	7.9 ± 0.5	&	8.1 ± 0.4	&	8.3 ± 1.0	&	9.1 ± 0.3	\\
\multicolumn{1}{c}{}	&	10\_vs\_10	&	6.6 ± 0.3	&	7.4 ± 0.4	&	6.7 ± 0.6	&	6.9 ± 1.1	&	7.6 ± 0.4	&	7.7 ± 0.2	&	8.0 ± 0.4	&	9.1 ± 1.3	\\
\multicolumn{1}{c}{}	&	10\_vs\_11	&	4.7 ± 0.2	&	5.7 ± 0.3	&	5.2 ± 0.3	&	4.2 ± 0.6	&	5.7 ± 0.5	&	5.7 ± 0.4	&	6.0 ± 0.2	&	6.4 ± 0.5	\\
\multicolumn{1}{c}{}	&	20\_vs\_20	&	6.9 ± 0.4	&	7.9 ± 0.8	&	6.7 ± 0.2	&	6.9 ± 0.5	&	8.0 ± 0.5	&	8.6 ± 0.3	&	8.2 ± 0.5	&	9.2 ± 0.6	\\
\multicolumn{1}{c}{}	&	20\_vs\_23	&	4.0 ± 0.3	&	5.1 ± 0.4	&	4.3 ± 0.3	&	4.3 ± 0.4	&	5.1 ± 0.4	&	5.1 ± 0.6	&	5.6 ± 0.3	&	5.6 ± 0.4	\\ \midrule
\multicolumn{1}{c}{\multirow{5}{*}{\begin{tabular}[c]{@{}c@{}} \rotatebox{90}{Zerg} \end{tabular}}}	&	5\_vs\_5	&	5.7 ± 0.5	&	6.6 ± 0.4	&	5.9 ± 0.3	&	6.1 ± 0.5	&	6.4 ± 0.2	&	7.1 ± 0.5	&	7.1 ± 0.9	&	7.5 ± 0.1	\\
\multicolumn{1}{c}{}	&	10\_vs\_10	&	7.3 ± 0.1	&	8.7 ± 0.6	&	7.4 ± 0.7	&	6.8 ± 0.6	&	8.2 ± 0.2	&	9.0 ± 0.4	&	9.7 ± 0.5	&	10.2 ± 0.6	\\
\multicolumn{1}{c}{}	&	10\_vs\_11	&	7.3 ± 0.2	&	8.3 ± 0.4	&	7.3 ± 0.5	&	7.2 ± 0.4	&	8.0 ± 0.2	&	8.8 ± 0.4	&	9.1 ± 0.2	&	9.4 ± 0.3	\\
\multicolumn{1}{c}{}	&	20\_vs\_20	&	7.4 ± 0.6	&	9.0 ± 0.5	&	7.7 ± 0.2	&	6.9 ± 0.5	&	8.3 ± 0.4	&	8.8 ± 0.6	&	9.0 ± 0.5	&	10.2 ± 0.6	\\
\multicolumn{1}{c}{}	&	20\_vs\_23	&	7.1 ± 0.3	&	7.9 ± 0.3	&	7.0 ± 0.2	&	7.1 ± 0.4	&	8.2 ± 0.4	&	8.8 ± 0.2	&	8.7 ± 0.5	&	9.5 ± 0.2	\\
\bottomrule																			
\end{tabular}%
}																			
\end{table}

\begin{table}[!hb]																			
\caption{Comparison of final average win rates for MisoDICE and baseline methods on SMACv1 and SMACv2 tasks when using an LLM-based preference labeling approach.}																			
\label{appendix:table:winrates}
\small																			
\centering																			
\resizebox{\textwidth}{!}{																			
\begin{tabular}{c@{4pt}c|ccc|c|c|c|c|c}																			
\toprule																			
\multicolumn{2}{c|}{\multirow{2}{*}{\textbf{}}}	&	\multicolumn{3}{c|}{\textbf{BC}}	&	\multirow{2}{*}{\textbf{OMAPL}}	&	\multirow{2}{*}{\textbf{INDD}}	&	\multirow{2}{*}{\textbf{MARL-SL}}	&	\multirow{2}{*}{\textbf{VDN}}	&	\textbf{MisoDICE}	\\						
\multicolumn{2}{c|}{}	&	($\beta=0.0$)	&	($\beta=0.5$)	&	($\beta=1.0$)& & & & & (ours)	\\ \midrule		
\multicolumn{2}{c|}{2c\_vs\_64zg}	&			0.2 ± 0.2	&	0.5 ± 0.3	&	8.9 ± 2.9	&	3.9 ± 3.4	&	11.7 ± 5.5	&	10.6 ± 6.0	&	2.7 ± 1.5	&	13.0 ± 9.0	\\
\multicolumn{2}{c|}{5m\_vs\_6m}	&			0.2 ± 0.4	&	0.9 ± 0.6	&	0.1 ± 0.1	&	0.0 ± 0.0	&	0.2 ± 0.2	&	1.1 ± 0.8	&	0.9 ± 0.9	&	1.2 ± 0.5	\\
\multicolumn{2}{c|}{6h\_vs\_8z}	&			0.2 ± 0.2	&	0.0 ± 0.0	&	0.2 ± 0.3	&	0.0 ± 0.0	&	0.1 ± 0.1	&	1.0 ± 0.6	&	1.2 ± 0.1	&	1.1 ± 0.8	\\
\multicolumn{2}{c|}{corridor}	&			0.1 ± 0.1	&	0.6 ± 0.7	&	0.3 ± 0.4	&	0.0 ± 0.0	&	0.1 ± 0.1	&	0.9 ± 0.6	&	0.7 ± 0.7	&	1.4 ± 0.6	\\ \midrule
\multicolumn{1}{c}{\multirow{5}{*}{\begin{tabular}[c]{@{}c@{}} \rotatebox{90}{Protoss} \end{tabular}}}	&	5\_vs\_5	&	13.8 ± 2.7	&	17.5 ± 3.8	&	14.2 ± 2.4	&	14.1 ± 10.2	&	12.4 ± 3.1	&	15.6 ± 4.5	&	10.8 ± 1.6	&	20.7 ± 0.9	\\
\multicolumn{1}{c}{}	&	10\_vs\_10	&	12.1 ± 2.3	&	12.7 ± 0.9	&	11.3 ± 2.9	&	11.7 ± 3.4	&	8.9 ± 1.4	&	11.8 ± 4.3	&	9.5 ± 1.7	&	14.1 ± 2.1	\\
\multicolumn{1}{c}{}	&	10\_vs\_11	&	2.1 ± 0.9	&	3.5 ± 2.0	&	1.8 ± 0.6	&	0.8 ± 1.4	&	2.0 ± 0.4	&	3.5 ± 0.3	&	2.9 ± 0.9	&	4.7 ± 0.3	\\
\multicolumn{1}{c}{}	&	20\_vs\_20	&	4.5 ± 1.8	&	3.4 ± 1.9	&	7.0 ± 2.5	&	3.1 ± 3.8	&	5.2 ± 1.3	&	8.6 ± 2.0	&	5.2 ± 1.9	&	11.0 ± 3.2	\\
\multicolumn{1}{c}{}	&	20\_vs\_23	&	1.5 ± 0.8	&	0.8 ± 0.4	&	1.8 ± 0.8	&	0.8 ± 1.4	&	2.4 ± 0.9	&	2.0 ± 0.9	&	2.6 ± 1.0	&	3.8 ± 2.0	\\ \midrule
\multicolumn{1}{c}{\multirow{5}{*}{\begin{tabular}[c]{@{}c@{}} \rotatebox{90}{Terran} \end{tabular}}}	&	5\_vs\_5	&	10.1 ± 2.3	&	12.8 ± 1.3	&	13.7 ± 3.5	&	10.2 ± 4.6	&	10.6 ± 1.3	&	9.7 ± 1.9	&	10.4 ± 3.2	&	14.2 ± 3.1	\\
\multicolumn{1}{c}{}	&	10\_vs\_10	&	7.3 ± 2.0	&	7.7 ± 2.2	&	7.9 ± 2.4	&	9.4 ± 3.8	&	8.0 ± 2.4	&	8.2 ± 1.6	&	7.0 ± 1.9	&	12.0 ± 1.7	\\
\multicolumn{1}{c}{}	&	10\_vs\_11	&	1.9 ± 0.8	&	3.1 ± 1.4	&	2.5 ± 1.1	&	0.8 ± 1.4	&	1.8 ± 0.5	&	2.6 ± 1.2	&	2.4 ± 0.8	&	4.2 ± 1.6	\\
\multicolumn{1}{c}{}	&	20\_vs\_20	&	4.5 ± 1.3	&	7.4 ± 2.5	&	4.4 ± 0.6	&	4.7 ± 3.5	&	5.8 ± 0.7	&	7.1 ± 1.4	&	4.5 ± 1.5	&	8.8 ± 1.5	\\
\multicolumn{1}{c}{}	&	20\_vs\_23	&	0.6 ± 1.0	&	1.2 ± 0.9	&	0.7 ± 0.7	&	0.0 ± 0.0	&	0.8 ± 0.4	&	1.3 ± 0.7	&	1.3 ± 1.2	&	1.8 ± 1.4	\\ \midrule
\multicolumn{1}{c}{\multirow{5}{*}{\begin{tabular}[c]{@{}c@{}} \rotatebox{90}{Zerg} \end{tabular}}}	&	5\_vs\_5	&	6.2 ± 0.8	&	5.6 ± 1.1	&	5.9 ± 0.5	&	6.2 ± 5.8	&	6.0 ± 1.4	&	6.7 ± 1.1	&	7.0 ± 1.8	&	7.9 ± 1.0	\\
\multicolumn{1}{c}{}	&	10\_vs\_10	&	4.5 ± 1.4	&	6.9 ± 1.8	&	5.0 ± 1.8	&	3.9 ± 3.4	&	4.6 ± 1.0	&	5.7 ± 1.2	&	5.2 ± 0.6	&	8.9 ± 2.1	\\
\multicolumn{1}{c}{}	&	10\_vs\_11	&	5.7 ± 1.9	&	5.8 ± 1.9	&	5.2 ± 1.3	&	2.3 ± 1.4	&	4.7 ± 1.6	&	6.0 ± 1.9	&	3.4 ± 0.7	&	6.8 ± 1.1	\\
\multicolumn{1}{c}{}	&	20\_vs\_20	&	0.8 ± 0.9	&	1.5 ± 0.6	&	1.3 ± 0.8	&	0.0 ± 0.0	&	0.2 ± 0.2	&	1.2 ± 0.5	&	0.6 ± 0.2	&	2.4 ± 0.6	\\
\multicolumn{1}{c}{}	&	20\_vs\_23	&	1.2 ± 0.9	&	0.7 ± 0.3	&	1.2 ± 0.6	&	1.6 ± 1.6	&	1.4 ± 0.5	&	1.9 ± 0.9	&	1.8 ± 1.1	&	2.7 ± 0.4	\\
\bottomrule																			
\end{tabular}%
}																			
\end{table}

\begin{figure}[!hb]
\small
\centering
\showlegend[0.95]{3.7}{5.8}{2.3}{1.5}{SMACv1_Returns_LLM-based} \vspace{0.15cm}
\\
\showinstance[0.195]{0}{2c_vs_64zg_llm_returns}{2c\_vs\_64zg}
\showinstance[0.195]{0}{5m_vs_6m_llm_returns}{5m\_vs\_6m}
\showinstance[0.195]{0}{6h_vs_8z_llm_returns}{6h\_vs\_8z}
\showinstance[0.195]{0}{corridor_llm_returns}{corridor}
\caption{Learning curves of the average return for MisoDICE and baseline methods on SMACv1 tasks when using an LLM-based preference labeling approach.}
\label{appendix:figure:returns1}
\end{figure}

\begin{figure}[!hb]
\centering
\showlegend[0.95]{3.7}{5.8}{2.3}{1.5}{SMACv1_Returns_LLM-based} \vspace{0.15cm}
\\
\begin{tabular}{@{}p{10pt}@{}c@{}@{}c@{}@{}c@{}@{}c@{}@{}c@{}}
\begin{tabular}[c]{@{}c@{}} \rotatebox{90}{Protoss} \end{tabular} &
\showinstance[0.195]{0}{protoss_5_vs_5_llm_returns}{} &
\showinstance[0.195]{0}{protoss_10_vs_10_llm_returns}{} &
\showinstance[0.195]{0}{protoss_10_vs_11_llm_returns}{} &
\showinstance[0.195]{0}{protoss_20_vs_20_llm_returns}{} &
\showinstance[0.195]{0}{protoss_20_vs_23_llm_returns}{}
\\
\begin{tabular}[c]{@{}c@{}} \rotatebox{90}{Terran} \end{tabular} &
\showinstance[0.195]{0}{terran_5_vs_5_llm_returns}{} &
\showinstance[0.195]{0}{terran_10_vs_10_llm_returns}{} &
\showinstance[0.195]{0}{terran_10_vs_11_llm_returns}{} &
\showinstance[0.195]{0}{terran_20_vs_20_llm_returns}{} &
\showinstance[0.195]{0}{terran_20_vs_23_llm_returns}{}
\\
\begin{tabular}[c]{@{}c@{}} \rotatebox{90}{Zerg} \end{tabular} &
\showinstance[0.195]{0}{zerg_5_vs_5_llm_returns}{} &
\showinstance[0.195]{0}{zerg_10_vs_10_llm_returns}{} &
\showinstance[0.195]{0}{zerg_10_vs_11_llm_returns}{} &
\showinstance[0.195]{0}{zerg_20_vs_20_llm_returns}{} &
\showinstance[0.195]{0}{zerg_20_vs_23_llm_returns}{}
\\
& 5\_vs\_5 & 10\_vs\_10 & 10\_vs\_11 & 20\_vs\_20 & 20\_vs\_23
\end{tabular}
\caption{Learning curves of the average return for MisoDICE and baseline methods on SMACv2 tasks when using an LLM-based preference labeling approach.}
\label{appendix:figure:returns2}
\end{figure}

\begin{figure}[!hb]
\small
\centering
\showlegend[0.95]{3.7}{5.8}{2.3}{1.5}{SMACv1_Returns_LLM-based} \vspace{0.15cm}
\\
\showinstance[0.195]{0}{2c_vs_64zg_llm_winrates}{2c\_vs\_64zg}
\showinstance[0.195]{0}{5m_vs_6m_llm_winrates}{5m\_vs\_6m}
\showinstance[0.195]{0}{6h_vs_8z_llm_winrates}{6h\_vs\_8z}
\showinstance[0.195]{0}{corridor_llm_winrates}{corridor}
\caption{Learning curves of the average win rates for MisoDICE and baseline methods on SMACv1 tasks when using an LLM-based preference labeling approach.}
\label{appendix:figure:winrates1}
\end{figure}

\begin{figure}[!hb]
\centering
\showlegend[0.95]{3.7}{5.8}{2.3}{1.5}{SMACv1_Returns_LLM-based} \vspace{0.15cm}
\\
\begin{tabular}{@{}p{10pt}@{}c@{}@{}c@{}@{}c@{}@{}c@{}@{}c@{}}
\begin{tabular}[c]{@{}c@{}} \rotatebox{90}{Protoss} \end{tabular} &
\showinstance[0.195]{0}{protoss_5_vs_5_llm_winrates}{} &
\showinstance[0.195]{0}{protoss_10_vs_10_llm_winrates}{} &
\showinstance[0.195]{0}{protoss_10_vs_11_llm_winrates}{} &
\showinstance[0.195]{0}{protoss_20_vs_20_llm_winrates}{} &
\showinstance[0.195]{0}{protoss_20_vs_23_llm_winrates}{}
\\
\begin{tabular}[c]{@{}c@{}} \rotatebox{90}{Terran} \end{tabular} &
\showinstance[0.195]{0}{terran_5_vs_5_llm_winrates}{} &
\showinstance[0.195]{0}{terran_10_vs_10_llm_winrates}{} &
\showinstance[0.195]{0}{terran_10_vs_11_llm_winrates}{} &
\showinstance[0.195]{0}{terran_20_vs_20_llm_winrates}{} &
\showinstance[0.195]{0}{terran_20_vs_23_llm_winrates}{}
\\
\begin{tabular}[c]{@{}c@{}} \rotatebox{90}{Zerg} \end{tabular} &
\showinstance[0.195]{0}{zerg_5_vs_5_llm_winrates}{} &
\showinstance[0.195]{0}{zerg_10_vs_10_llm_winrates}{} &
\showinstance[0.195]{0}{zerg_10_vs_11_llm_winrates}{} &
\showinstance[0.195]{0}{zerg_20_vs_20_llm_winrates}{} &
\showinstance[0.195]{0}{zerg_20_vs_23_llm_winrates}{}
\\
& 5\_vs\_5 & 10\_vs\_10 & 10\_vs\_11 & 20\_vs\_20 & 20\_vs\_23
\end{tabular}
\caption{Learning curves of the average win rates for MisoDICE and baseline methods on SMACv2 tasks when using an LLM-based preference labeling approach.}
\label{appendix:figure:winrates2}
\end{figure}

\subsection{Ablation Study 1: Impact of the Number of Identified Expert Trajectories (Top-k)}
\label{sec:ablation_topk}

In this ablation study, we investigate the sensitivity of MisoDICE to the number of trajectories selected to form the expert dataset, $\mathcal{D}^E$. As outlined in Algorithm \ref{algo:MisoDICE_Phase1_ExpertData_Generalized}, the first phase of MisoDICE involves ranking all trajectories in the unlabeled dataset $\mathcal{D}_{unlabeled}$ (or $\mathcal{D}^{Mix}$) using a preference-based model (O-MAPL) trained on LLM-generated (or rule-based) preferences. From this ranked list, the top-$K_{expert}$ trajectories are selected to constitute the expert dataset $\mathcal{D}^E$, which is then used alongside the full mixed dataset $\mathcal{D}^U$ in the second phase for policy learning.

The choice of $K_{expert}$ (referred to as "top-k") is critical. A smaller $K_{expert}$ might yield a higher-purity expert dataset but could be too limited, potentially missing some useful expert behaviors. Conversely, a larger $K_{expert}$ might include a more diverse set of expert behaviors but risks incorporating more suboptimal trajectories, thereby degrading the quality of $\mathcal{D}^E$. This study examines how varying $K_{expert}$ affects the final performance of MisoDICE. We evaluate MisoDICE's performance (in terms of returns and win rates) across a range of $K_{expert}$ values, specifically $K_{expert} \in \{50, 200, 400, 800, 1200\}$, using the LLM-based trajectory ranking method. The results, detailed in Table \ref{appendix:ablation1:table:returns} and Table \ref{appendix:ablation1:table:winrates} (and visualized in Figure \ref{appendix:ablation1:figure:returns1} \ref{appendix:ablation1:figure:returns2} and Figure \ref{appendix:ablation1:figure:winrates1} \ref{appendix:ablation1:figure:winrates2}), provide insights into the robustness of MisoDICE to the size and composition of the automatically curated expert dataset. This helps understand the trade-offs involved in selecting $K_{expert}$ and its impact on learning effective policies from mixed-quality demonstrations.

\begin{table}[!hb]													
\caption{Ablation Study 1: Final average returns for MisoDICE on SMACv1 and SMACv2 tasks, varying the number of top-k expert trajectories ($K_{\text{expert}}$) selected via LLM-based preference labeling.}
\label{appendix:ablation1:table:returns}
\small													
\centering													
\begin{tabular}{c@{4pt}c|c|c|c|c|c}													
\toprule													
\multicolumn{2}{c|}{\multirow{2}{*}{\textbf{}}}	&	\multicolumn{1}{c|}{\textbf{topk=50}}	&	\multicolumn{1}{c|}{\textbf{topk=200}}	&	\multicolumn{1}{c|}{\textbf{topk=400}}	&	\multicolumn{1}{c|}{\textbf{topk=800}}	&	\multicolumn{1}{c}{\textbf{topk=1200}}	\\ \midrule		
\multicolumn{2}{c|}{2c\_vs\_64zg}	&			11.8 ± 1.4	&	16.4 ± 1.3	&	9.7 ± 0.1	&	10.8 ± 0.7	&	10.4 ± 0.5	\\
\multicolumn{2}{c|}{5m\_vs\_6m}	&			6.9 ± 0.4	&	7.3 ± 0.1	&	6.8 ± 0.2	&	6.5 ± 0.1	&	6.4 ± 0.2	\\
\multicolumn{2}{c|}{6h\_vs\_8z}	&			7.9 ± 0.1	&	8.7 ± 0.2	&	7.8 ± 0.2	&	7.5 ± 0.1	&	7.3 ± 0.1	\\
\multicolumn{2}{c|}{corridor}	&			3.1 ± 1.0	&	5.8 ± 0.8	&	1.9 ± 0.1	&	1.7 ± 0.2	&	1.7 ± 0.1	\\ \midrule
\multicolumn{1}{c}{\multirow{5}{*}{\begin{tabular}[c]{@{}c@{}} \rotatebox{90}{Protoss} \end{tabular}}}	&	5\_vs\_5	&	11.9 ± 0.2	&	12.4 ± 0.5	&	11.3 ± 0.2	&	10.9 ± 0.2	&	10.9 ± 0.5	\\
\multicolumn{1}{c}{}	&	10\_vs\_10	&	11.8 ± 0.1	&	12.9 ± 0.2	&	12.1 ± 0.3	&	11.5 ± 0.3	&	11.2 ± 0.4	\\
\multicolumn{1}{c}{}	&	10\_vs\_11	&	9.6 ± 0.1	&	10.7 ± 0.4	&	9.5 ± 0.3	&	9.3 ± 0.5	&	9.2 ± 0.3	\\
\multicolumn{1}{c}{}	&	20\_vs\_20	&	11.4 ± 0.5	&	13.5 ± 0.5	&	12.1 ± 0.1	&	11.4 ± 0.3	&	11.2 ± 0.3	\\
\multicolumn{1}{c}{}	&	20\_vs\_23	&	9.8 ± 0.3	&	10.6 ± 0.2	&	10.0 ± 0.2	&	9.3 ± 0.2	&	8.7 ± 0.4	\\ \midrule
\multicolumn{1}{c}{\multirow{5}{*}{\begin{tabular}[c]{@{}c@{}} \rotatebox{90}{Terran} \end{tabular}}}	&	5\_vs\_5	&	8.0 ± 0.3	&	9.1 ± 0.3	&	7.3 ± 0.4	&	7.7 ± 0.4	&	7.0 ± 0.5	\\
\multicolumn{1}{c}{}	&	10\_vs\_10	&	8.4 ± 0.4	&	9.1 ± 1.3	&	8.1 ± 0.5	&	7.6 ± 0.7	&	7.0 ± 0.4	\\
\multicolumn{1}{c}{}	&	10\_vs\_11	&	5.5 ± 0.1	&	6.4 ± 0.5	&	6.1 ± 0.2	&	5.7 ± 0.4	&	5.4 ± 0.5	\\
\multicolumn{1}{c}{}	&	20\_vs\_20	&	8.0 ± 0.4	&	9.2 ± 0.6	&	8.2 ± 0.6	&	8.0 ± 0.5	&	7.9 ± 0.5	\\
\multicolumn{1}{c}{}	&	20\_vs\_23	&	5.1 ± 0.2	&	5.6 ± 0.4	&	5.1 ± 0.2	&	5.2 ± 0.2	&	4.8 ± 0.2	\\ \midrule
\multicolumn{1}{c}{\multirow{5}{*}{\begin{tabular}[c]{@{}c@{}} \rotatebox{90}{Zerg} \end{tabular}}}	&	5\_vs\_5	&	7.4 ± 0.4	&	7.5 ± 0.1	&	6.9 ± 0.5	&	6.4 ± 0.5	&	6.2 ± 0.5	\\
\multicolumn{1}{c}{}	&	10\_vs\_10	&	9.2 ± 0.3	&	10.2 ± 0.6	&	9.0 ± 0.3	&	8.8 ± 0.2	&	8.3 ± 0.2	\\
\multicolumn{1}{c}{}	&	10\_vs\_11	&	8.5 ± 0.4	&	9.4 ± 0.3	&	8.5 ± 0.0	&	8.2 ± 0.2	&	7.6 ± 0.3	\\
\multicolumn{1}{c}{}	&	20\_vs\_20	&	8.9 ± 0.1	&	10.2 ± 0.6	&	8.8 ± 0.9	&	8.3 ± 0.5	&	8.1 ± 0.6	\\
\multicolumn{1}{c}{}	&	20\_vs\_23	&	8.4 ± 0.4	&	9.5 ± 0.2	&	8.8 ± 0.2	&	8.5 ± 0.2	&	8.0 ± 0.1	\\
\bottomrule													
\end{tabular}													
\end{table}

\begin{table}[!hb]													
\caption{Ablation Study 1: Final average win rates for MisoDICE on SMACv1 and SMACv2 tasks, varying the number of top-k expert trajectories ($K_{\text{expert}}$) selected via LLM-based preference labeling.}
\label{appendix:ablation1:table:winrates}
\small													
\centering													
\begin{tabular}{c@{4pt}c|c|c|c|c|c}													
\toprule													
\multicolumn{2}{c|}{\multirow{2}{*}{\textbf{}}}	&	\multicolumn{1}{c|}{\textbf{topk=50}}	&	\multicolumn{1}{c|}{\textbf{topk=200}}	&	\multicolumn{1}{c|}{\textbf{topk=400}}	&	\multicolumn{1}{c|}{\textbf{topk=800}}	&	\multicolumn{1}{c}{\textbf{topk=1200}}	\\ \midrule		
\multicolumn{2}{c|}{2c\_vs\_64zg}	&			5.6 ± 4.9	&	13.0 ± 9.0	&	1.6 ± 0.7	&	1.7 ± 0.8	&	0.6 ± 0.7	\\
\multicolumn{2}{c|}{5m\_vs\_6m}	&			0.0 ± 0.0	&	1.2 ± 0.5	&	0.0 ± 0.0	&	0.0 ± 0.0	&	0.0 ± 0.0	\\
\multicolumn{2}{c|}{6h\_vs\_8z}	&			0.0 ± 0.0	&	1.1 ± 0.8	&	0.0 ± 0.0	&	0.0 ± 0.0	&	0.0 ± 0.0	\\
\multicolumn{2}{c|}{corridor}	&			0.0 ± 0.0	&	1.4 ± 0.6	&	0.0 ± 0.0	&	0.0 ± 0.0	&	0.0 ± 0.0	\\ \midrule
\multicolumn{1}{c}{\multirow{5}{*}{\begin{tabular}[c]{@{}c@{}} \rotatebox{90}{Protoss} \end{tabular}}}	&	5\_vs\_5	&	8.9 ± 1.9	&	20.7 ± 0.9	&	10.1 ± 2.8	&	12.5 ± 1.7	&	9.4 ± 1.7	\\
\multicolumn{1}{c}{}	&	10\_vs\_10	&	5.3 ± 1.8	&	14.1 ± 2.1	&	11.1 ± 3.3	&	10.0 ± 3.0	&	5.7 ± 1.6	\\
\multicolumn{1}{c}{}	&	10\_vs\_11	&	0.3 ± 0.4	&	4.7 ± 0.3	&	2.2 ± 1.2	&	2.0 ± 1.0	&	1.6 ± 0.6	\\
\multicolumn{1}{c}{}	&	20\_vs\_20	&	2.8 ± 0.8	&	11.0 ± 3.2	&	3.6 ± 1.4	&	6.1 ± 1.9	&	3.4 ± 0.9	\\
\multicolumn{1}{c}{}	&	20\_vs\_23	&	0.6 ± 0.4	&	3.8 ± 2.0	&	2.2 ± 0.8	&	1.8 ± 1.2	&	1.0 ± 0.4	\\ \midrule
\multicolumn{1}{c}{\multirow{5}{*}{\begin{tabular}[c]{@{}c@{}} \rotatebox{90}{Terran} \end{tabular}}}	&	5\_vs\_5	&	8.5 ± 1.9	&	14.2 ± 3.1	&	9.8 ± 2.1	&	10.4 ± 2.1	&	5.1 ± 1.1	\\
\multicolumn{1}{c}{}	&	10\_vs\_10	&	7.7 ± 0.6	&	12.0 ± 1.7	&	7.9 ± 2.1	&	8.1 ± 3.8	&	5.2 ± 0.3	\\
\multicolumn{1}{c}{}	&	10\_vs\_11	&	0.8 ± 0.6	&	4.2 ± 1.6	&	2.2 ± 1.2	&	1.4 ± 0.9	&	0.6 ± 0.4	\\
\multicolumn{1}{c}{}	&	20\_vs\_20	&	3.7 ± 0.4	&	8.8 ± 1.5	&	4.3 ± 0.6	&	6.0 ± 1.6	&	5.2 ± 1.2	\\
\multicolumn{1}{c}{}	&	20\_vs\_23	&	0.5 ± 0.3	&	1.8 ± 1.4	&	0.6 ± 0.6	&	0.3 ± 0.4	&	0.1 ± 0.1	\\ \midrule
\multicolumn{1}{c}{\multirow{5}{*}{\begin{tabular}[c]{@{}c@{}} \rotatebox{90}{Zerg} \end{tabular}}}	&	5\_vs\_5	&	5.6 ± 1.3	&	7.9 ± 1.0	&	5.3 ± 1.1	&	4.7 ± 1.6	&	3.8 ± 0.6	\\
\multicolumn{1}{c}{}	&	10\_vs\_10	&	3.4 ± 1.7	&	8.9 ± 2.1	&	5.6 ± 1.5	&	6.7 ± 1.1	&	3.5 ± 1.0	\\
\multicolumn{1}{c}{}	&	10\_vs\_11	&	2.5 ± 1.0	&	6.8 ± 1.1	&	4.6 ± 1.6	&	4.5 ± 1.3	&	2.8 ± 0.6	\\
\multicolumn{1}{c}{}	&	20\_vs\_20	&	0.3 ± 0.3	&	2.4 ± 0.6	&	0.3 ± 0.3	&	0.4 ± 0.5	&	0.3 ± 0.3	\\
\multicolumn{1}{c}{}	&	20\_vs\_23	&	0.6 ± 0.3	&	2.7 ± 0.4	&	1.4 ± 0.7	&	1.1 ± 1.0	&	1.2 ± 0.7	\\
\bottomrule													
\end{tabular}													
\end{table}

\begin{figure}[!hb]
\small
\centering
\showinstance[0.25]{0}{protoss_llm_returns_box}{protoss}
\showinstance[0.25]{0}{terran_llm_returns_box}{terran}
\showinstance[0.25]{0}{zerg_llm_returns_box}{zerg}
\caption{Ablation Study 1: Box plots of final average returns for MisoDICE on SMACv2 tasks, showing the impact of varying the number of top-k expert trajectories ($K_{\text{expert}}$) selected via LLM-based preference labeling. Results are presented for Protoss, Terran, and Zerg game races.}
\label{appendix:ablation1:figure:returns1}
\end{figure}

\begin{figure}[!hb]
\small
\centering
\showinstance[0.25]{0}{protoss_llm_winrates_box}{protoss}
\showinstance[0.25]{0}{terran_llm_winrates_box}{terran}
\showinstance[0.25]{0}{zerg_llm_winrates_box}{zerg}
\caption{Ablation Study 1: Box plots of final average win rates for MisoDICE on SMACv2 tasks, showing the impact of varying the number of top-k expert trajectories ($K_{\text{expert}}$) selected via LLM-based preference labeling. Results are presented for Protoss, Terran, and Zerg game races.}
\label{appendix:ablation1:figure:winrates1}
\end{figure}

\begin{figure}[!hb]
\centering
\showlegend[0.6]{18}{16}{0}{0}{Ablation_SMACv2_Returns_Rule-based} \vspace{0.2cm}
\\
\begin{tabular}{@{}p{10pt}@{}c@{}@{}c@{}@{}c@{}@{}c@{}@{}c@{}}
\begin{tabular}[c]{@{}c@{}} \rotatebox{90}{Protoss} \end{tabular} &
\showinstance[0.195]{0}{ablation_protoss_5_vs_5_llm_returns}{} &
\showinstance[0.195]{0}{ablation_protoss_10_vs_10_llm_returns}{} &
\showinstance[0.195]{0}{ablation_protoss_10_vs_11_llm_returns}{} &
\showinstance[0.195]{0}{ablation_protoss_20_vs_20_llm_returns}{} &
\showinstance[0.195]{0}{ablation_protoss_20_vs_23_llm_returns}{}
\\
\begin{tabular}[c]{@{}c@{}} \rotatebox{90}{Terran} \end{tabular} &
\showinstance[0.195]{0}{ablation_terran_5_vs_5_llm_returns}{} &
\showinstance[0.195]{0}{ablation_terran_10_vs_10_llm_returns}{} &
\showinstance[0.195]{0}{ablation_terran_10_vs_11_llm_returns}{} &
\showinstance[0.195]{0}{ablation_terran_20_vs_20_llm_returns}{} &
\showinstance[0.195]{0}{ablation_terran_20_vs_23_llm_returns}{}
\\
\begin{tabular}[c]{@{}c@{}} \rotatebox{90}{Zerg} \end{tabular} &
\showinstance[0.195]{0}{ablation_zerg_5_vs_5_llm_returns}{} &
\showinstance[0.195]{0}{ablation_zerg_10_vs_10_llm_returns}{} &
\showinstance[0.195]{0}{ablation_zerg_10_vs_11_llm_returns}{} &
\showinstance[0.195]{0}{ablation_zerg_20_vs_20_llm_returns}{} &
\showinstance[0.195]{0}{ablation_zerg_20_vs_23_llm_returns}{}
\\
& 5\_vs\_5 & 10\_vs\_10 & 10\_vs\_11 & 20\_vs\_20 & 20\_vs\_23
\end{tabular}
\caption{Ablation Study 1: Learning curves of average returns for MisoDICE on SMACv2 tasks, illustrating the impact of varying the number of top-k expert trajectories ($K_{\text{expert}}$) selected via LLM-based preference labeling. Each curve corresponds to a different $K_{\text{expert}}$ value from the set \{50, 200, 400, 800, 1200\}.}
\label{appendix:ablation1:figure:returns2}
\end{figure}

\begin{figure}[!hb]
\centering
\showlegend[0.6]{18}{16}{0}{0}{Ablation_SMACv2_Returns_Rule-based} \vspace{0.2cm}
\\
\begin{tabular}{@{}p{10pt}@{}c@{}@{}c@{}@{}c@{}@{}c@{}@{}c@{}}
\begin{tabular}[c]{@{}c@{}} \rotatebox{90}{Protoss} \end{tabular} &
\showinstance[0.195]{0}{ablation_protoss_5_vs_5_llm_winrates}{} &
\showinstance[0.195]{0}{ablation_protoss_10_vs_10_llm_winrates}{} &
\showinstance[0.195]{0}{ablation_protoss_10_vs_11_llm_winrates}{} &
\showinstance[0.195]{0}{ablation_protoss_20_vs_20_llm_winrates}{} &
\showinstance[0.195]{0}{ablation_protoss_20_vs_23_llm_winrates}{}
\\
\begin{tabular}[c]{@{}c@{}} \rotatebox{90}{Terran} \end{tabular} &
\showinstance[0.195]{0}{ablation_terran_5_vs_5_llm_winrates}{} &
\showinstance[0.195]{0}{ablation_terran_10_vs_10_llm_winrates}{} &
\showinstance[0.195]{0}{ablation_terran_10_vs_11_llm_winrates}{} &
\showinstance[0.195]{0}{ablation_terran_20_vs_20_llm_winrates}{} &
\showinstance[0.195]{0}{ablation_terran_20_vs_23_llm_winrates}{}
\\
\begin{tabular}[c]{@{}c@{}} \rotatebox{90}{Zerg} \end{tabular} &
\showinstance[0.195]{0}{ablation_zerg_5_vs_5_llm_winrates}{} &
\showinstance[0.195]{0}{ablation_zerg_10_vs_10_llm_winrates}{} &
\showinstance[0.195]{0}{ablation_zerg_10_vs_11_llm_winrates}{} &
\showinstance[0.195]{0}{ablation_zerg_20_vs_20_llm_winrates}{} &
\showinstance[0.195]{0}{ablation_zerg_20_vs_23_llm_winrates}{}
\\
& 5\_vs\_5 & 10\_vs\_10 & 10\_vs\_11 & 20\_vs\_20 & 20\_vs\_23
\end{tabular}
\caption{Ablation Study 1: Learning curves of average win rates for MisoDICE on SMACv2 tasks, illustrating the impact of varying the number of top-k expert trajectories ($K_{\text{expert}}$) selected via LLM-based preference labeling. Each curve corresponds to a different $K_{\text{expert}}$ value from the set \{50, 200, 400, 800, 1200\}.}
\label{appendix:ablation1:figure:winrates2}
\end{figure}

\subsection{Ablation Study 2: Impact of the Suboptimal Data Influence Hyperparameter}
\label{sec:ablation_alpha}

In this ablation study, we examine the influence of the hyperparameter $\alpha$ on the performance of MisoDICE. As defined in our learning objective (Equation~\ref{eq.obj}), $\alpha$ plays a crucial role by controlling the relative importance of the union dataset $\mathcal{D}^U$ (composed of both expert $\mathcal{D}^E$ and mixed-quality $\mathcal{D}^{Mix}$ trajectories) compared to the purely expert dataset $\mathcal{D}^E$. Specifically, the term $\alpha D_{KL}(\rho_{tot}^{\pi}||\rho_{tot}^{U})$ encourages the learned policy $\pi_{tot}$ to remain close to the overall distribution of the union dataset, while the $D_{KL}(\rho_{tot}^{\pi}||\rho_{tot}^{E})$ term pushes the policy to mimic the identified expert behaviors.

The choice of $\alpha$ therefore dictates how MisoDICE balances learning from the potentially limited but higher-quality expert demonstrations against leveraging the broader, though more suboptimal, data distribution of the union dataset. A small $\alpha$ would emphasize strict imitation of $\mathcal{D}^E$, potentially suffering if $\mathcal{D}^E$ is very scarce or narrowly focused. Conversely, a large $\alpha$ would regularize the policy more strongly towards the mixed-quality data in $\mathcal{D}^U$, which could be beneficial for broader state-action coverage but risks learning suboptimal behaviors if the quality of $\mathcal{D}^U$ is low overall.

This study investigates MisoDICE's sensitivity to different values of $\alpha \in $ \{0.00, 0.05, 0.10, 0.50, 1.00, 10.0\}. By evaluating the algorithm's performance (returns and win rates) across this range, as detailed in Table~\ref{appendix:ablation2:table:returns} and Table~\ref{appendix:ablation2:table:winrates}, we aim to understand the optimal balance and the robustness of MisoDICE to this critical hyperparameter when learning from demonstrations of varying quality. Our findings also help to validate the choice of $\alpha=0.05$ used in our main experiments (see Table~\ref{table:hyperparameters}).

\begin{table}[!hb]															
\caption{Ablation Study 2: Final average returns for MisoDICE on SMACv1 and SMACv2 tasks, evaluating the impact of different values for the suboptimal data influence hyperparameter ($\alpha$) when using LLM-based preference labeling. The hyperparameter $\alpha$ controls the influence of the union dataset in the learning objective.}
\label{appendix:ablation2:table:returns}

\small															
\centering															
\resizebox{0.8\textwidth}{!}{															
\begin{tabular}{c@{4pt}c|c|c|c|c|c|c}															
\toprule															
\multicolumn{2}{c|}{\multirow{2}{*}{\textbf{}}}	&	\textbf{$\alpha=0.00$}	&	\textbf{$\alpha=0.05$}		&	\textbf{$\alpha=0.10$}	&	\textbf{$\alpha=0.50$}	&	\textbf{$\alpha=1.00$}	&	\textbf{$\alpha=10.0$}	\\					
\midrule
\multicolumn{2}{c|}{2c\_vs\_64zg}	&			15.9 ± 1.0	&	16.4 ± 1.3	&	15.3 ± 1.1	&	14.9 ± 0.7	&	13.3 ± 0.9	&	10.9 ± 0.3	\\
\multicolumn{2}{c|}{5m\_vs\_6m}	&			6.6 ± 1.4	&	7.3 ± 0.1	&	6.3 ± 1.0	&	6.5 ± 0.5	&	6.9 ± 0.2	&	6.7 ± 0.1	\\
\multicolumn{2}{c|}{6h\_vs\_8z}	&			8.3 ± 0.2	&	8.7 ± 0.2	&	8.0 ± 0.2	&	8.0 ± 0.2	&	7.7 ± 0.2	&	7.6 ± 0.1	\\
\multicolumn{2}{c|}{corridor}	&			5.3 ± 0.6	&	5.8 ± 0.8	&	5.3 ± 0.6	&	5.1 ± 0.2	&	3.8 ± 0.9	&	1.7 ± 0.1	\\ \midrule
\multicolumn{1}{c}{\multirow{5}{*}{\begin{tabular}[c]{@{}c@{}} \rotatebox{90}{Protoss} \end{tabular}}}	&	5\_vs\_5	&	12.4 ± 0.4	&	12.4 ± 0.5	&	11.7 ± 0.4	&	11.8 ± 0.4	&	11.9 ± 0.3	&	11.7 ± 0.4	\\
\multicolumn{1}{c}{}	&	10\_vs\_10	&	12.1 ± 0.2	&	12.9 ± 0.2	&	12.2 ± 0.6	&	12.1 ± 0.3	&	11.6 ± 0.5	&	12.1 ± 0.4	\\
\multicolumn{1}{c}{}	&	10\_vs\_11	&	10.7 ± 0.5	&	10.7 ± 0.4	&	10.1 ± 0.5	&	10.1 ± 0.4	&	9.8 ± 0.3	&	9.9 ± 0.1	\\
\multicolumn{1}{c}{}	&	20\_vs\_20	&	12.4 ± 0.2	&	13.5 ± 0.5	&	12.3 ± 0.4	&	12.2 ± 0.5	&	11.8 ± 0.2	&	12.1 ± 0.4	\\
\multicolumn{1}{c}{}	&	20\_vs\_23	&	10.4 ± 0.4	&	10.6 ± 0.2	&	10.1 ± 0.3	&	10.1 ± 0.2	&	10.0 ± 0.2	&	9.6 ± 0.5	\\ \midrule
\multicolumn{1}{c}{\multirow{5}{*}{\begin{tabular}[c]{@{}c@{}} \rotatebox{90}{Terran} \end{tabular}}}	&	5\_vs\_5	&	8.8 ± 0.7	&	9.1 ± 0.3	&	8.4 ± 0.8	&	8.5 ± 0.2	&	7.9 ± 0.4	&	8.2 ± 0.3	\\
\multicolumn{1}{c}{}	&	10\_vs\_10	&	8.9 ± 0.8	&	9.1 ± 1.3	&	8.5 ± 0.6	&	8.6 ± 0.9	&	7.8 ± 0.7	&	8.1 ± 0.7	\\
\multicolumn{1}{c}{}	&	10\_vs\_11	&	5.9 ± 0.4	&	6.4 ± 0.5	&	5.8 ± 0.3	&	5.9 ± 0.4	&	5.7 ± 0.5	&	5.8 ± 0.6	\\
\multicolumn{1}{c}{}	&	20\_vs\_20	&	9.0 ± 0.6	&	9.2 ± 0.6	&	8.5 ± 0.7	&	8.4 ± 0.6	&	8.2 ± 0.6	&	8.4 ± 0.7	\\
\multicolumn{1}{c}{}	&	20\_vs\_23	&	5.6 ± 0.2	&	5.6 ± 0.4	&	5.3 ± 0.3	&	5.1 ± 0.3	&	5.1 ± 0.3	&	5.1 ± 0.1	\\ \midrule
\multicolumn{1}{c}{\multirow{5}{*}{\begin{tabular}[c]{@{}c@{}} \rotatebox{90}{Zerg} \end{tabular}}}	&	5\_vs\_5	&	7.3 ± 0.5	&	7.5 ± 0.1	&	7.2 ± 0.6	&	6.8 ± 0.2	&	6.4 ± 0.3	&	6.4 ± 0.3	\\
\multicolumn{1}{c}{}	&	10\_vs\_10	&	9.7 ± 0.4	&	10.2 ± 0.6	&	9.2 ± 0.3	&	9.2 ± 0.4	&	8.7 ± 0.2	&	8.9 ± 0.4	\\
\multicolumn{1}{c}{}	&	10\_vs\_11	&	9.0 ± 0.5	&	9.4 ± 0.3	&	8.5 ± 0.2	&	8.8 ± 0.3	&	8.7 ± 0.4	&	8.3 ± 0.5	\\
\multicolumn{1}{c}{}	&	20\_vs\_20	&	9.0 ± 0.4	&	10.2 ± 0.6	&	9.0 ± 0.3	&	9.1 ± 0.7	&	8.8 ± 0.4	&	8.9 ± 0.4	\\
\multicolumn{1}{c}{}	&	20\_vs\_23	&	8.9 ± 0.2	&	9.5 ± 0.2	&	8.8 ± 0.4	&	8.9 ± 0.3	&	8.5 ± 0.4	&	8.6 ± 0.4	\\
\bottomrule															
\end{tabular}%
}															
\end{table}

\begin{table}[!hb]															
\caption{Ablation Study 2: Final average win rates for MisoDICE on SMACv1 and SMACv2 tasks, evaluating the impact of different values for the suboptimal data influence hyperparameter ($\alpha$) when using LLM-based preference labeling. The hyperparameter $\alpha$ controls the influence of the union dataset in the learning objective.}
\label{appendix:ablation2:table:winrates}

\small															
\centering															
\resizebox{0.8\textwidth}{!}{															
\begin{tabular}{c@{4pt}c|c|c|c|c|c|c}															
\toprule															
\multicolumn{2}{c|}{\multirow{2}{*}{\textbf{}}}	&	\textbf{$\alpha=0.00$}	&	\textbf{$\alpha=0.05$}		&	\textbf{$\alpha=0.10$}	&	\textbf{$\alpha=0.50$}	&	\textbf{$\alpha=1.00$}	&	\textbf{$\alpha=10.0$}	\\	
\midrule															
\multicolumn{2}{c|}{2c\_vs\_64zg}	&			11.0 ± 5.8	&	13.0 ± 9.0	&	10.0 ± 5.0	&	9.3 ± 2.9	&	7.1 ± 4.1	&	1.7 ± 0.9	\\
\multicolumn{2}{c|}{5m\_vs\_6m}	&			1.1 ± 1.0	&	1.2 ± 0.5	&	0.6 ± 0.4	&	0.7 ± 0.4	&	0.9 ± 0.7	&	0.7 ± 0.5	\\
\multicolumn{2}{c|}{6h\_vs\_8z}	&			1.0 ± 0.8	&	1.1 ± 0.8	&	1.1 ± 0.8	&	0.8 ± 0.5	&	0.4 ± 0.3	&	1.0 ± 0.5	\\
\multicolumn{2}{c|}{corridor}	&			0.4 ± 0.2	&	1.4 ± 0.6	&	1.2 ± 0.5	&	0.5 ± 0.4	&	1.4 ± 0.7	&	0.9 ± 0.4	\\ \midrule
\multicolumn{1}{c}{\multirow{5}{*}{\begin{tabular}[c]{@{}c@{}} \rotatebox{90}{Protoss} \end{tabular}}}	&	5\_vs\_5	&	12.1 ± 1.6	&	20.7 ± 0.9	&	14.4 ± 2.1	&	12.9 ± 2.0	&	14.1 ± 2.2	&	17.2 ± 2.5	\\
\multicolumn{1}{c}{}	&	10\_vs\_10	&	8.4 ± 3.3	&	14.1 ± 2.1	&	9.9 ± 2.2	&	9.5 ± 2.6	&	9.0 ± 1.7	&	11.1 ± 2.3	\\
\multicolumn{1}{c}{}	&	10\_vs\_11	&	3.3 ± 1.1	&	4.7 ± 0.3	&	4.3 ± 0.4	&	3.5 ± 1.3	&	3.5 ± 1.1	&	4.3 ± 1.2	\\
\multicolumn{1}{c}{}	&	20\_vs\_20	&	5.4 ± 0.4	&	11.0 ± 3.2	&	5.5 ± 1.1	&	6.4 ± 1.0	&	6.4 ± 0.7	&	7.8 ± 2.4	\\
\multicolumn{1}{c}{}	&	20\_vs\_23	&	4.3 ± 0.6	&	3.8 ± 2.0	&	2.3 ± 1.4	&	3.5 ± 1.6	&	3.3 ± 0.8	&	2.5 ± 1.3	\\ \midrule
\multicolumn{1}{c}{\multirow{5}{*}{\begin{tabular}[c]{@{}c@{}} \rotatebox{90}{Terran} \end{tabular}}}	&	5\_vs\_5	&	12.4 ± 2.9	&	14.2 ± 3.1	&	11.2 ± 0.8	&	12.1 ± 1.6	&	12.0 ± 3.2	&	12.2 ± 3.2	\\
\multicolumn{1}{c}{}	&	10\_vs\_10	&	9.3 ± 1.2	&	12.0 ± 1.7	&	8.6 ± 1.9	&	9.2 ± 3.6	&	8.8 ± 1.9	&	9.3 ± 3.2	\\
\multicolumn{1}{c}{}	&	10\_vs\_11	&	2.3 ± 0.4	&	4.2 ± 1.6	&	3.5 ± 1.2	&	2.3 ± 0.7	&	2.2 ± 0.8	&	3.6 ± 1.5	\\
\multicolumn{1}{c}{}	&	20\_vs\_20	&	4.8 ± 1.6	&	8.8 ± 1.5	&	4.9 ± 1.5	&	5.9 ± 2.5	&	4.9 ± 1.9	&	9.3 ± 2.1	\\
\multicolumn{1}{c}{}	&	20\_vs\_23	&	1.5 ± 0.7	&	1.8 ± 1.4	&	1.2 ± 0.7	&	1.8 ± 0.7	&	1.2 ± 0.9	&	1.7 ± 1.0	\\ \midrule
\multicolumn{1}{c}{\multirow{5}{*}{\begin{tabular}[c]{@{}c@{}} \rotatebox{90}{Zerg} \end{tabular}}}	&	5\_vs\_5	&	7.1 ± 1.6	&	7.9 ± 1.0	&	7.4 ± 1.0	&	5.9 ± 0.7	&	6.4 ± 2.7	&	7.5 ± 2.1	\\
\multicolumn{1}{c}{}	&	10\_vs\_10	&	6.7 ± 1.2	&	8.9 ± 2.1	&	7.2 ± 2.6	&	7.0 ± 0.8	&	7.1 ± 0.9	&	6.9 ± 1.0	\\
\multicolumn{1}{c}{}	&	10\_vs\_11	&	5.0 ± 1.4	&	6.8 ± 1.1	&	5.0 ± 1.5	&	4.1 ± 1.9	&	5.0 ± 1.2	&	4.5 ± 1.4	\\
\multicolumn{1}{c}{}	&	20\_vs\_20	&	1.3 ± 0.5	&	2.4 ± 0.6	&	1.2 ± 0.4	&	1.7 ± 2.1	&	1.6 ± 0.3	&	2.6 ± 0.6	\\
\multicolumn{1}{c}{}	&	20\_vs\_23	&	1.4 ± 0.5	&	2.7 ± 0.4	&	1.0 ± 0.4	&	2.5 ± 0.7	&	2.7 ± 1.7	&	2.1 ± 0.7	\\
\bottomrule															
\end{tabular}%
}															
\end{table}

\subsection{Ablation Study 3: Comparison of LLM Capabilities for Preference Labeling}
\label{sec:ablation_llm_versions}

A core component of MisoDICE, particularly when expert demonstrations are not pre-annotated, is the utilization of Large Language Models (LLMs) to infer expert-like segments from unlabeled trajectories. This is achieved in the first phase by using an LLM to generate preference labels for pairs of trajectories, which then informs the O-MAPL model to rank demonstrations and identify the expert dataset $\mathcal{D}^E$.

In this ablation study, we investigate the impact of the chosen LLM's capability on the overall performance of MisoDICE. Specifically, we compare the results when using two different versions of OpenAI's models for the preference labeling task: the more powerful GPT-4o and its smaller, potentially more efficient counterpart, GPT-4o-mini \cite{OpenAI2024gpt4o}. The motivation is to assess whether the enhanced reasoning capabilities of a larger model like GPT-4o translate into significantly better preference data and, consequently, improved downstream imitation learning performance for MisoDICE, or if a more compact model like GPT-4o-mini can provide comparable results with potentially lower computational and financial costs.

We evaluate MisoDICE's final performance, in terms of returns and win rates, on our benchmark tasks when the initial preference labeling (Phase 1) is conducted by GPT-4o versus GPT-4o-mini. The comparative results, presented in Table~\ref{appendix:ablation3:table}, will shed light on the trade-offs associated with LLM selection for data annotation in the MisoDICE framework and provide insights into the level of LLM sophistication required for effective multi-agent imitation learning from unlabeled demonstrations.

\begin{table}[!hb]
\small
\centering
\caption{Ablation Study 3: Comparison of MisoDICE performance (final average returns and win rates) on SMACv1 and SMACv2 tasks when using GPT-4o versus GPT-4o-mini for LLM-based preference labeling in the initial data annotation phase (Phase 1). }
\label{appendix:ablation3:table}
\begin{tabular}{c@{4pt}c|cc|cc}
\toprule
\multicolumn{2}{c|}{\multirow{2}{*}{\textbf{}}} & \multicolumn{2}{c|}{\textbf{Returns - MisoDICE}} & \multicolumn{2}{c}{\textbf{Winrates - MisoDICE}} \\
\multicolumn{2}{c|}{} & \multicolumn{1}{c|}{(gpt-4o-mini)} & (gpt-4o) & \multicolumn{1}{c}{(gpt-4o-mini)} & (gpt-4o) \\ \midrule
\multicolumn{2}{c|}{2c\_vs\_64zg} & \multicolumn{1}{c|}{12.2 ± 0.6} & 16.4 ± 1.3 & \multicolumn{1}{c|}{2.1 ± 1.1} & 13.0 ± 9.0 \\
\multicolumn{2}{c|}{5m\_vs\_6m} & \multicolumn{1}{c|}{6.7 ± 0.9} & 7.3 ± 0.1 & \multicolumn{1}{c|}{1.2 ± 0.4} & 1.2 ± 0.5 \\
\multicolumn{2}{c|}{6h\_vs\_8z} & \multicolumn{1}{c|}{8.1 ± 0.2} & 8.7 ± 0.2 & \multicolumn{1}{c|}{0.1 ± 0.1} & 1.1 ± 0.8 \\
\multicolumn{2}{c|}{corridor} & \multicolumn{1}{c|}{3.8 ± 0.8} & 5.8 ± 0.8 & \multicolumn{1}{c|}{0.6 ± 0.7} & 1.4 ± 0.6 \\ \midrule
\multicolumn{1}{c}{\multirow{5}{*}{\begin{tabular}[c]{@{}c@{}} \rotatebox{90}{Protoss} \end{tabular}}} & 5\_vs\_5 & \multicolumn{1}{c|}{12.0 ± 0.3} & 12.4 ± 0.5 & \multicolumn{1}{c|}{12.6 ± 2.6} & 20.7 ± 0.9 \\
\multicolumn{1}{c}{} & 10\_vs\_10 & \multicolumn{1}{c|}{12.0 ± 0.5} & 12.9 ± 0.2 & \multicolumn{1}{c|}{10.5 ± 3.5} & 14.1 ± 2.1 \\
\multicolumn{1}{c}{} & 10\_vs\_11 & \multicolumn{1}{c|}{10.6 ± 0.3} & 10.7 ± 0.4 & \multicolumn{1}{c|}{4.0 ± 1.8} & 4.7 ± 0.3 \\
\multicolumn{1}{c}{} & 20\_vs\_20 & \multicolumn{1}{c|}{12.4 ± 0.5} & 13.5 ± 0.5 & \multicolumn{1}{c|}{6.0 ± 0.9} & 11.0 ± 3.2 \\
\multicolumn{1}{c}{} & 20\_vs\_23 & \multicolumn{1}{c|}{10.3 ± 0.1} & 10.6 ± 0.2 & \multicolumn{1}{c|}{2.3 ± 1.4} & 3.8 ± 2.0 \\ \midrule
\multicolumn{1}{c}{\multirow{5}{*}{\begin{tabular}[c]{@{}c@{}} \rotatebox{90}{Terran} \end{tabular}}} & 5\_vs\_5 & \multicolumn{1}{c|}{8.1 ± 0.5} & 9.1 ± 0.3 & \multicolumn{1}{c|}{10.0 ± 1.4} & 14.2 ± 3.1 \\
\multicolumn{1}{c}{} & 10\_vs\_10 & \multicolumn{1}{c|}{8.6 ± 0.9} & 9.1 ± 1.3 & \multicolumn{1}{c|}{9.2 ± 2.1} & 12.0 ± 1.7 \\
\multicolumn{1}{c}{} & 10\_vs\_11 & \multicolumn{1}{c|}{6.0 ± 0.3} & 6.4 ± 0.5 & \multicolumn{1}{c|}{2.2 ± 1.1} & 4.2 ± 1.6 \\
\multicolumn{1}{c}{} & 20\_vs\_20 & \multicolumn{1}{c|}{8.1 ± 0.5} & 9.2 ± 0.6 & \multicolumn{1}{c|}{6.1 ± 2.1} & 8.8 ± 1.5 \\
\multicolumn{1}{c}{} & 20\_vs\_23 & \multicolumn{1}{c|}{5.3 ± 0.3} & 5.6 ± 0.4 & \multicolumn{1}{c|}{0.9 ± 0.6} & 1.8 ± 1.4 \\ \midrule
\multicolumn{1}{c}{\multirow{5}{*}{\begin{tabular}[c]{@{}c@{}} \rotatebox{90}{Zerg} \end{tabular}}} & 5\_vs\_5 & \multicolumn{1}{c|}{7.0 ± 0.4} & 7.5 ± 0.1 & \multicolumn{1}{c|}{5.3 ± 1.1} & 7.9 ± 1.0 \\
\multicolumn{1}{c}{} & 10\_vs\_10 & \multicolumn{1}{c|}{9.6 ± 0.2} & 10.2 ± 0.6 & \multicolumn{1}{c|}{7.1 ± 0.9} & 8.9 ± 2.1 \\
\multicolumn{1}{c}{} & 10\_vs\_11 & \multicolumn{1}{c|}{9.2 ± 0.6} & 9.4 ± 0.3 & \multicolumn{1}{c|}{5.0 ± 0.9} & 6.8 ± 1.1 \\
\multicolumn{1}{c}{} & 20\_vs\_20 & \multicolumn{1}{c|}{8.9 ± 0.4} & 10.2 ± 0.6 & \multicolumn{1}{c|}{0.8 ± 0.8} & 2.4 ± 0.6 \\
\multicolumn{1}{c}{} & 20\_vs\_23 & \multicolumn{1}{c|}{9.0 ± 0.2} & 9.5 ± 0.2 & \multicolumn{1}{c|}{1.7 ± 0.8} & 2.7 ± 0.4 \\
\bottomrule
\end{tabular}
\end{table}

\subsection{Ablation Study 4: Performance with Rule-Based Preference Labeling}
\label{sec:ablation_rule_based}

While a key contribution of our work involves leveraging Large Language Models (LLMs) for inferring expert-like trajectories from unlabeled demonstrations, it is also important to understand MisoDICE's performance when relying on more traditional or simpler methods for generating initial preference data. This ablation study evaluates MisoDICE using a rule-based approach for preference labeling in its first phase.

In this rule-based setting, preference labels for pairs of trajectories $(\sigma_1, \sigma_2)$ are generated based on a predefined heuristic, typically by assuming that the cumulative return for each trajectory is known or can be accurately estimated, with the trajectory yielding a higher return being labeled as preferred. Although straightforward, such rule-based methods can have limitations in complex cooperative multi-agent settings, as individual rewards or simple cumulative returns may not always accurately reflect true team-level cooperativeness or overall strategic success.

The primary motivation for this study is twofold: first, to assess the inherent capability of the MisoDICE algorithm (Phase 2) to learn effective policies even when the expert dataset $\mathcal{D}_{rule}^{E}$ is identified using these simpler, potentially less nuanced, preference signals. Second, it provides a crucial benchmark to quantify the additional benefits and performance improvements gained by employing the more sophisticated LLM-based preference labeling approach presented as our main method.

We present the performance of MisoDICE and relevant baselines in terms of returns and win rates when using rule-based preference data, as detailed in Table~\ref{appendix:ablation4:table1} and Table~\ref{appendix:ablation4:table2}. These results will illustrate MisoDICE's adaptability and provide a clearer understanding of the impact of different preference generation techniques on final policy quality.

\begin{table}[!hb]															
\caption{Ablation Study 4: Final average returns for MisoDICE and baseline methods on MaMujoco, SMACv1, and SMACv2 tasks. For MisoDICE, the expert dataset ($\mathcal{D}_{\text{rule}}^{E}$) was identified using a rule-based preference labeling approach in Phase 1.}
\label{appendix:ablation4:table1}

\small															
\centering															
\resizebox{\textwidth}{!}{
\begin{tabular}{c@{4pt}c|ccc|c|c|c}															
\toprule															
\multicolumn{2}{c|}{\multirow{2}{*}{\textbf{}}}	&	\multicolumn{3}{c|}{\textbf{BC}}	&	\multirow{2}{*}{\textbf{INDD}}		&	\multirow{2}{*}{\textbf{VDN}}	&	\textbf{MisoDICE}	\\					
\multicolumn{2}{c|}{}	&	($\beta=0.0$)	&	($\beta=0.5$)	&	($\beta=1.0$) & & & (ours)	\\ \midrule								
\multicolumn{2}{c|}{Hopper-v2}	&			123.0 ± 23.1	&	154.1 ± 44.6	&	158.1 ± 46.4	&	135.4 ± 32.8	&	171.4 ± 19.2	&	206.5 ± 22.5	\\
\multicolumn{2}{c|}{Ant-v2}	&			1026.5 ± 77.7	&	1631.1 ± 51.8	&	1910.8 ± 70.2	&	1826.4 ± 16.6	&	1869.7 ± 9.1	&	2025.8 ± 3.2	\\
\multicolumn{2}{c|}{HalfCheetah-v2}	&			-201.5 ± 7.9	&	-206.5 ± 6.0	&	-73.4 ± 34.5	&	-277.9 ± 5.7	&	-243.9 ± 14.8	&	-72.1 ± 31.8	\\ \midrule
\multicolumn{2}{c|}{2c\_vs\_64zg}	&			8.4 ± 0.2	&	9.3 ± 0.3	&	9.7 ± 0.7	&	11.8 ± 0.4	&	13.2 ± 1.5	&	15.0 ± 1.4	\\
\multicolumn{2}{c|}{5m\_vs\_6m}	&			4.9 ± 1.3	&	6.3 ± 0.6	&	5.9 ± 0.4	&	6.6 ± 0.2	&	6.7 ± 0.1	&	7.2 ± 0.1	\\
\multicolumn{2}{c|}{6h\_vs\_8z}	&			7.0 ± 0.1	&	7.4 ± 0.1	&	7.1 ± 0.2	&	7.4 ± 0.3	&	7.8 ± 0.1	&	8.5 ± 0.1	\\
\multicolumn{2}{c|}{corridor}	&			1.4 ± 0.1	&	1.5 ± 0.2	&	3.0 ± 0.6	&	3.0 ± 0.6	&	1.7 ± 0.0	&	4.7 ± 0.6	\\ \midrule
\multicolumn{1}{c}{\multirow{5}{*}{\begin{tabular}[c]{@{}c@{}} \rotatebox{90}{Protoss} \end{tabular}}}	&	5\_vs\_5	&	9.8 ± 0.4	&	10.4 ± 0.4	&	10.2 ± 0.6	&	10.4 ± 0.5	&	11.5 ± 0.6	&	12.2 ± 0.7	\\
\multicolumn{1}{c}{}	&	10\_vs\_10	&	9.4 ± 0.2	&	11.7 ± 0.8	&	10.1 ± 0.3	&	10.9 ± 0.5	&	11.8 ± 0.6	&	12.6 ± 0.6	\\
\multicolumn{1}{c}{}	&	10\_vs\_11	&	7.7 ± 0.4	&	9.4 ± 0.3	&	8.4 ± 0.5	&	9.1 ± 0.4	&	9.5 ± 0.4	&	10.6 ± 0.4	\\
\multicolumn{1}{c}{}	&	20\_vs\_20	&	10.5 ± 0.3	&	10.3 ± 0.4	&	9.9 ± 0.6	&	11.3 ± 0.3	&	11.4 ± 0.3	&	13.1 ± 0.4	\\
\multicolumn{1}{c}{}	&	20\_vs\_23	&	7.9 ± 0.3	&	8.3 ± 0.3	&	8.3 ± 0.2	&	9.4 ± 0.2	&	9.5 ± 0.4	&	10.4 ± 0.4	\\ \midrule
\multicolumn{1}{c}{\multirow{5}{*}{\begin{tabular}[c]{@{}c@{}} \rotatebox{90}{Terran} \end{tabular}}}	&	5\_vs\_5	&	6.5 ± 0.5	&	7.2 ± 0.7	&	6.9 ± 0.6	&	7.6 ± 0.4	&	8.0 ± 0.7	&	8.8 ± 0.5	\\
\multicolumn{1}{c}{}	&	10\_vs\_10	&	6.1 ± 0.5	&	7.1 ± 0.7	&	6.6 ± 0.6	&	7.3 ± 0.7	&	7.6 ± 0.3	&	8.6 ± 0.3	\\
\multicolumn{1}{c}{}	&	10\_vs\_11	&	4.6 ± 0.2	&	5.4 ± 0.6	&	5.0 ± 0.2	&	5.5 ± 0.1	&	5.7 ± 0.2	&	6.2 ± 0.5	\\
\multicolumn{1}{c}{}	&	20\_vs\_20	&	6.5 ± 0.5	&	7.3 ± 0.5	&	6.7 ± 0.5	&	7.8 ± 0.6	&	8.4 ± 0.5	&	9.1 ± 0.4	\\
\multicolumn{1}{c}{}	&	20\_vs\_23	&	4.0 ± 0.3	&	5.0 ± 0.2	&	4.2 ± 0.3	&	5.1 ± 0.5	&	5.0 ± 0.1	&	5.5 ± 0.4	\\ \midrule
\multicolumn{1}{c}{\multirow{5}{*}{\begin{tabular}[c]{@{}c@{}} \rotatebox{90}{Zerg} \end{tabular}}}	&	5\_vs\_5	&	5.5 ± 0.3	&	6.5 ± 0.3	&	5.6 ± 0.2	&	6.2 ± 0.5	&	6.6 ± 0.3	&	7.4 ± 0.5	\\
\multicolumn{1}{c}{}	&	10\_vs\_10	&	7.1 ± 0.2	&	8.2 ± 0.5	&	7.2 ± 0.5	&	8.0 ± 0.3	&	8.5 ± 0.8	&	10.0 ± 0.2	\\
\multicolumn{1}{c}{}	&	10\_vs\_11	&	6.5 ± 0.4	&	8.2 ± 0.3	&	6.8 ± 0.1	&	7.9 ± 0.3	&	8.4 ± 0.2	&	9.1 ± 0.6	\\
\multicolumn{1}{c}{}	&	20\_vs\_20	&	7.3 ± 0.2	&	9.0 ± 0.3	&	7.6 ± 0.2	&	8.1 ± 0.7	&	8.6 ± 0.4	&	10.0 ± 0.6	\\
\multicolumn{1}{c}{}	&	20\_vs\_23	&	7.0 ± 0.2	&	7.9 ± 0.3	&	6.9 ± 0.1	&	8.1 ± 0.2	&	8.8 ± 0.3	&	9.5 ± 0.4	\\
\bottomrule															
\end{tabular}%
}
\end{table}

\begin{table}[!hb]															
\caption{Ablation Study 4: Final average win rates for MisoDICE and baseline methods on SMACv1 and SMACv2 tasks. For MisoDICE, the expert dataset ($\mathcal{D}_{\text{rule}}^{E}$) was identified using a rule-based preference labeling approach in Phase 1.}
\label{appendix:ablation4:table2}

\small															
\centering															
\resizebox{0.83\textwidth}{!}{															
\begin{tabular}{c@{4pt}c|ccc|c|c|c}															
\toprule															
\multicolumn{2}{c|}{\multirow{2}{*}{\textbf{}}}	&	\multicolumn{3}{c|}{\textbf{BC}}	&	\multirow{2}{*}{\textbf{INDD}}		&	\multirow{2}{*}{\textbf{VDN}}	&	\textbf{MisoDICE}	\\					
\multicolumn{2}{c|}{}	&	($\beta=0.0$)	&	($\beta=0.5$)	&	($\beta=1.0$) & & & (ours)	\\ \midrule								
\multicolumn{2}{c|}{2c\_vs\_64zg}	&			0.1 ± 0.1	&	0.5 ± 0.3	&	1.0 ± 1.0	&	2.3 ± 0.1	&	7.1 ± 4.6	&	8.4 ± 5.9	\\
\multicolumn{2}{c|}{5m\_vs\_6m}	&			0.2 ± 0.4	&	0.7 ± 0.5	&	0.1 ± 0.1	&	0.1 ± 0.1	&	1.1 ± 0.8	&	1.3 ± 0.5	\\
\multicolumn{2}{c|}{6h\_vs\_8z}	&			0.2 ± 0.2	&	0.0 ± 0.0	&	0.2 ± 0.3	&	0.1 ± 0.1	&	1.0 ± 0.6	&	1.1 ± 0.8	\\
\multicolumn{2}{c|}{corridor}	&			0.1 ± 0.1	&	0.6 ± 0.7	&	0.3 ± 0.4	&	0.1 ± 0.1	&	0.9 ± 0.6	&	1.4 ± 0.6	\\ \midrule
\multicolumn{1}{c}{\multirow{5}{*}{\begin{tabular}[c]{@{}c@{}} \rotatebox{90}{Protoss} \end{tabular}}}	&	5\_vs\_5	&	15.8 ± 2.0	&	11.8 ± 1.7	&	13.7 ± 3.4	&	10.1 ± 1.6	&	14.3 ± 3.4	&	18.4 ± 1.3	\\
\multicolumn{1}{c}{}	&	10\_vs\_10	&	7.4 ± 3.3	&	9.4 ± 3.2	&	8.7 ± 2.9	&	7.2 ± 1.6	&	9.8 ± 2.1	&	12.2 ± 1.6	\\
\multicolumn{1}{c}{}	&	10\_vs\_11	&	1.4 ± 1.0	&	2.9 ± 1.5	&	1.7 ± 0.2	&	1.7 ± 0.2	&	2.8 ± 0.7	&	4.1 ± 1.1	\\
\multicolumn{1}{c}{}	&	20\_vs\_20	&	7.6 ± 1.7	&	2.9 ± 0.7	&	5.0 ± 2.4	&	3.9 ± 0.9	&	3.6 ± 1.4	&	8.7 ± 1.8	\\
\multicolumn{1}{c}{}	&	20\_vs\_23	&	1.4 ± 0.3	&	0.7 ± 0.5	&	1.7 ± 1.2	&	2.0 ± 0.6	&	1.6 ± 0.8	&	3.2 ± 0.8	\\ \midrule
\multicolumn{1}{c}{\multirow{5}{*}{\begin{tabular}[c]{@{}c@{}} \rotatebox{90}{Terran} \end{tabular}}}	&	5\_vs\_5	&	8.7 ± 2.7	&	10.3 ± 2.6	&	10.4 ± 2.3	&	9.3 ± 1.6	&	8.8 ± 1.8	&	14.0 ± 3.5	\\
\multicolumn{1}{c}{}	&	10\_vs\_10	&	7.0 ± 2.0	&	7.8 ± 3.1	&	7.8 ± 2.8	&	6.6 ± 1.0	&	5.9 ± 2.3	&	11.3 ± 1.8	\\
\multicolumn{1}{c}{}	&	10\_vs\_11	&	1.5 ± 0.5	&	2.3 ± 0.8	&	2.0 ± 0.4	&	1.2 ± 0.8	&	2.2 ± 1.1	&	2.7 ± 1.2	\\
\multicolumn{1}{c}{}	&	20\_vs\_20	&	3.5 ± 1.7	&	5.6 ± 0.7	&	3.5 ± 2.8	&	4.3 ± 1.2	&	5.5 ± 1.9	&	7.6 ± 0.7	\\
\multicolumn{1}{c}{}	&	20\_vs\_23	&	0.5 ± 0.6	&	0.9 ± 0.3	&	0.5 ± 0.8	&	0.6 ± 0.6	&	0.9 ± 0.6	&	1.7 ± 1.3	\\ \midrule
\multicolumn{1}{c}{\multirow{5}{*}{\begin{tabular}[c]{@{}c@{}} \rotatebox{90}{Zerg} \end{tabular}}}	&	5\_vs\_5	&	5.7 ± 0.4	&	5.2 ± 0.7	&	5.5 ± 1.4	&	5.4 ± 1.2	&	6.0 ± 2.1	&	6.6 ± 1.0	\\
\multicolumn{1}{c}{}	&	10\_vs\_10	&	3.7 ± 1.2	&	3.7 ± 0.9	&	3.7 ± 1.4	&	4.5 ± 1.1	&	4.3 ± 2.0	&	8.8 ± 1.7	\\
\multicolumn{1}{c}{}	&	10\_vs\_11	&	3.3 ± 1.0	&	4.1 ± 1.8	&	3.2 ± 1.4	&	3.6 ± 0.5	&	5.0 ± 1.4	&	6.7 ± 2.6	\\
\multicolumn{1}{c}{}	&	20\_vs\_20	&	0.3 ± 0.1	&	0.9 ± 0.5	&	0.9 ± 0.3	&	0.1 ± 0.1	&	1.1 ± 0.4	&	1.7 ± 0.4	\\
\multicolumn{1}{c}{}	&	20\_vs\_23	&	0.7 ± 0.4	&	0.6 ± 0.4	&	0.9 ± 0.4	&	0.6 ± 0.6	&	1.6 ± 0.4	&	2.4 ± 0.2	\\
\bottomrule															
\end{tabular}%
}															
\end{table}

\subsection{Ablation Study 5: Performance with Ground Truth Data Labels}\label{sec:ablation_ground_truth}

In this ablation study, we aim to establish an upper-bound performance benchmark for our MisoDICE framework by utilizing ground truth labels for the initial data segmentation in Phase 1. This means that instead of relying on LLM-generated preferences or rule-based heuristics to identify expert trajectories, we directly use the original quality labels (i.e., 'expert' and 'poor') available from the O-MAPL datasets from which our mixed-quality dataset $\mathcal{D}^{U}$ was constructed.

Specifically, the expert dataset $\mathcal{D}^{E}$ is formed by selecting trajectories that were originally labeled as 'expert' in the O-MAPL dataset, and the suboptimal dataset $\mathcal{D}^{Mix}$ comprises those originally labeled as 'poor'. This process effectively bypasses the entirety of our proposed multi-step labeling pipeline (Section 4, Algorithm 2), including LLM prompting, preference learning with O-MAPL for reward recovery, and subsequent trajectory ranking. By providing MisoDICE's Phase 2 with this perfectly partitioned data, we can evaluate the core imitation learning algorithm's effectiveness under idealized conditions.

The primary motivation for this study is to isolate and assess the performance of the MisoDICE multi-agent IL algorithm (Phase 2)  when the complexities and potential inaccuracies of the trajectory labeling stage are removed. The results obtained from this setup will serve as a crucial reference point, allowing us to quantify how closely the performance achieved with our LLM-based or rule-based labeling approaches (as explored in previous ablation studies and main experiments) approximates this scenario with ground truth data separation. This will provide insights into the efficacy of the labeling procedures themselves and the overall robustness of MisoDICE. We will present the performance of MisoDICE in terms of returns and win rates in Table \ref{appendix:ablation5:table1} and \ref{appendix:ablation5:table2} under these ground truth conditions.

\begin{table}[!hb]															
\caption{Ablation Study 5: Final average returns on SMACv1 \& SMACv2 tasks for MisoDICE and baselines, using ground truth data labels for expert/suboptimal trajectory separation.}
\label{appendix:ablation5:table1}
\small															
\centering															
\resizebox{0.85\textwidth}{!}{															
\begin{tabular}{c@{4pt}c|ccc|c|c|c}															
\toprule															
\multicolumn{2}{c|}{\multirow{2}{*}{\textbf{}}}	&	\multicolumn{3}{c|}{\textbf{BC}}	&	\multirow{2}{*}{\textbf{INDD}}	&	\multirow{2}{*}{\textbf{VDN}}	&	\textbf{MisoDICE}	\\						
\multicolumn{2}{c|}{}	&	($\beta=0.0$)	&	($\beta=0.5$)	&	($\beta=1.0$)	&		&		&	(ours)	\\ \midrule		
\multicolumn{2}{c|}{2c\_vs\_64zg}	&			11.7 ± 0.4	&	11.3 ± 0.1	&	13.1 ± 0.8	&	15.1 ± 1.3	&	15.9 ± 2.0	&	16.1 ± 1.8	\\
\multicolumn{2}{c|}{5m\_vs\_6m}	&			6.1 ± 1.4	&	6.8 ± 1.4	&	7.2 ± 0.2	&	7.4 ± 0.2	&	7.4 ± 0.3	&	7.4 ± 0.1	\\
\multicolumn{2}{c|}{6h\_vs\_8z}	&			8.4 ± 0.1	&	8.4 ± 0.2	&	8.3 ± 0.3	&	8.2 ± 0.3	&	8.5 ± 0.1	&	8.9 ± 0.1	\\
\multicolumn{2}{c|}{corridor}	&			2.1 ± 0.3	&	1.9 ± 0.3	&	5.7 ± 1.5	&	2.0 ± 0.2	&	5.8 ± 1.5	&	6.1 ± 1.6	\\ \midrule
\multicolumn{1}{c}{\multirow{5}{*}{\begin{tabular}[c]{@{}c@{}} \rotatebox{90}{Protoss} \end{tabular}}}	&	5\_vs\_5	&	13.3 ± 0.1	&	12.6 ± 2.8	&	12.0 ± 1.2	&	11.9 ± 1.9	&	13.3 ± 0.5	&	13.7 ± 1.2	\\
\multicolumn{1}{c}{}	&	10\_vs\_10	&	13.1 ± 0.9	&	12.5 ± 1.4	&	12.8 ± 0.9	&	12.3 ± 0.8	&	13.2 ± 0.5	&	14.0 ± 1.2	\\
\multicolumn{1}{c}{}	&	10\_vs\_11	&	10.8 ± 1.3	&	10.6 ± 0.9	&	10.9 ± 0.8	&	10.1 ± 1.1	&	11.4 ± 0.5	&	12.0 ± 1.3	\\
\multicolumn{1}{c}{}	&	20\_vs\_20	&	12.4 ± 1.4	&	12.2 ± 0.5	&	13.0 ± 0.4	&	12.7 ± 0.6	&	13.3 ± 1.4	&	13.9 ± 1.6	\\
\multicolumn{1}{c}{}	&	20\_vs\_23	&	10.1 ± 1.5	&	9.5 ± 0.5	&	10.2 ± 0.8	&	10.1 ± 0.9	&	10.4 ± 0.8	&	10.5 ± 1.1	\\ \midrule
\multicolumn{1}{c}{\multirow{5}{*}{\begin{tabular}[c]{@{}c@{}} \rotatebox{90}{Terran} \end{tabular}}}	&	5\_vs\_5	&	8.9 ± 0.9	&	8.4 ± 1.0	&	8.7 ± 0.9	&	8.9 ± 1.8	&	9.4 ± 2.3	&	9.4 ± 1.5	\\
\multicolumn{1}{c}{}	&	10\_vs\_10	&	7.5 ± 1.0	&	8.2 ± 1.0	&	7.7 ± 1.2	&	7.9 ± 0.7	&	8.9 ± 1.3	&	9.0 ± 0.6	\\
\multicolumn{1}{c}{}	&	10\_vs\_11	&	6.3 ± 0.4	&	5.9 ± 1.4	&	5.8 ± 0.4	&	5.9 ± 0.7	&	7.1 ± 1.2	&	7.3 ± 1.0	\\
\multicolumn{1}{c}{}	&	20\_vs\_20	&	8.6 ± 1.2	&	8.3 ± 0.8	&	7.9 ± 0.2	&	8.3 ± 1.0	&	8.7 ± 1.0	&	9.3 ± 1.2	\\
\multicolumn{1}{c}{}	&	20\_vs\_23	&	5.1 ± 0.2	&	5.5 ± 0.5	&	5.4 ± 0.3	&	4.8 ± 0.4	&	5.6 ± 0.5	&	5.9 ± 0.7	\\ \midrule
\multicolumn{1}{c}{\multirow{5}{*}{\begin{tabular}[c]{@{}c@{}} \rotatebox{90}{Zerg} \end{tabular}}}	&	5\_vs\_5	&	7.8 ± 1.2	&	7.0 ± 0.9	&	7.1 ± 1.4	&	6.8 ± 0.3	&	7.8 ± 0.8	&	8.0 ± 0.6	\\
\multicolumn{1}{c}{}	&	10\_vs\_10	&	8.7 ± 1.2	&	8.2 ± 0.5	&	9.2 ± 0.8	&	9.6 ± 1.2	&	9.6 ± 1.7	&	9.9 ± 0.5	\\
\multicolumn{1}{c}{}	&	10\_vs\_11	&	9.6 ± 0.4	&	9.3 ± 1.5	&	8.8 ± 1.2	&	9.3 ± 1.0	&	9.8 ± 0.9	&	9.9 ± 0.9	\\
\multicolumn{1}{c}{}	&	20\_vs\_20	&	9.2 ± 0.8	&	9.3 ± 0.6	&	9.4 ± 1.1	&	9.2 ± 0.7	&	9.5 ± 0.8	&	11.1 ± 1.3	\\
\multicolumn{1}{c}{}	&	20\_vs\_23	&	9.2 ± 1.1	&	8.9 ± 0.5	&	8.2 ± 0.6	&	9.3 ± 1.0	&	9.5 ± 0.3	&	9.5 ± 0.1	\\
\bottomrule															
\end{tabular}%
}															
\end{table}

\begin{table}[!hb]															
\caption{Ablation Study 5: Final average win rates on SMACv1 \& SMACv2 tasks for MisoDICE and baselines, using ground truth data labels for expert/suboptimal trajectory separation.}															
\label{appendix:ablation5:table2}
\small															
\centering															
\resizebox{0.85\textwidth}{!}{															
\begin{tabular}{c@{4pt}c|ccc|c|c|c}															
\toprule															
\multicolumn{2}{c|}{\multirow{2}{*}{\textbf{}}}	&	\multicolumn{3}{c|}{\textbf{BC}}	&	\multirow{2}{*}{\textbf{INDD}}	&	\multirow{2}{*}{\textbf{VDN}}	&	\textbf{MisoDICE}	\\						
\multicolumn{2}{c|}{}	&	($\beta=0.0$)	&	($\beta=0.5$)	&	($\beta=1.0$)	&		&		&	(ours)	\\ \midrule		
\multicolumn{2}{c|}{2c\_vs\_64zg}	&			1.6 ± 1.6	&	0.0 ± 0.0	&	5.5 ± 2.6	&	9.4 ± 2.2	&	10.2 ± 7.1	&	11.7 ± 8.1	\\
\multicolumn{2}{c|}{5m\_vs\_6m}	&			0.8 ± 1.4	&	1.6 ± 1.6	&	0.8 ± 1.4	&	0.8 ± 1.4	&	1.6 ± 1.6	&	1.6 ± 1.6	\\
\multicolumn{2}{c|}{6h\_vs\_8z}	&			0.0 ± 0.0	&	0.8 ± 1.4	&	0.0 ± 0.0	&	0.0 ± 0.0	&	1.6 ± 1.6	&	2.3 ± 2.6	\\
\multicolumn{2}{c|}{corridor}	&			0.8 ± 1.4	&	0.8 ± 1.4	&	0.8 ± 1.4	&	0.8 ± 1.4	&	1.6 ± 2.7	&	2.3 ± 2.6	\\ \midrule
\multicolumn{1}{c}{\multirow{5}{*}{\begin{tabular}[c]{@{}c@{}} \rotatebox{90}{Protoss} \end{tabular}}}	&	5\_vs\_5	&	13.3 ± 3.4	&	17.2 ± 8.4	&	20.3 ± 4.7	&	17.2 ± 3.5	&	20.3 ± 6.8	&	21.1 ± 11.6	\\
\multicolumn{1}{c}{}	&	10\_vs\_10	&	10.2 ± 3.4	&	10.2 ± 3.4	&	8.6 ± 6.0	&	6.2 ± 2.2	&	10.2 ± 2.6	&	15.6 ± 4.4	\\
\multicolumn{1}{c}{}	&	10\_vs\_11	&	1.6 ± 1.6	&	3.9 ± 1.4	&	3.1 ± 3.1	&	4.7 ± 3.5	&	4.7 ± 1.6	&	6.2 ± 2.2	\\
\multicolumn{1}{c}{}	&	20\_vs\_20	&	8.6 ± 6.4	&	2.3 ± 2.6	&	9.4 ± 2.2	&	7.0 ± 7.8	&	9.4 ± 2.2	&	10.9 ± 5.2	\\
\multicolumn{1}{c}{}	&	20\_vs\_23	&	2.3 ± 2.6	&	1.6 ± 1.6	&	1.6 ± 1.6	&	2.3 ± 1.4	&	3.1 ± 3.8	&	3.9 ± 5.1	\\ \midrule
\multicolumn{1}{c}{\multirow{5}{*}{\begin{tabular}[c]{@{}c@{}} \rotatebox{90}{Terran} \end{tabular}}}	&	5\_vs\_5	&	10.9 ± 3.5	&	7.0 ± 4.1	&	9.4 ± 3.8	&	11.7 ± 4.1	&	16.4 ± 7.5	&	17.2 ± 8.4	\\
\multicolumn{1}{c}{}	&	10\_vs\_10	&	8.6 ± 2.6	&	7.0 ± 4.1	&	7.8 ± 3.5	&	5.5 ± 2.6	&	9.4 ± 3.8	&	10.9 ± 3.5	\\
\multicolumn{1}{c}{}	&	10\_vs\_11	&	1.6 ± 1.6	&	1.6 ± 2.7	&	2.3 ± 1.4	&	1.6 ± 2.7	&	4.7 ± 3.5	&	5.5 ± 2.6	\\
\multicolumn{1}{c}{}	&	20\_vs\_20	&	7.0 ± 2.6	&	6.2 ± 2.2	&	4.7 ± 3.5	&	4.7 ± 1.6	&	8.6 ± 5.1	&	9.4 ± 2.2	\\
\multicolumn{1}{c}{}	&	20\_vs\_23	&	0.8 ± 1.4	&	0.0 ± 0.0	&	1.6 ± 2.7	&	0.8 ± 1.4	&	1.6 ± 1.6	&	2.3 ± 1.4	\\ \midrule
\multicolumn{1}{c}{\multirow{5}{*}{\begin{tabular}[c]{@{}c@{}} \rotatebox{90}{Zerg} \end{tabular}}}	&	5\_vs\_5	&	6.2 ± 2.2	&	5.5 ± 1.4	&	3.9 ± 2.6	&	6.2 ± 3.8	&	7.0 ± 4.6	&	7.8 ± 4.7	\\
\multicolumn{1}{c}{}	&	10\_vs\_10	&	7.8 ± 2.7	&	5.5 ± 3.4	&	5.5 ± 4.6	&	3.1 ± 2.2	&	8.6 ± 6.8	&	8.6 ± 8.1	\\
\multicolumn{1}{c}{}	&	10\_vs\_11	&	3.9 ± 1.4	&	4.7 ± 3.5	&	0.8 ± 1.4	&	3.9 ± 2.6	&	8.6 ± 3.4	&	8.6 ± 4.6	\\
\multicolumn{1}{c}{}	&	20\_vs\_20	&	1.6 ± 2.7	&	1.6 ± 1.6	&	0.8 ± 1.4	&	0.0 ± 0.0	&	3.9 ± 2.6	&	2.3 ± 2.6	\\
\multicolumn{1}{c}{}	&	20\_vs\_23	&	0.8 ± 1.4	&	1.6 ± 1.6	&	1.6 ± 1.6	&	0.0 ± 0.0	&	1.6 ± 2.7	&	1.6 ± 2.7	\\
\bottomrule															
\end{tabular}%
}															
\end{table}

\clearpage
\newpage

\section*{NeurIPS Paper Checklist}

The checklist is designed to encourage best practices for responsible machine learning research, addressing issues of reproducibility, transparency, research ethics, and societal impact. Do not remove the checklist: {\bf The papers not including the checklist will be desk rejected.} The checklist should follow the references and follow the (optional) supplemental material.  The checklist does NOT count towards the page
limit. 

Please read the checklist guidelines carefully for information on how to answer these questions. For each question in the checklist:
\begin{itemize}
    \item You should answer \answerYes{}, \answerNo{}, or \answerNA{}.
    \item \answerNA{} means either that the question is Not Applicable for that particular paper or the relevant information is Not Available.
    \item Please provide a short (1–2 sentence) justification right after your answer (even for NA). 
\end{itemize}

{\bf The checklist answers are an integral part of your paper submission.} They are visible to the reviewers, area chairs, senior area chairs, and ethics reviewers. You will be asked to also include it (after eventual revisions) with the final version of your paper, and its final version will be published with the paper.

The reviewers of your paper will be asked to use the checklist as one of the factors in their evaluation. While "\answerYes{}" is generally preferable to "\answerNo{}", it is perfectly acceptable to answer "\answerNo{}" provided a proper justification is given (e.g., "error bars are not reported because it would be too computationally expensive" or "we were unable to find the license for the dataset we used"). In general, answering "\answerNo{}" or "\answerNA{}" is not grounds for rejection. While the questions are phrased in a binary way, we acknowledge that the true answer is often more nuanced, so please just use your best judgment and write a justification to elaborate. All supporting evidence can appear either in the main paper or the supplemental material, provided in appendix. If you answer \answerYes{} to a question, in the justification please point to the section(s) where related material for the question can be found.

IMPORTANT, please:
\begin{itemize}
    \item {\bf Delete this instruction block, but keep the section heading ``NeurIPS Paper Checklist"},
    \item  {\bf Keep the checklist subsection headings, questions/answers and guidelines below.}
    \item {\bf Do not modify the questions and only use the provided macros for your answers}.
\end{itemize}


\begin{enumerate}

\item {\bf Claims}
    \item[] Question: Do the main claims made in the abstract and introduction accurately reflect the paper's contributions and scope?
    \item[] Answer: \answerYes{} 
    \item[] Justification: \textit{Our abstract includes our main claims reflecting our main contributions and finding.}
    \item[] Guidelines:
    \begin{itemize}
        \item The answer NA means that the abstract and introduction do not include the claims made in the paper.
        \item The abstract and/or introduction should clearly state the claims made, including the contributions made in the paper and important assumptions and limitations. A No or NA answer to this question will not be perceived well by the reviewers. 
        \item The claims made should match theoretical and experimental results, and reflect how much the results can be expected to generalize to other settings. 
        \item It is fine to include aspirational goals as motivation as long as it is clear that these goals are not attained by the paper. 
    \end{itemize}

\item {\bf Limitations}
    \item[] Question: Does the paper discuss the limitations of the work performed by the authors?
    \item[] Answer: \answerYes{} 
    \item[] Justification: \textit{We discuss limitations of the work in the conclusion section.}
    \item[] Guidelines:
    \begin{itemize}
        \item The answer NA means that the paper has no limitation while the answer No means that the paper has limitations, but those are not discussed in the paper. 
        \item The authors are encouraged to create a separate "Limitations" section in their paper.
        \item The paper should point out any strong assumptions and how robust the results are to violations of these assumptions (e.g., independence assumptions, noiseless settings, model well-specification, asymptotic approximations only holding locally). The authors should reflect on how these assumptions might be violated in practice and what the implications would be.
        \item The authors should reflect on the scope of the claims made, e.g., if the approach was only tested on a few datasets or with a few runs. In general, empirical results often depend on implicit assumptions, which should be articulated.
        \item The authors should reflect on the factors that influence the performance of the approach. For example, a facial recognition algorithm may perform poorly when image resolution is low or images are taken in low lighting. Or a speech-to-text system might not be used reliably to provide closed captions for online lectures because it fails to handle technical jargon.
        \item The authors should discuss the computational efficiency of the proposed algorithms and how they scale with dataset size.
        \item If applicable, the authors should discuss possible limitations of their approach to address problems of privacy and fairness.
        \item While the authors might fear that complete honesty about limitations might be used by reviewers as grounds for rejection, a worse outcome might be that reviewers discover limitations that aren't acknowledged in the paper. The authors should use their best judgment and recognize that individual actions in favor of transparency play an important role in developing norms that preserve the integrity of the community. Reviewers will be specifically instructed to not penalize honesty concerning limitations.
    \end{itemize}

\item {\bf Theory assumptions and proofs}
    \item[] Question: For each theoretical result, does the paper provide the full set of assumptions and a complete (and correct) proof?
    \item[] Answer: \answerYes{} 
    \item[] Justification: \textit{All the proofs of the theorems and propositions stated in the main paper are provided in the appendix with clear references.}
    \item[] Guidelines:
    \begin{itemize}
        \item The answer NA means that the paper does not include theoretical results. 
        \item All the theorems, formulas, and proofs in the paper should be numbered and cross-referenced.
        \item All assumptions should be clearly stated or referenced in the statement of any theorems.
        \item The proofs can either appear in the main paper or the supplemental material, but if they appear in the supplemental material, the authors are encouraged to provide a short proof sketch to provide intuition. 
        \item Inversely, any informal proof provided in the core of the paper should be complemented by formal proofs provided in appendix or supplemental material.
        \item Theorems and Lemmas that the proof relies upon should be properly referenced. 
    \end{itemize}

    \item {\bf Experimental result reproducibility}
    \item[] Question: Does the paper fully disclose all the information needed to reproduce the main experimental results of the paper to the extent that it affects the main claims and/or conclusions of the paper (regardless of whether the code and data are provided or not)?
    \item[] Answer: \answerYes{} 
    \item[] Justification: \textit{We provide details on the environments and hyper-parameter settings in the appendix. We also uploaded our source code for re-productivity purposes.}
    \item[] Guidelines:
    \begin{itemize}
        \item The answer NA means that the paper does not include experiments.
        \item If the paper includes experiments, a No answer to this question will not be perceived well by the reviewers: Making the paper reproducible is important, regardless of whether the code and data are provided or not.
        \item If the contribution is a dataset and/or model, the authors should describe the steps taken to make their results reproducible or verifiable. 
        \item Depending on the contribution, reproducibility can be accomplished in various ways. For example, if the contribution is a novel architecture, describing the architecture fully might suffice, or if the contribution is a specific model and empirical evaluation, it may be necessary to either make it possible for others to replicate the model with the same dataset, or provide access to the model. In general. releasing code and data is often one good way to accomplish this, but reproducibility can also be provided via detailed instructions for how to replicate the results, access to a hosted model (e.g., in the case of a large language model), releasing of a model checkpoint, or other means that are appropriate to the research performed.
        \item While NeurIPS does not require releasing code, the conference does require all submissions to provide some reasonable avenue for reproducibility, which may depend on the nature of the contribution. For example
        \begin{enumerate}
            \item If the contribution is primarily a new algorithm, the paper should make it clear how to reproduce that algorithm.
            \item If the contribution is primarily a new model architecture, the paper should describe the architecture clearly and fully.
            \item If the contribution is a new model (e.g., a large language model), then there should either be a way to access this model for reproducing the results or a way to reproduce the model (e.g., with an open-source dataset or instructions for how to construct the dataset).
            \item We recognize that reproducibility may be tricky in some cases, in which case authors are welcome to describe the particular way they provide for reproducibility. In the case of closed-source models, it may be that access to the model is limited in some way (e.g., to registered users), but it should be possible for other researchers to have some path to reproducing or verifying the results.
        \end{enumerate}
    \end{itemize}

\item {\bf Open access to data and code}
    \item[] Question: Does the paper provide open access to the data and code, with sufficient instructions to faithfully reproduce the main experimental results, as described in supplemental material?
    \item[] Answer: \answerYes{} 
    \item[] Justification: \textit{The data we used, along with our source code, has been uploaded with the main paper. We have also provided sufficient instructions for their use.}
    \item[] Guidelines:
    \begin{itemize}
        \item The answer NA means that paper does not include experiments requiring code.
        \item Please see the NeurIPS code and data submission guidelines (\url{https://nips.cc/public/guides/CodeSubmissionPolicy}) for more details.
        \item While we encourage the release of code and data, we understand that this might not be possible, so “No” is an acceptable answer. Papers cannot be rejected simply for not including code, unless this is central to the contribution (e.g., for a new open-source benchmark).
        \item The instructions should contain the exact command and environment needed to run to reproduce the results. See the NeurIPS code and data submission guidelines (\url{https://nips.cc/public/guides/CodeSubmissionPolicy}) for more details.
        \item The authors should provide instructions on data access and preparation, including how to access the raw data, preprocessed data, intermediate data, and generated data, etc.
        \item The authors should provide scripts to reproduce all experimental results for the new proposed method and baselines. If only a subset of experiments are reproducible, they should state which ones are omitted from the script and why.
        \item At submission time, to preserve anonymity, the authors should release anonymized versions (if applicable).
        \item Providing as much information as possible in supplemental material (appended to the paper) is recommended, but including URLs to data and code is permitted.
    \end{itemize}

\item {\bf Experimental setting/details}
    \item[] Question: Does the paper specify all the training and test details (e.g., data splits, hyperparameters, how they were chosen, type of optimizer, etc.) necessary to understand the results?
    \item[] Answer: \answerYes{} 
    \item[] Justification: \textit{All the details are provided in the main paper and appendix.}
    \item[] Guidelines:
    \begin{itemize}
        \item The answer NA means that the paper does not include experiments.
        \item The experimental setting should be presented in the core of the paper to a level of detail that is necessary to appreciate the results and make sense of them.
        \item The full details can be provided either with the code, in appendix, or as supplemental material.
    \end{itemize}

\item {\bf Experiment statistical significance}
    \item[] Question: Does the paper report error bars suitably and correctly defined or other appropriate information about the statistical significance of the experiments?
    \item[] Answer: \answerYes{} 
    \item[] Justification: \textit{We provides all the details in the appendix.}
    \item[] Guidelines:
    \begin{itemize}
        \item The answer NA means that the paper does not include experiments.
        \item The authors should answer "Yes" if the results are accompanied by error bars, confidence intervals, or statistical significance tests, at least for the experiments that support the main claims of the paper.
        \item The factors of variability that the error bars are capturing should be clearly stated (for example, train/test split, initialization, random drawing of some parameter, or overall run with given experimental conditions).
        \item The method for calculating the error bars should be explained (closed form formula, call to a library function, bootstrap, etc.)
        \item The assumptions made should be given (e.g., Normally distributed errors).
        \item It should be clear whether the error bar is the standard deviation or the standard error of the mean.
        \item It is OK to report 1-sigma error bars, but one should state it. The authors should preferably report a 2-sigma error bar than state that they have a 96\% CI, if the hypothesis of Normality of errors is not verified.
        \item For asymmetric distributions, the authors should be careful not to show in tables or figures symmetric error bars that would yield results that are out of range (e.g. negative error rates).
        \item If error bars are reported in tables or plots, The authors should explain in the text how they were calculated and reference the corresponding figures or tables in the text.
    \end{itemize}

\item {\bf Experiments compute resources}
    \item[] Question: For each experiment, does the paper provide sufficient information on the computer resources (type of compute workers, memory, time of execution) needed to reproduce the experiments?
    \item[] Answer: \answerYes{} 
    \item[] Justification: \textit{We provides all the details in the appendix.}
    \item[] Guidelines:
    \begin{itemize}
        \item The answer NA means that the paper does not include experiments.
        \item The paper should indicate the type of compute workers CPU or GPU, internal cluster, or cloud provider, including relevant memory and storage.
        \item The paper should provide the amount of compute required for each of the individual experimental runs as well as estimate the total compute. 
        \item The paper should disclose whether the full research project required more compute than the experiments reported in the paper (e.g., preliminary or failed experiments that didn't make it into the paper). 
    \end{itemize}
    
\item {\bf Code of ethics}
    \item[] Question: Does the research conducted in the paper conform, in every respect, with the NeurIPS Code of Ethics \url{https://neurips.cc/public/EthicsGuidelines}?
    \item[] Answer: \answerYes{} 
    \item[] Justification: \justificationTODO{}
    \item[] Guidelines:
    \begin{itemize}
        \item The answer NA means that the authors have not reviewed the NeurIPS Code of Ethics.
        \item If the authors answer No, they should explain the special circumstances that require a deviation from the Code of Ethics.
        \item The authors should make sure to preserve anonymity (e.g., if there is a special consideration due to laws or regulations in their jurisdiction).
    \end{itemize}

\item {\bf Broader impacts}
    \item[] Question: Does the paper discuss both potential positive societal impacts and negative societal impacts of the work performed?
    \item[] Answer: \answerNA{} 
    \item[] Justification: \textit{The paper develops a general algorithm for multi-agent RL, which we have tested only in simulated environments. As such, we do not foresee any direct societal impact.}
    \item[] Guidelines:
    \begin{itemize}
        \item The answer NA means that there is no societal impact of the work performed.
        \item If the authors answer NA or No, they should explain why their work has no societal impact or why the paper does not address societal impact.
        \item Examples of negative societal impacts include potential malicious or unintended uses (e.g., disinformation, generating fake profiles, surveillance), fairness considerations (e.g., deployment of technologies that could make decisions that unfairly impact specific groups), privacy considerations, and security considerations.
        \item The conference expects that many papers will be foundational research and not tied to particular applications, let alone deployments. However, if there is a direct path to any negative applications, the authors should point it out. For example, it is legitimate to point out that an improvement in the quality of generative models could be used to generate deepfakes for disinformation. On the other hand, it is not needed to point out that a generic algorithm for optimizing neural networks could enable people to train models that generate Deepfakes faster.
        \item The authors should consider possible harms that could arise when the technology is being used as intended and functioning correctly, harms that could arise when the technology is being used as intended but gives incorrect results, and harms following from (intentional or unintentional) misuse of the technology.
        \item If there are negative societal impacts, the authors could also discuss possible mitigation strategies (e.g., gated release of models, providing defenses in addition to attacks, mechanisms for monitoring misuse, mechanisms to monitor how a system learns from feedback over time, improving the efficiency and accessibility of ML).
    \end{itemize}
    
\item {\bf Safeguards}
    \item[] Question: Does the paper describe safeguards that have been put in place for responsible release of data or models that have a high risk for misuse (e.g., pretrained language models, image generators, or scraped datasets)?
    \item[] Answer: \answerNA{} 
    \item[] Justification: \justificationTODO{}
    \item[] Guidelines:
    \begin{itemize}
        \item The answer NA means that the paper poses no such risks.
        \item Released models that have a high risk for misuse or dual-use should be released with necessary safeguards to allow for controlled use of the model, for example by requiring that users adhere to usage guidelines or restrictions to access the model or implementing safety filters. 
        \item Datasets that have been scraped from the Internet could pose safety risks. The authors should describe how they avoided releasing unsafe images.
        \item We recognize that providing effective safeguards is challenging, and many papers do not require this, but we encourage authors to take this into account and make a best faith effort.
    \end{itemize}

\item {\bf Licenses for existing assets}
    \item[] Question: Are the creators or original owners of assets (e.g., code, data, models), used in the paper, properly credited and are the license and terms of use explicitly mentioned and properly respected?
    \item[] Answer: \answerYes{} 
    \item[] Justification: \textit{We have provided clear citations to the source code and data we used in the paper.}
    \item[] Guidelines:
    \begin{itemize}
        \item The answer NA means that the paper does not use existing assets.
        \item The authors should cite the original paper that produced the code package or dataset.
        \item The authors should state which version of the asset is used and, if possible, include a URL.
        \item The name of the license (e.g., CC-BY 4.0) should be included for each asset.
        \item For scraped data from a particular source (e.g., website), the copyright and terms of service of that source should be provided.
        \item If assets are released, the license, copyright information, and terms of use in the package should be provided. For popular datasets, \url{paperswithcode.com/datasets} has curated licenses for some datasets. Their licensing guide can help determine the license of a dataset.
        \item For existing datasets that are re-packaged, both the original license and the license of the derived asset (if it has changed) should be provided.
        \item If this information is not available online, the authors are encouraged to reach out to the asset's creators.
    \end{itemize}

\item {\bf New assets}
    \item[] Question: Are new assets introduced in the paper well documented and is the documentation provided alongside the assets?
    \item[] Answer: \answerYes{} 
    \item[] Justification: \textit{Our source code is submitted alongside the paper, accompanied by sufficient instructions. We will share the code publicly for re-producibility or benchmarking purposes.}
    \item[] Guidelines:
    \begin{itemize}
        \item The answer NA means that the paper does not release new assets.
        \item Researchers should communicate the details of the dataset/code/model as part of their submissions via structured templates. This includes details about training, license, limitations, etc. 
        \item The paper should discuss whether and how consent was obtained from people whose asset is used.
        \item At submission time, remember to anonymize your assets (if applicable). You can either create an anonymized URL or include an anonymized zip file.
    \end{itemize}

\item {\bf Crowdsourcing and research with human subjects}
    \item[] Question: For crowdsourcing experiments and research with human subjects, does the paper include the full text of instructions given to participants and screenshots, if applicable, as well as details about compensation (if any)? 
    \item[] Answer: \answerNA{} 
    \item[] Justification: \justificationTODO{}
    \item[] Guidelines:
    \begin{itemize}
        \item The answer NA means that the paper does not involve crowdsourcing nor research with human subjects.
        \item Including this information in the supplemental material is fine, but if the main contribution of the paper involves human subjects, then as much detail as possible should be included in the main paper. 
        \item According to the NeurIPS Code of Ethics, workers involved in data collection, curation, or other labor should be paid at least the minimum wage in the country of the data collector. 
    \end{itemize}

\item {\bf Institutional review board (IRB) approvals or equivalent for research with human subjects}
    \item[] Question: Does the paper describe potential risks incurred by study participants, whether such risks were disclosed to the subjects, and whether Institutional Review Board (IRB) approvals (or an equivalent approval/review based on the requirements of your country or institution) were obtained?
    \item[] Answer: \answerNA{} 
    \item[] Justification: \justificationTODO{}
    \item[] Guidelines:
    \begin{itemize}
        \item The answer NA means that the paper does not involve crowdsourcing nor research with human subjects.
        \item Depending on the country in which research is conducted, IRB approval (or equivalent) may be required for any human subjects research. If you obtained IRB approval, you should clearly state this in the paper. 
        \item We recognize that the procedures for this may vary significantly between institutions and locations, and we expect authors to adhere to the NeurIPS Code of Ethics and the guidelines for their institution. 
        \item For initial submissions, do not include any information that would break anonymity (if applicable), such as the institution conducting the review.
    \end{itemize}

\item {\bf Declaration of LLM usage}
    \item[] Question: Does the paper describe the usage of LLMs if it is an important, original, or non-standard component of the core methods in this research? Note that if the LLM is used only for writing, editing, or formatting purposes and does not impact the core methodology, scientific rigorousness, or originality of the research, declaration is not required.
    \item[] Answer: \answerYes{} 
    \item[] Justification: \textit{We utilize LLMs in the first stage of our framework to help identify expert-quality trajectories from unlabeled data}
    \item[] Guidelines:
    \begin{itemize}
        \item The answer NA means that the core method development in this research does not involve LLMs as any important, original, or non-standard components.
        \item Please refer to our LLM policy (\url{https://neurips.cc/Conferences/2025/LLM}) for what should or should not be described.
    \end{itemize}

\end{enumerate}

\end{document}